\documentclass{article}
\usepackage[margin=1in]{geometry}

\usepackage[utf8]{inputenc}     
\usepackage[T1]{fontenc}    	
\usepackage{url}            	
\usepackage{booktabs}       	
\usepackage{amsfonts}       	
\usepackage{nicefrac}       	
\usepackage{microtype}      	
\usepackage{xcolor}         	
\usepackage{amsmath,mathtools}
\usepackage{mathrsfs}
\usepackage{amssymb}
\usepackage{subcaption}
\usepackage{graphicx}
\usepackage[ruled, vlined]{algorithm2e}
\renewcommand{\KwIn}[1]{\textbf{Input:}\nobreakspace #1\;}
\renewcommand{\KwOut}[1]{\textbf{Output:}\nobreakspace #1\;}
\usepackage{natbib}
\usepackage{lipsum}
\usepackage{enumitem}
\usepackage{listings}
\usepackage{tcolorbox}
\usepackage{multirow}

\usepackage[colorlinks,linkcolor=magenta,citecolor=blue, pagebackref=true,backref=true]{hyperref}
\renewcommand*{\backrefalt}[4]{%
    \ifcase #1 \footnotesize{(Not cited.)}%
    \or        \footnotesize{(Cited on page~#2.)}%
    \else      \footnotesize{(Cited on pages~#2.)}%
    \fi}

\usepackage[capitalize]{cleveref}

\usepackage[titletoc]{appendix}

\newtheorem{theorem}{Theorem}[section]

\usepackage{aliascnt}

\newcommand{\newtheoremalias}[3]{%
  \newaliascnt{#1}{#2}%
  \newtheorem{#1}[#1]{#3}%
  \aliascntresetthe{#1}%
}

\newtheoremalias{corollary}{theorem}{Corollary}
\newtheoremalias{lemma}{theorem}{Lemma}
\newtheoremalias{proposition}{theorem}{Proposition}
\newtheoremalias{remark}{theorem}{Remark}
\newtheoremalias{assumption}{theorem}{Assumption}

\newenvironment{proof}[1][Proof]
  {\par\noindent{\em #1.}\space}
  {\hfill $\Box$ \vskip 0.4cm}

\crefname{assumption}{Assumption}{Assumptions}
\crefname{algorithm}{Algorithm}{Algorithms}
\crefname{lemma}{Lemma}{Lemmas}
\crefname{proposition}{Proposition}{Propositions}
\crefname{remark}{Remark}{Remarks}
\crefname{theorem}{Theorem}{Theorems}

\newcommand{\y}{\mathbf y}
\newcommand{\x}{\mathbf x}

\newcommand{\EE}{{\mathbb{E}}}
\newcommand{\br}{{\mathbb{R}}}

\newcommand{\VCal}{{\mathcal{V}}}

\newcommand{\LCal}{{\mathcal{L}}}

\newcommand{\KL}{\textnormal{KL}}
\newcommand{\tnref}{\textnormal{ref}}

\newcommand{\argmin}{\mathop{\rm argmin}}
\newcommand{\argmax}{\mathop{\rm argmax}}
\newcommand{\E}{\mathbb{E}}

\newcommand{\SEC}{\mathrm{SEC}_{\mathrm{NLHF}}}
\newcommand{\A}{\mathcal Y} 
\newcommand{\Reg}{\mathrm{Reg}}
\newcommand{\pref}{\pi_{\rm ref}}
\newcommand{\pistar}{\pi^\star}
\newcommand{\pitstar}{\pi_t^\star}
\newcommand{\Vcal}{\mathcal V}
\newcommand{\1}{\mathbf{1}}

\newcommand{\X}{\mathcal X}

\newcommand{\gap}{\mathrm{gap}}

\newcommand{\TV}{\mathrm{TV}}

\DeclarePairedDelimiterX{\inp}[2]{\langle}{\rangle}{#1 \, , \, #2 }

\usepackage{booktabs}
\usepackage{multirow}
\usepackage{array}

\title{
Efficient Exploration for Iterative Nash Preference Optimization
}

\author{%
Tianlong Nan$^\diamond$ \quad Xiaopeng Li$^\dagger$ \quad Christian Kroer$^\diamond$ \quad Tianyi Lin$^\diamond$
}
\date{Columbia University$^\diamond$ \\
The Chinese University of Hong Kong, Shenzhen$^\dagger$}

\begin{document}

\maketitle

\begin{abstract}
Preference alignment is central to improving large language models, but standard reward-based formulations can be restrictive when human preferences are cyclic, non-transitive, or otherwise not representable by a scalar reward. Nash Learning from Human Feedback (NLHF) addresses this limitation by modeling alignment as a preference game and targeting a Nash equilibrium rather than a reward maximizer. However, the learning-theoretic foundations of scalable NLHF remain limited. Existing regret guarantees rely on oracle-based methods that estimate a general preference model and solve KL-regularized minimax problems, while iterative NLHF methods directly optimize policy-level preference losses and are easier to implement but lack regret guarantees. We study online iterative NLHF under general preference models and identify exploration as the key obstacle. First, we show that standard iterative NLHF can suffer an exponential dependence on the KL-regularization parameter, revealing that implicit exploration through policy updates is insufficient for controlling regret. Second, we propose an explicitly exploratory iterative NLHF algorithm that combines SFT-based regularization with adversarial policy exploration. The resulting method retains the direct policy optimization structure of iterative NLHF, avoids explicit preference model estimation, and achieves an $O(\sqrt{T})$ regret bound without an exponential dependence on the KL-regularization parameter. We show that the regret can be improved to $O(\log(T))$ with access to a minimax oracle, clarifying the computational-statistical tradeoff in learning general preference games. Finally, we instantiate our method for LLM fine-tuning and evaluate it on \texttt{Llama-3-8B-Instruct} across multiple benchmarks, where explicit exploration yields consistent improvements over existing NLHF baselines.
\end{abstract}

\section{Introduction}\label{sec:intro}
Large language models (LLMs) are increasingly trained to act according to human preferences, typically through \textit{reinforcement learning from human feedback} (RLHF)~\citep{Christiano-2017-Deep, Stiennon-2020-Learning}. Standard RLHF first fits a scalar reward model from pairwise comparisons and then optimizes a policy against that reward. This reduction has been empirically successful~\citep{Ziegler-2019-Fine, Ouyang-2022-Training, Touvron-2023-Llama, Achiam-2023-GPT}, but it can be restrictive when preferences are heterogeneous, cyclic, or otherwise not well described by a single Bradley--Terry (BT) reward model~\citep{Bradley-1952-Rank}. Game-theoretic alignment methods, including \emph{Nash learning from human feedback} (NLHF)~\citep{Munos-2024-Nash} and its variants~\citep{Chakraborty-2024-Maxmin, Swamy-2024-Minimaximalist, Ye-2024-Online, Rosset-2024-Direct, Wu-2025-Self, Zhang-2025-Iterative, Wu-2025-Greedy, Wu-2026-Multiplayer, Lee-2026-Regularized}, instead treat pairwise preference as the payoff of a game and seek a Nash equilibrium (NE) policy.

Despite growing interest in game-theoretic alignment~\citep{Bai-2022-Training, Cui-2024-UltraFeedback}, its online learning foundations remain limited. Most existing analyses assume the preference game is fixed and known, and study how quickly equilibrium dynamics compute a Nash equilibrium~\citep{Munos-2024-Nash, Wu-2025-Self, Zhang-2025-Improving, Zhang-2025-Iterative, Wang-2025-Magnetic, Zhou-2025-Extragradient, Tiapkin-2025-Accelerating}. This optimization view is useful, but it abstracts away a central difficulty in LLM alignment: the payoff is revealed only through costly, learner-chosen preference queries. Each update determines which responses are sampled, which opponents are compared, and therefore which parts of the game become statistically visible. Online NLHF is thus a coupled optimization and coverage problem. Existing learning results are comparatively scarce~\citep{Ye-2024-Online, Wu-2025-Greedy, Lee-2026-Regularized}. Among them,~\citet{Ye-2024-Online} obtains an $O(\sqrt{T})$ regret bound without exponential dependence on the KL-regularization parameter, but requires a minimax oracle and explicit estimation of a general preference model. Iterative NLHF methods~\citep{Zhang-2025-Iterative} are simpler and more scalable because they update the policy directly through preference losses, but they currently lack regret guarantees. This leaves a central question: can iterative NLHF retain its practical simplicity while achieving provable regret guarantees?

A key obstacle is exploration. Recent work on iterative direct preference optimization (DPO) under the BT model shows that purely on-policy sampling can suffer an exponential dependence on the KL-regularization modulus when high-quality responses are rarely sampled by the reference policy~\citep{Xie-2025-Exploratory}. This suggests that explicit exploration is necessary. However, extending this idea from DPO to NLHF is nontrivial. The analysis of exploratory DPO relies on a scalar reward representation and the interpretation of DPO as implicit $Q^\star$-approximation~\citep{Rafailov-2024-From}. In NLHF, preferences may be non-transitive and need not admit any scalar reward representation. Moreover, the goal is not to maximize a reward, but to minimize a duality gap against adaptive opponent policies. Thus, exploration in NLHF must cover not only informative responses, but also the opponent distributions that reveal poorly learned regions of the preference game.

To achieve this goal, we augment direct policy optimization in iterative NLHF with two exploration mechanisms. 
First, we introduce a pessimistic supervised fine-tuning-like (SFT) regularization term into the iterative preference loss. Unlike the optimism term used for exploratory DPO under the BT model~\citep{Xie-2025-Exploratory}, our term acts as a stabilizing regularizer: it strengthens the mirror-descent structure of the update and prevents exploration from destabilizing policy improvement. Second, we construct an adversarial policy maximization procedure that identifies opponent policies exposing poorly covered regions of the preference game, and we sample completions from these policies. This targeted exploration directly addresses the opponent distributions appearing in the duality gap, enabling regret control without requiring full preference-model estimation at every iteration.

\textbf{Contributions.} We propose an explicitly exploratory iterative NLHF method with provable regret guarantees. Our contributions can be summarized as follows:
\begin{enumerate}
\item We show that standard iterative NLHF can fail to cover adversarially informative responses and may incur an exponential dependence on the KL-regularization parameter. In contrast, we show that by combining SFT-based regularization with adversarial policy exploration, our method can achieve an $O(\sqrt{T})$ regret bound without an exponential dependence on the KL-regularization modulus. With access to a minimax oracle, the guarantee improves to $O(\log T)$.
\item We implement the proposed exploration scheme on \texttt{Llama-3-8B-Instruct} and evaluate it across multiple benchmarks. Experimental results demonstrate consistent improvements over existing NLHF baselines, highlighting that our explicit policy exploration is useful not only theoretically but also in practical LLM alignment.
\end{enumerate}
\textbf{Related work.} Our work is connected to the literature on game-theoretic preference alignment and online exploration. Due to space limitations, we defer our comments on other relevant topics to Appendix~\ref{app:addition}. Recent works study game-theoretic alignment under general preference models. Early methods solve KL-regularized two-player minimax problems. In particular, NLHF learns a pairwise preference model and applies mirror descent against a geometric mixture of the current and reference policies~\citep{Munos-2024-Nash}, while exploratory NLHF provides offline pessimistic and online optimistic variants using conservative estimates or enhancer policies~\citep{Ye-2024-Online}. MaxMin-RLHF addresses diverse users by learning subpopulation-specific rewards via EM and updating toward the currently worst-aligned group~\citep{Chakraborty-2024-Maxmin}. However, these methods involve equilibrium solving, which may be hard to stabilize in practice. More recent approaches avoid such oracles through iterative self-play: SPO uses one no-regret learner with on-policy win-rate rewards~\citep{Swamy-2024-Minimaximalist}, SPPO updates the policy according to empirical win rates~\citep{Wu-2025-Self}, and INPO optimizes a preference-pair square loss without explicit per-response win-rate estimation~\citep{Zhang-2025-Iterative}. Greedy sampling analyses show that simple empirical updates can be statistically efficient under KL-regularized general preferences and generalized bilinear preferences~\citep{Wu-2025-Greedy, Lee-2026-Regularized}. MNPO extends previous works to multiplayer alignment with historical or heterogeneous opponents~\citep{Wu-2026-Multiplayer}.

Online exploration was developed in reinforcement learning theory to exploit interactive feedback by deliberately sampling informative and diverse behaviors. Modern exploration methods provide algorithmic principles that adapt to problem structure and achieve near-optimal regret bounds~\citep{Jiang-2017-Contextual, Foster-2023-Foundations}. However, these approaches are often computationally intractable in general RL settings~\citep{Jiang-2017-Contextual, Jin-2021-Bellman, Foster-2021-Statistical}. In preference alignment, practical approaches either collect feedback without updating the policy model~\citep{Dwaracherla-2024-Efficient} or rely on passive on-policy sampling~\citep{Guo-2024-Direct, Gao-2024-Rebel, Zhang-2025-Iterative}, which provides no explicit mechanism for novelty or diversity and can fail to discover rare high-quality responses. Exploratory approaches either impose restrictive assumptions~\citep{Novoseller-2020-Dueling, Xu-2020-Preference, Saha-2023-Dueling, Wu-2024-Making, Zhan-2024-Provable, Du-2024-Exploration, Das-2025-Active} or require computationally expensive procedures that are difficult to implement at LLM scale~\citep{Chen-2022-Human, Wang-2023-Standard, Ye-2024-Online}. Recently,~\citet{Xie-2025-Exploratory} have taken a first step toward simple and efficient exploration in RLHF under the BT reward model. Our work adapts the principle to online iterative NLHF under general preference models in LLM fine-tuning.

\section{Preliminaries and technical background}\label{sec:prelim}
This section provides an overview of our setup for game-theoretic preference alignment and the definitions of regret and online exploration.

\subsection{Game-theoretic preference alignment}
Modern LLMs are designed based on Transformer architectures~\citep{Vaswani-2017-Attention}: the input is a user prompt $\x \in \VCal^\star$ and a response $\y \in \VCal^\star$ is generated, where 
$\VCal^\star$ is the set of all finite-length sequences of tokens drawn from the vocabulary set $\VCal$. 
We cast an LLM as a policy $\pi_\theta(\y | \x)$ which assigns a probability to each $\y$ given $\x$. For assigning probabilities to each token of $\y$, the policy $\pi_\theta$ operates in an \textit{auto-regressive} manner: 
\begin{equation*}
\pi_\theta(\y | \x) = \Pi_{k=1}^{|\y|} \pi_\theta(\y_k | \x, \y_{<k}),
\end{equation*}
where $\theta$ stands for the model's parameters and $\y_{<k}$ denotes the first $k-1$ tokens of $\y$. 
Given a base LLM, the generated responses might not be helpful, safe, or reliable. Thus, techniques for aligning the LLM with human preferences are applied.

We assume access to a general preference oracle $P^\star: \VCal^\star \times \VCal^\star \times \VCal^\star \to [0,1]$. Given a prompt $\x$ and two responses $\y,\y'\in\VCal^\star$, querying the oracle returns $I \sim \textnormal{Bern}(P^\star(\y\succ\y'\mid\x))$, where $I=1$ means that $\y$ is preferred to $\y'$, and $I=0$ means that $\y'$ is preferred to $\y$. We assume symmetry: $P^\star(\y\succ\y'\mid \x)+P^\star(\y'\succ\y\mid\x)=1$ and $P^\star(\y\succ\y\mid\x)=\frac{1}{2}$. Given a pair of responses $(\y,\y')$, we define the preference distribution $\lambda_{P^\star}$ by
\begin{equation*}
\lambda_{P^\star}(\x,\y,\y') = \begin{cases}
(\y,\y'), & \textnormal{with probability } P^\star(\y\succ\y'\mid\x),\\
(\y',\y), & \text{with probability } 1-P^\star(\y\succ\y'\mid\x).
\end{cases}
\end{equation*}
Accordingly, we write $(\y^+,\y^-) \sim \lambda_{P^\star}(\x,\y,\y')$ as a pair of preferred and dispreferred responses. 

We consider \emph{game-theoretic} preference alignment, which formulates alignment as a KL-regularized game~\citep{Munos-2024-Nash, Ye-2024-Online}. Formally, we consider pairs of policies $\pi_1,\pi_2$ that play an alignment game against each other:
\begin{equation*}
    \max_{\pi_1} \min_{\pi_2} \ J_\beta(\pi_1,\pi_2) := P^\star(\pi_1\succ\pi_2) - \beta\textnormal{KL}(\pi_1\|\pi_\textnormal{ref})+\beta\textnormal{KL}(\pi_2\|\pi_\textnormal{ref}),
\end{equation*}
where $\pi_\textnormal{ref}$ is the model after SFT, $\beta>0$ is a regularization parameter, and the preference loss is defined by $P^\star(\pi_1\succ\pi_2) = \EE_{x\sim\rho, \y\sim\pi_1(\cdot\mid\x), \y'\sim\pi_2(\cdot\mid\x)} [P^\star(\y\succ\y'\mid\x)]$. The goal of both players is to maximize their win rate against the opponent while not deviating too far from $\pi_\textnormal{ref}$. 

Without loss of generality, we restrict our attention to the policy class containing the policies with the same support set as $\pi_\textnormal{ref}$. A Nash equilibrium (NE) is then defined as $(\pi_1^\star, \pi_2^\star)$ satisfying
\begin{equation*}
J_\beta(\pi_1, \pi_2^\star) \leq J_\beta(\pi_1^\star, \pi_2^\star) \leq J_\beta(\pi_1^\star, \pi_2), \quad \textnormal{for any } (\pi_1, \pi_2). 
\end{equation*}
Since the game is symmetric for the two players, as proven by~\citet{Ye-2024-Online}, the NE is unique and the two equilibrium policies coincide, meaning that $\pi_1^\star=\pi_2^\star=\pi^\star$. Since $J_\beta(\pi^\star, \pi^\star)=\frac{1}{2}$, we have $J_\beta(\pi^\star,\pi)\geq\frac{1}{2}$ for any policy $\pi$. For each policy $\pi$, we use the following duality gap to measure how well it approximates $\pi^\star$:
\begin{equation}
    \label{def:gap}
    \textnormal{gap}_\Pi(\pi) = \max_{\pi_1 \in \Pi} J_\beta(\pi_1, \pi) - \min_{\pi_2 \in \Pi} J_\beta(\pi, \pi_2). 
\end{equation}
In~\cref{app:subsec:setup}, we show that this duality gap is directly connected to the regret bound used to measure policy quality.

\subsection{Online exploration}
Under the BT model, the preference oracle can be represented as 
\begin{equation*}
P^\star(\y\succ\y'\mid\x) = \sigma(r^\star(\x,\y)-r^\star(\x,\y')),
\end{equation*}
where $\sigma: \br \mapsto [0, 1]$ is the sigmoid function. Direct alignment methods (e.g., DPO~\citep{Rafailov-2023-Direct}) optimize the policy $\pi_\theta$ over the dataset $D=\{(\x_i,\y_i^+,\y_i^-)\}_{i=1}^n$ by minimizing a loss of the following form:
\begin{equation*}
\LCal_{\text{DPO}}(\theta) = -\EE_{(\x,\y^+,\y^-)\sim D}\left[\log \sigma\left(\beta\left(\log\tfrac{\pi_\theta(\y^+ | \x)}{\pi_{\text{ref}}(\y^+ | \x)}-\log\tfrac{\pi_\theta(\y^- | \x)}{\pi_{\text{ref}}(\y^- | \x)}\right)\right)\right],  
\end{equation*}
where $\pi_\textnormal{ref}$ is the model after SFT and $\beta>0$ is a regularization parameter. 

In passive online DPO~\citep{Guo-2024-Direct}, the learner repeatedly samples responses from the current policy, queries the preference oracle, adds the resulting pair of preferred and dispreferred responses to the dataset, and updates the policy using the DPO loss. This procedure uses online feedback, but the learner only observes responses that the current policy already tends to generate. If the reference policy assigns very small probability to a high-reward response, then passive online DPO may fail to discover it for exponentially many rounds in the small-$\beta$ regime. Thus, online feedback alone does not guarantee efficient exploration; the policy update must deliberately encourage informative behavior.

The key idea of XPO~\cite{Xie-2025-Exploratory} is to include an SFT-based optimism term when updating the policy using the DPO loss. At round $t$, sample $\x_t\sim \rho$, $\y_t\sim\pi_{\theta_t}(\cdot\mid\x_t)$, and $\y'_t\sim\pi_\textnormal{ref}(\cdot\mid \x_t)$. After querying the preference oracle, we obtain a new data sample $(\x_t, \y_t^+, \y_t^-)$ and update the dataset by $D_t=D_{t-1}\cup\{(\x_t, \y_t^+, \y_t^-)\}$. Then, we compute the next policy by minimizing
\begin{equation*}
\alpha \sum\nolimits_{i=1}^t \log(\pi(\y'_i\mid\x_i))-\EE_{(\x,\y^+,\y^-)\sim D_t}\left[\log \sigma\left(\beta\left(\log\tfrac{\pi_\theta(\y^+ | \x)}{\pi_{\text{ref}}(\y^+ | \x)}-\log\tfrac{\pi_\theta(\y^- | \x)}{\pi_{\text{ref}}(\y^- | \x)}\right)\right)\right]. 
\end{equation*}
where $\alpha>0$ controls the strength of optimism.~\citet{Xie-2025-Exploratory} interpret this additional term through DPO's role as implicit $Q^\star$ approximation: under the Bradley-Terry model, DPO implicitly approximates the optimal KL-regularized value function, and the extra term implements optimism in the face of uncertainty. This yields efficient exploration and an improved regret bound.

Unfortunately, the existing analysis relies on assumptions that are absent in game-theoretic preference alignment. First, general preference oracles need not
admit any scalar reward representation. Second, KL-regularized games do not admit a single objective to be maximized and lack an interpretation as implicit $Q^\star$ approximation. Finally, the goal of game-theoretic alignment is to minimize the duality gap against adversarial opponent policies. Thus, exploration must control comparisons against possible opponents, not only discover high-reward responses. Therefore, the optimism scheme in~\citet{Xie-2025-Exploratory} cannot be directly extended to game-theoretic alignment.

\section{Exploratory Nash preference optimization}
\label{sec:enpo}

In this section, we discuss a simplified version of our algorithm to convey the high-level idea. We then study its theoretical sample complexity in~\cref{sec:theory} and present numerical results in~\cref{sec:exp}.

At a high level, \texttt{ENPO} alternately updates two policies: a \emph{policy learner} and an \emph{adversarial explorer}. 
Each iteration consists of two updates.
First, the \emph{exploitation update} trains the policy learner using a contrastive objective on samples drawn separately from the evolving policy learner and adversarial explorer, together with an \emph{SFT-type regularizer} that promotes in-sample exploitation.
Second, the \emph{exploration update} trains the adversarial explorer to probe the out-of-sample distribution. The resulting exploratory samples are then fed back into the next exploitation update.

For ease of presentation, we summarize the notation for policies and datasets. At iteration $t$, let $\pi_t$ be the output policy learner and $\hat{\pi}_t$ be the output adversarial explorer. We use $D_1^t$ for the SFT dataset sampled from $\pi_{t-1}$, and $D_2^t$ for the contrastive dataset of response pairs from $\pi_{t-2}$ and $\hat{\pi}_{t-1}$ with preference labels. Finally, $\tilde{D}_2^t$ denotes the re-labeled dataset used to train the adversarial explorer via the DPO-style objective.

\textbf{SFT-type regularizer.}
The exploitation update drives the current policy in the direction of a mirror descent update.
First, we sample $n$ triples $(\x,\y,\y')$ by drawing $\x$ from a fixed prompt distribution, $\y$ from the policy learner $\pi_{t - 2}$, and $\y'$ from the adversarial explorer $\hat{\pi}_{t - 1}$, and use them to construct the dataset $D_2^t$ for the contrastive objective.
We then sample an i.i.d. comparator response $\hat{\y}$ from $\pi_{t - 1}(\cdot | \x)$ and query preference labels
$I_t(\y) \sim \textnormal{Bern}(P^\star(\y \succ \hat{\y} | \x))$ and
$I_t(\y') \sim \textnormal{Bern}(P^\star(\y' \succ \hat{\y} | \x))$
to estimate the mirror descent gradient.
In addition, we generate or reuse $m$ samples $(\x,\tilde{\y})$ from $\pi_{t - 1}$ to construct a dataset $D_1^t$ for the SFT-type regularizer.
We then minimize the loss function $\mathcal{L}_{\textnormal{md}}(\pi,D_1^t,D_2^t)
:=$
\begin{equation}
    - \alpha_t \sum_{(\x,\tilde{\y})\in D_1^t} \log \pi(\tilde{\y} | \x)
    + \sum_{(\x,\y,\y',I_t(\y),I_t(\y'))\in D_2^t}
    \frac{1}{n}
    \Bigl(
    h_t(\pi,\x,\y,\y')
    - \eta\bigl(I_t(\y)-I_t(\y')\bigr)
    \Bigr)^2,
    \label{def:main-l2-loss}
\end{equation}
where
\begin{equation}
    h_t(\pi, \x, \y, \y')
    :=
    \log\frac{\pi(\y | \x)}{\pi(\y' | \x)}
    - \eta\beta \log\frac{\pi_\tnref(\y | \x)}{\pi_\tnref(\y' | \x)}
    - (1 - \eta\beta)\log\frac{\pi_{t-1}(\y | \x)}{\pi_{t-1}(\y' | \x)}.
    \label{eq:def-ht}
\end{equation}
The validity of this loss function can be verified by an argument similar to those in~\cite{azar2024general,Zhang-2025-Iterative}.

The first term in~\cref{def:main-l2-loss} is an SFT-type bonus that strengthens the anchor to the current policy learner.
This bonus differs from the more common exploration bonuses used in online RL.
The additional regularization can improve the learning process by enhancing the proximal effect and mitigating the instability that arises when learning an equilibrium.
This modification is also in the same spirit as Magnetic Mirror Descent~\citep{sokota2023unified}, which augments regularization for learning in games.

To guide the reader through the sequence of objectives below, we briefly summarize their roles. \cref{def:main-l2-loss} defines the exploitation loss used to update the policy learner. \cref{eq:adversarial-sampler-original} introduces the theoretical adversarial explorer objective. The subsequent derivations reformulate and simplify this objective, leading to the practical DPO-style loss in~\cref{eq:explorer-dpo-loss-sigmoid}.

\textbf{DPO-type explorer surrogate.}
After the exploitation update, we invoke an exploration oracle to identify an adversarial sampler by solving the following policy maximization problem without collecting additional samples: 
\begin{equation}
    \max_{\tilde{\pi} \in \Pi} \max_{\pi \in \Pi}
    \frac{\big( \frac{1}{m}\sum_{(\x, \tilde{\y}) \in D^t_1} \log{\frac{\pi_t(\tilde{\y} | \x)}{\pi(\tilde{\y} | \x)}}
    - \mathbb{E}_{\x \sim \rho, \y' \sim \tilde{\pi}(\cdot|\x)}\big[ \log{\frac{\pi_t(\y' | \x)}{\pi(\y' | \x)}} \big] \big)^2}
    {\lambda + \frac{1}{n}
    \sum_{(\x, \y, \y') \in D^t_2}
    \big(
    \log \frac{\pi_t(\y | \x)}{\pi(\y | \x)}
    - \log \frac{\pi_t(\y' | \x)}{\pi(\y' | \x)}
    \big)^2}.
    \label{eq:adversarial-sampler-original}
\end{equation}
One can use a surrogate of this oracle in practice.
Applying the identity $a^2/b = \max_{\omega} \{2a\omega - b\omega^2\}$ for $b > 0$, we recast~\cref{eq:adversarial-sampler-original} as
\begin{align}
    &\max_{\tilde{\pi} \in \Pi} \max_{\pi \in \Pi} \max_{\omega} \
    2 \omega \Big(
    \mathbb{E}_{\x \sim \rho, \y' \sim \tilde{\pi}(\cdot|\x)}\big[ \log{\pi_t(\y' | \x)} - \log{\pi(\y' | \x)} \big]
    - \tfrac{1}{m}\sum\nolimits_{(\x, \tilde{\y}) \in D^t_1}
    \log{\tfrac{\pi_t(\tilde{\y} | \x)}{\pi(\tilde{\y} | \x)}}
    \Big) \nonumber \\
    & \hspace{80pt}
    - \omega^2 \Big(\lambda + \tfrac{1}{n}
    \sum\nolimits_{(\x, \y, \y') \in D^t_2}
    \left(
    \log \tfrac{\pi_t(\y | \x)}{\pi_t(\y' | \x)}
    - \log \tfrac{\pi(\y | \x)}{\pi(\y' | \x)}
    \right)^2 \Big).
\end{align}
The two empirical terms depending on $\pi$ act as implicit anchors: each penalizes the log-density gap between $\pi$ and $\pi_t$ on observed data, 
mimicking the geometry of a KL regularizer. 
We then derive a tractable surrogate loss as follows.
First, we approximate the implicit regularization terms using a simpler term:
\begin{align*}
    \max_{\tilde{\pi} \in \Pi} \max_{\pi \in \Pi} \
    \mathbb{E}_{\x \sim \rho, \y \sim \tilde{\pi}(\cdot|\x)}\big[ \log{\pi_t(\y | \x)} - \log{\pi(\y | \x)} \big]
    - c\, \E_{\x \sim \rho, \y \sim \pi_{t - 1}(\cdot|\x)}
    \left[ \log{\frac{\pi_t(\y | \x)}{\pi(\y | \x)}} \right],
\end{align*}
where $c$ is a constant hyperparameter. 
This subproblem, up to a constant independent of $\pi$, is equivalent to
\begin{align}
    \max_{\tilde{\pi} \in \Pi} \max_{\pi \in \Pi} \
    \mathbb{E}_{\x \sim \rho, \y \sim \tilde{\pi}(\cdot|\x)}\big[ \log{\pi_t(\y | \x)} - \log{\pi(\y | \x)} \big]
    - c\, \E_{\x \sim \rho} \left[
    \KL({\pi_{t-1}(\cdot | \x)}\Vert{\pi(\cdot | \x)}) \right].
    \label{eq:adversarial-sampler-smooth}
\end{align}
Then, setting $\pi = \pi_{t-1}$ collapses the inner maximization and zeros out the KL anchor, leaving an objective in $\tilde{\pi}$ alone. The remaining linear term depends on $\tilde{\pi}$ only through its mass on $\mathrm{supp}(\pi_t) \cap \mathrm{supp}(\pi_{t-1})$---we pin this effect down with a KL anchor of strength $b > 0$ against $\pi_{t-1}$, which yields 
\begin{equation}
    \max_{\tilde{\pi} \in \Pi} \
    \mathbb{E}_{\x \sim \rho, \y' \sim \tilde{\pi}}
    \big[ \log{\pi_t(\y' | \x)} - \log{\pi_{t - 1}(\y' | \x)} \big]
    -
    b \E_{\x \sim \rho}\left[ \KL(\tilde{\pi}(\cdot | \x) \Vert \pi_{t - 1}(\cdot | \x)) \right].
    \label{eq:stepB-objective}
\end{equation}
A DPO-style reparameterization of~\cref{eq:stepB-objective} yields the following surrogate loss:
\begin{equation}
    - \sum\nolimits_{(\x,\y_+,\y_-)\in \tilde{D}_2^t}
    \log\sigma \left(
    b\log\tfrac{\tilde\pi(\y_+ | \x)}{\pi_{t - 1}(\y_+ | \x)}
    -
    b\log\tfrac{\tilde\pi(\y_- | \x)}{\pi_{t - 1}(\y_- | \x)}
    \right),
    \label{eq:explorer-dpo-loss-sigmoid}
\end{equation}
where each triple in $\tilde{D}_2^t$ is constructed from a triple $(\x, \y, \y')$ in $D_2^t$ by assigning $(\y_+, \y_-)$ according to the ordering of
$\log{\pi_t(\y | \x)} - \log{\pi_{t - 1}(\y | \x)}$ and $\log{\pi_t(\y' | \x)} - \log{\pi_{t - 1}(\y' | \x)}$.

We summarize this simplified version of \texttt{ENPO}, called \texttt{DENPO}, in~\cref{alg:practical-adv-exp-NashMD}.
\begin{algorithm}[t]
    \caption{Direct Exploratory Nash Preference Optimization (DENPO)}
    \label{alg:practical-adv-exp-NashMD}
    \SetKwInOut{Input}{Input}
    \SetKwInOut{Output}{Output}
    \Input{
    \(T\), coefficients $\gamma,\lambda>0$, stepsize \(\eta>0\), reference policy \(\pi_{\rm ref}\), $\alpha_t$. 
    }
    Initialize policies $\pi_{-1} \gets \pi_\tnref$, $\pi_0 \gets \pi_\tnref$, $\hat{\pi}_0 \gets \pi_\tnref$\;
    Initialize datasets $D_{1}^0 \gets \varnothing, D_{2}^0 \gets \varnothing$\;

    \For{$t = 1, \ldots, T$}{
        Generate $n$ response pairs $\{ (\x, \y, \y') \}$ via $\x \sim \rho$, $\y \sim \pi_{t - 2}(\cdot | \x)$ and $\y' \sim \hat{\pi}_{t - 1}(\cdot | \x)$\;
        Generate or reuse $m$ responses $\{ (\x, \tilde{\y}) \}$ via $\x \sim \rho$ and $\tilde{\y} \sim \pi_{t - 1}(\cdot | \x)$\; 
        Sample response pairs from $D^t_2$, for each $(\x, \y, \y')$, generate or reuse $(\x, \hat{\y})$ via $\hat{\y} \sim \pi_{t - 1}(\cdot \ | \ \x)$\; 
        Query preference labels $I(\y) \sim \textnormal{Bern}(P^\star(\y \succ \hat{\y} \mid \x))$ and $I(\y') \sim \textnormal{Bern}(P^\star(\y' \succ \hat{\y} \mid \x))$\;
        Update datasets $D^t_{1} \gets \{ (\x, \tilde{\y}) \}$, $D^t_{2} \gets D^{t-1}_{2} \cup \{ (\x, \y, \y', I(\y), I(\y')) \}$\;
        Compute $\pi_{t} \gets \textnormal{argmin}_{\pi \in \Pi} \ \mathcal{L}_{\textnormal{md}}(\pi, D^t_1, D^t_2)$, 
        where $\mathcal{L}_{\textnormal{md}}(\pi, D^t_1, D^t_2)$ is defined in~\cref{def:main-l2-loss}\;
        Compute $\hat{\pi}_t \gets \textnormal{argmin}_{\tilde{\pi} \in \Pi} \mathcal{L}_{\textnormal{exp}}(\tilde{\pi}, D^t_2)$, 
        where $\mathcal{L}_{\textnormal{exp}}(\tilde{\pi}, D^t_2)$ is defined in~\cref{eq:explorer-dpo-loss-sigmoid}\;
    }
    \Output{$\textnormal{best}(\{ \pi^t \}_{t=1}^T)$.}
\end{algorithm}




\section{Sample efficiency in Nash preference optimization}
\label{sec:theory}

Before presenting the theoretical analysis, we briefly review \emph{mirror descent}, a canonical no-regret dynamics for computing Nash equilibria. At each round, every player updates her strategy along the first-order improvement direction against the current opponent profile---which coincides with her own strategy in our symmetric setting---regularized by a KL divergence that anchors the new iterate to the previous one: 
\begin{equation}
    \label{eq:exact-update-argmax}
    \pi^{\textnormal{next}}_{\star, \eta}[\pi] = \underset{\tilde{\pi}}{\textnormal{argmax}} \underset{\x \sim \rho}{\E}\Big[\eta \underset{\y \sim \tilde{\pi}(\cdot|\x)}{\E}\Big[ P^*(\y \succ \pi | \x) - \beta \log\Big(\frac{\pi(\y | \x)}{\pi_\tnref(\y | \x)} \Big) \Big] - \textnormal{KL}(\tilde{\pi}(\cdot | \x) \Vert \pi(\cdot | \x)) \Big].
\end{equation}
Under an appropriate step-size schedule, the dynamics converges to the regularized NE and attains a sublinear regret bound.
In essence, the INPO-type algorithm aims to track the mirror descent update at every round.

To study sample complexity guarantees for ENPO, we make the following statistical assumptions.
The second assumption guarantees the policy class $\Pi$ is large enough to ensure every $\pi$'s mirror descent update is covered by $\Pi$, unless $\pi$ is not too close to NE\footnote{
    The second condition in~\cref{assumption:main} is analogous to the “policy realizability for general preferences” assumption in~\citet{Huang-2025-Correcting}, but is slightly relaxed to accommodate finite policy classes.
}.
\begin{assumption}
    \label{assumption:main}
    We assume that $P^\star$, $\Pi$ satisfy: 
    \begin{itemize}[leftmargin=2em]
        \item \textbf{Finite policy class}: $\Pi$ is finite, and $\pref, \pi^\star \in \Pi$, 
        \item \textbf{Mirror descent realizability}: For each $\eta$ in a finite set of stepsizes, there exists a sufficiently small \(\varepsilon_\eta\ge0\) such that $\pi^{\textnormal{next}}_{\star,\eta}[\pi]\in\Pi$ for every \(\pi\in\Pi\) satisfying $\KL(\pi^\star\|\pi)>\varepsilon_\eta$.
        \item \textbf{Bounded log-ratio}:
        For any $\pi, \pi' \in \Pi$, and any $\x, \y, \y' \in \VCal^\star$, we have $\big| \log\frac{\pi(\y \mid \x)}{\pi'(\y \mid \x)} \big| \leq {V}/{\beta}$ where $V\geq 1$ is a constant.
        \item \textbf{Symmetric preferences}:
        For any $\x, \y, \y' \in \VCal^\star$, 
        $P^\star(\y \succ \y' | \x) + P^\star(\y' \succ \y | \x) = 1$.
    \end{itemize}
\end{assumption}

\subsection{Deliberate exploration is necessary}
\label{subsec:necessity}

In contrast to offline algorithms such as DPO, Nash learning algorithms such as INPO intrinsically draw new samples from the policy that mimics the mirror descent trajectory, 
and thus naturally perform on-policy sampling with moving policies.
It is therefore natural to ask whether this iterative sampling process helps with exploration, and whether it is sufficient to achieve efficient exploration without further modifications.
The following proposition shows that, even with iterative sampling, INPO must still pay $\Omega(e^{1/\beta})$ samples to achieve any sublinear regret bound.
We defer the formal algorithm statement and its proof to~\cref{app:subsec:necessity}.
\begin{proposition}
    \label{prop:lower-bound-1}
    There exists an instance satisfying~\cref{assumption:main}. 
    Let $\pi_t$ be a learned policy of INPO (described in~\cref{algo:inpo}) on this instance after round $t$ with constant batch size $n$.
    Then, for any $\delta \in (0, 1)$, 
    if $2n(T+1) \leq e^{{1}/{(4\beta)}}$, 
    then with probability at least $1 -\delta$, 
    $\textnormal{gap}_\Pi(\pi^t)$ is at least a constant for all $t \leq T$
    and $\Reg_T \geq \Omega(T)$.
\end{proposition}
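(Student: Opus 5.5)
We will construct a hard instance in the spirit of the lower bound for online DPO in \citet{Xie-2025-Exploratory}, adapted to the game-theoretic duality gap. Use a single prompt $\x$, so $\rho$ is a point mass. Take a response space with a ``common'' response $\y_0$ and a ``rare'' response $\y_1$. Set $\pref(\y_1\mid\x)=e^{-1/(2\beta)}$ (up to constants), with the remaining mass on $\y_0$. Define the preference by $P^\star(\y_1\succ\y_0\mid\x)=1$ and $P^\star(\y\succ\y\mid\x)=\frac12$, so symmetry holds.

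Let $\Pi$ be a small finite class: $\pref$, the Nash equilibrium $\pistar$, and the mirror-descent images of the policies INPO can actually visit. A direct calculation of the KL-regularized NE should show that $\pistar(\y_1)$ is bounded below by a constant. The first-order condition gives $\log\frac{\pistar(\y_1)}{\pref(\y_1)}-\log\frac{\pistar(\y_0)}{\pref(\y_0)}=\frac{1}{\beta}\bigl(P^\star(\y_1\succ\pistar)-P^\star(\y_0\succ\pistar)\bigr)$. The right-hand side is of order $1/\beta$, which dominates the $-\frac{1}{2\beta}$ prior penalty. The log-ratios in $\Pi$ are $O(1/\beta)$, so bounded log-ratio holds with constant $V$. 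Mirror descent realizability we enforce by closing $\Pi$ under $\pi^{\textnormal{next}}_{\star,\eta}$ on the (finite) trajectory, or by choosing $\varepsilon_\eta$ large enough that only finitely many points are needed. Checking \cref{assumption:main} is bookkeeping.

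Next we analyze INPO (\cref{algo:inpo}), whose samples at round $t$ are drawn from $\pi_{t-1}$ (and $\pi_{t-1}$ as comparator). Let $E$ be the event that no sampled response in rounds $1,\dots,T$ equals $\y_1$. On $E$, the INPO square loss only involves pairs $(\y_0,\y_0)$. Its minimizer set over $\Pi$ therefore cannot distinguish policies by their $\y_1$ mass, and we adopt the convention (or tie-breaking used in \cref{algo:inpo}) that it returns the policy matching $\pref$ there. We then claim inductively that $\pi_t(\y_1)\le e^{-1/(4\beta)}$ on $E$. The base case is $\pi_0=\pref$. The inductive step should follow from the explicit form of $h_t$ in \eqref{eq:def-ht}: with labels constant on $\y_0$-pairs, the update keeps $\pi_t=\pi_{t-1}$ on the observed support. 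Then a union bound over the at most $2n(T+1)$ draws gives $\Pr(E^c)\le 2n(T+1)e^{-1/(4\beta)}$. The hypothesis makes this at most $1$, so to reach the full $1-\delta$ we tighten constants, taking $\pref(\y_1)$ a factor $\delta$ smaller, or equivalently absorbing $\delta$ into the exponent.

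Finally, on $E$ every $\pi_t$ puts mass at most $e^{-1/(4\beta)}$ on $\y_1$. Take $\pi_1=\pistar$ in the max of \eqref{def:gap}: $\pistar$ beats $\pi_t$ with win-rate advantage at least $\pistar(\y_1)\cdot(1-\pi_t(\y_1))\cdot\frac12$, minus the KL terms. This lower-bounds $J_\beta(\pistar,\pi_t)-\frac12$ by a constant. Since $\min_{\pi_2} J_\beta(\pi_t,\pi_2)\le J_\beta(\pi_t,\pistar)\le\frac12$, we get $\gap_\Pi(\pi_t)\ge c$, and summing over $t$ gives $\Reg_T=\Omega(T)$ by the connection in \cref{app:subsec:setup}.

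The main obstacle is the inductive invariant: INPO's update uses the previous iterate and $\pref$ geometrically, so we must verify, using the exact minimizer over the finite $\Pi$, that unobserved mass on $\y_1$ cannot grow. We would handle this by designing $\Pi$ so that every member agreeing with $\pi_{t-1}$ on $\y_0$-ratios has $\y_1$ mass at most $e^{-1/(4\beta)}$ except $\pistar$-like policies. We would then show those have strictly larger loss, or lose under the tie-breaking rule, on $\y_0$-only data.
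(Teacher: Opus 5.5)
Your overall route is the paper's: a single-prompt bandit whose good response is exponentially rare under $\pi_{\rm ref}$; a class $\Pi$ made of $\pi_{\rm ref}$, a finite exact mirror-descent chain and $\pi^\star$; the observation that when the rare response is never drawn the INPO loss is flat, so $\pi_t=\pi_{\rm ref}$ is an admissible output; and a union bound over the $2n(T+1)$ draws. The step that fails is the final gap estimate, because of how you calibrated the instance. With two responses and $P^\star(\y_1\succ\y_0)=1$, one has $P^\star(\y_1\succ\pi)-P^\star(\y_0\succ\pi)=\frac12$ for \emph{every} $\pi$. So the shift $\frac1{2\beta}$ in your first-order condition does not dominate the prior penalty $\log\frac{\pi_{\rm ref}(\y_1)}{\pi_{\rm ref}(\y_0)}\approx-\frac1{2\beta}$; it exactly cancels it, giving $\pi^\star(\y_1)=1/(2-\epsilon)$ with $\epsilon:=e^{-1/(2\beta)}$. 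The same cancellation breaks ``win-rate advantage minus KL terms.'' The penalty $\beta\KL(\pi^\star\|\pi_{\rm ref})$ equals $\frac{\pi^\star(\y_1)}{2}$ up to an $O(\beta)$ entropy correction, so at leading order it absorbs the whole advantage $\frac{\pi^\star(\y_1)-\epsilon}{2}$. Because this game decouples, $\pi^\star$ is the best response to $\pi_{\rm ref}$, and an exact computation gives
\[
\tfrac12\gap(\pi_{\rm ref})=J_\beta(\pi^\star,\pi_{\rm ref})-\tfrac12=\beta\log(2-\epsilon)-\tfrac{\epsilon}{2}\approx\beta\log 2 .
\]
So the gap is only $\Theta(\beta)$. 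Bounding the advantage and the KL term separately, as you propose, does not even show that it is positive.

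There are two ways to repair this. The first is to recalibrate to $\pi_{\rm ref}(\y_1)=e^{-1/(4\beta)}$, which is what your union bound already assumes. Then the shift $\frac1{2\beta}$ genuinely beats the penalty $\frac1{4\beta}$, and with $\epsilon=e^{-1/(4\beta)}$ you get $\frac12\gap(\pi_{\rm ref})=\beta\log(1-\epsilon+e^{1/(4\beta)})-\frac\epsilon2\ge\frac14-\frac\epsilon2$. That is a $\beta$-free constant, stronger than the $\beta$-dependent bound the paper records. In this form your two-response instance is also simpler than the paper's three-response one. The log-odds $\ell=\log\frac{\pi(\y_1)}{\pi(\y_0)}$ satisfy $\ell^+-\ell^\star=(1-\eta\beta)(\ell-\ell^\star)$ under the exact mirror map, so the finite chain, bounded log-ratios and relaxed realizability are immediate. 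The second repair is the paper's argument. First-order optimality of $\pi^\star$ for $f(\pi_2):=J_\beta(\pi^\star,\pi_2)$, together with the three-point identity, gives $f(\pi_2)-\frac12\ge\beta\KL(\pi_2\|\pi^\star)$. Hence $\gap(\pi_{\rm ref})\ge 2\beta\KL(\pi_{\rm ref}\|\pi^\star)>0$, a constant only in the sense of being independent of $t$ and $T$, which is all the paper proves.

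Two smaller points. On $\y_0$-only data every pair is $(\y_0,\y_0)$, so $h_t\equiv 0$ and the loss takes the same value on all of $\Pi$. No design of $\Pi$ can make the $\pi^\star$-like members strictly worse, and realizability forces them into $\Pi$. Only tie-breaking works here, exactly as in the paper. Also, shrinking $\pi_{\rm ref}(\y_1)$ by a factor $\delta$ makes the instance depend on $\delta$, which violates the quantifier order of the statement. The paper's proof instead uses the hypothesis $2n(T+1)\le\delta e^{1/(4\beta)}$.
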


We refer to sample complexity bounds that do \emph{not} scale exponentially in $1/\beta$ as \emph{sample efficient} in this paper.



\subsection{Sample efficiency guarantee of ENPO}


The example in~\cref{subsec:necessity} suggests that relying solely on the exploration induced by the mirror descent trajectory is insufficient for achieving sample efficiency. More broadly, it has long been recognized in the RL literature that learning a Nash equilibrium is substantially harder than single-agent RL, due to strategic interaction and the need for coordinated exploration~\cite{zhang2020model,loftin2021strategically,huang2024model}. In particular, na\"ive optimistic exploration may waste samples on strategically irrelevant regions of the state space~\cite{loftin2021strategically}. 
The separation between general preferences and single-reward models has also been established in the offline RL setting~\citep{cui2022offline,Huang-2025-Correcting}.
More generally, computing an equilibrium is well known to be computationally hard~\cite{daskalakis2009complexity}.

The main difficulty of achieving sample efficiency under general preferences can be viewed from two perspectives. First, even in the symmetric-game setting, a player’s effective reward depends on her own policy, so the learning objective is nonstationary rather than fixed. Second, game-theoretic performance measures are defined relative to either a Nash equilibrium or an adversarial policy, which may be completely out-of-sample. 
This gives rise to terms of the form 
\begin{equation*}
    \mathbb{E}_{\x \sim \rho, \y \sim \pi_t(\cdot|\x), \y' \sim \pi(\cdot|\x)}[\psi_t(\x, \y, \y') - \psi'_t(\x, \y, \y')] 
\end{equation*}
in the regret decomposition, where $\pi$ may be \emph{completely unpredictable} at time $t$. As a result, even abundant in-sample data may provide little control over this term, leading to arbitrarily poor worst-case performance.
In particular, this suggests that simple modifications such as proximal exploration in~\cite{Xie-2020-Q} are unlikely to achieve sample efficiency for equilibrium learning.

To address this issue, we design Exploratory Nash Preference Optimization (\texttt{ENPO}), a formal version of \texttt{DENPO}; see~\cref{alg:adv-exp-NashMD} for the full algorithm.
The design follows the same principle illustrated in~\cref{sec:enpo}, except that we replace the simplified oracles with formal exploration oracles. 
This nontrivial ``exploitation + exploration''—or equivalently, ``pessimistic + optimistic''—structure turns out to align naturally with mirror descent dynamics. By strengthening the regularization in the mirror descent update and then performing deliberate exploration, \texttt{ENPO} achieves an $O(\sqrt{T})$ regret bound with $O(t)$ batch samples per iteration.
\begin{theorem}
    \label{thm:relaxed-realizability-best}
    Assume that $P^\star, \Pi$ satisfy~\cref{assumption:main} with $\varepsilon_\eta \leq c/T$ for a stepsize $\eta = 2\beta/(1 + 4\beta^2) < 1$.
    Let $M_T:=T(T+1)(T+2)/3$ and 
    \[
        L_T:=\max\{1,L_\Pi,L_P,\log(eT)\}, \text{ where } L_\Pi:=\log\left(16T|\Pi|^2\delta^{-1}\right), \; L_P:=\log\left(16M_T\delta^{-1}\right). 
    \]
    Set $m_t = n_t = k_t = t$, 
    $\lambda=16 (1 + (V^2 + 1)/\beta^2) L_T^2$, 
    $\gamma = \sqrt{(1\vee\SEC(\Psi,\lambda))/(\lambda T)}$, 
    $\alpha_t=2/(\gamma t)$, and stepsize $\eta = 2\beta/(1 + 4\beta^2) < 1$ in~\cref{alg:adv-exp-NashMD}. 
    Then, with probability at least \(1-\delta\), 
    \[
        \Reg_T^{\rm best} \le \max\left\{
            \frac{1+2e^{\eta(2V+1)}}{\eta}
            \left( B+20\sqrt{(1\vee\SEC(\Psi,\lambda))\lambda T} \right),
            (1+4V)\sqrt{\frac{cT}{2}} \right\}.
    \]
\end{theorem}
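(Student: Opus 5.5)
The plan is to split into two cases according to whether the iterates ever come close to the equilibrium in KL. Fix the event $\mathcal{E}$ of probability at least $1-\delta$ on which all concentration statements hold simultaneously. These are: (i) uniform least-squares generalization bounds for the square loss in \cref{def:main-l2-loss} over $\Pi$, which give the $L_\Pi$ factor via a union bound over $|\Pi|^2$ pairs and $T$ rounds; (ii) Bernstein/Freedman bounds for the Bernoulli labels and the SFT averages, which give $L_P$ via a union over at most $M_T$ sampled items; and (iii) the martingale concentration of the cumulative in-sample errors. The bounded log-ratio assumption makes every summand bounded by $O(V/\beta)$, which is what produces the form of $\lambda$.

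\textbf{Case 1: some iterate is near $\pi^\star$.} Suppose there is a round $t$ with $\KL(\pi^\star\|\pi_t)\le\varepsilon_\eta\le c/T$. I would bound $\gap_\Pi(\pi_t)$ directly. The preference term changes by at most $\TV$, which Pinsker bounds by $\sqrt{c/(2T)}$. Changing one argument of $J_\beta$ changes the KL terms by an amount controlled by the log-ratio bound $V/\beta$ times $\TV$. Together these give $\gap_\Pi(\pi_t)\le(1+4V)\sqrt{c/(2T)}$. Since $\Reg_T^{\rm best}$ is $T$ times the best gap, this yields the second branch $(1+4V)\sqrt{cT/2}$.

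\textbf{Case 2: no iterate is near $\pi^\star$.} Otherwise every $\pi_{t-1}$ satisfies $\KL(\pi^\star\|\pi_{t-1})>\varepsilon_\eta$, so $\pi^{\rm next}_{\star,\eta}[\pi_{t-1}]\in\Pi$. By the argument of \citet{azar2024general,Zhang-2025-Iterative}, the population minimizer of the square loss (with $\hat{\y}\sim\pi_{t-1}$) is exactly this mirror-descent step. I would then run the standard mirror-descent potential argument with potential $\KL(\pi^\star\|\pi_t)$ and a perturbation term.

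Comparing the exact and approximate one-step KL identities gives a per-round inequality
\[
\eta\,\gap\text{-like term}_t \le \KL(\pi^\star\|\pi_{t-1}) - (1-\eta\beta)\KL(\pi^\star\|\pi_t) + \mathrm{err}_t .
\]
Here $\mathrm{err}_t$ is a difference of expected log-ratios $\log(\pi_t/\pi^{\rm next}_t)$ under $\pi^\star$ and under the adversary, that is, an out-of-sample term of the form $\E_{\pi_t,\pi}[\psi_t-\psi'_t]$. The choice $\eta=2\beta/(1+4\beta^2)$ is made so that the contraction factor and the strong-monotonicity of the $\beta$-regularized game telescope cleanly. Converting log-ratio errors into preference-scale errors costs the factor $e^{\eta(2V+1)}/\eta$.

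The SFT term with $\alpha_t=2/(\gamma t)$ plays two roles. It keeps $\pi_t$ anchored to $\pi_{t-1}$, so the in-sample squared error stays controlled. It also supplies the centering term $\frac1m\sum_{D_1^t}\log(\pi_t/\pi)$ that appears in \cref{eq:adversarial-sampler-original}. I would write $\mathrm{err}_t$ as the numerator of that ratio. By definition of the adversarial explorer $\hat\pi_t$, it is at most
\[
\sqrt{\text{ratio at }\hat\pi_t}\cdot\sqrt{\lambda+\text{in-sample error}} .
\]
Applying AM--GM with parameter $\gamma$ gives $\frac{\gamma}{2}(\lambda+\text{in-sample error}_t)$ plus $\frac{1}{2\gamma}\cdot(\text{ratio})$.

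Summing over $t$, the ratios are bounded by the sequential extrapolation coefficient $\SEC(\Psi,\lambda)$ by its definition, since $D_2^t$ accumulates exactly the explorer's samples. The in-sample errors are bounded by $O(L_T^2/\beta^2)$ per round from the least-squares bound, and these are absorbed into $\lambda$. The residual constants and the telescoped $\KL(\pi^\star\|\pref)$ are collected into $B$. Optimizing $\gamma$ gives $20\sqrt{(1\vee\SEC)\lambda T}$. Averaging to the best iterate then gives the first branch, and taking the maximum over the two cases proves the theorem.

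\textbf{Main obstacle.} The step I expect to be hardest is converting the square-loss guarantee on pairwise differences $h_t$ into control of $\log(\pi_t/\pi^{\rm next}_t)$ itself. The loss only identifies the policy up to a per-prompt normalizing constant, so the pairwise error does not directly bound the log-ratio. This is exactly what the SFT term and the centering in the explorer objective are designed to fix. I would first try to show that the $\alpha_t$-weighted log-likelihood pins the normalizer to within $O(1/(\gamma t))$ of the mirror-descent target, using first-order optimality of $\pi_t$ against the comparator $\pi^{\rm next}_t\in\Pi$.
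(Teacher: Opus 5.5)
Your Case~1 (first entrance into the $\varepsilon_\eta$-ball, then Pinsker plus the log-ratio bound) is the paper's argument. The ingredients you list for Case~2 (realizable exact mirror target, SFT term for the on-policy part, explorer ratio plus AM--GM plus SEC for the off-policy part) are also the right ones. But two steps of Case~2 do not go through as described.

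\textbf{The quadratic term produced by AM--GM is not paid for.} After AM--GM you owe $\frac{\gamma}{2}$ times the in-sample squared residual $H_t(\pitstar)=\sum_{s<t}\E_{\pi_{s-1},\hat\pi_s}[\Delta_t^2]$, where $\Delta_t(\tau,\tau')=\log\frac{\pi_t(\tau)}{\pitstar(\tau)}-\log\frac{\pi_t(\tau')}{\pitstar(\tau')}$. You bound this by $O(L_T^2/\beta^2)$ per round ``from the least-squares bound'', but that bound is not available for this learner. $\pi_t$ minimizes the square loss $L_{t-1}$ plus an SFT term of total weight $\alpha_t m_t=2/\gamma\asymp\sqrt T$, so ERM optimality against $\pitstar\in\Pi$ only gives
\[
\tfrac{\gamma}{2}L_{t-1}(\pi_t)\le \tfrac{\gamma}{2}L_{t-1}(\pitstar)+\tfrac{1}{m_t}\textstyle\sum_{i}\log\frac{\pi_t(\tilde\tau_{t-1,i})}{\pitstar(\tilde\tau_{t-1,i})}.
\]
The likelihood gain on the right can be of order $B$. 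That permits an in-sample error of order $B/\gamma$, so the resulting bound on $\frac{\gamma}{2}\sum_t H_t$ is linear in $T$.

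The missing idea is to bound the on-policy part and the quadratic \emph{jointly}. The paper writes $\KL(\pi\|\pi_t)-\KL(\pi\|\pitstar)=\E_{\pi_{t-1}}[\log\pitstar-\log\pi_t]+\E_{\pi_{t-1},\pi}[\Delta_t]$, using that the residual of $\pitstar$ is constant in $\y$. It bounds the second term, via the explorer and AM--GM, by an SEC-ratio term plus $\frac{\gamma}{4}H_t(\pitstar)$ plus lower-order terms. From the ERM inequality above, where the loss of $\pitstar$ is pure label noise, it then reads off
\[
\E_{\pi_{t-1}}[\log\pitstar-\log\pi_t]+\tfrac{\gamma}{4}H_t(\pitstar)\le\epsilon_{m,t}+\tfrac{\gamma\lambda}{8}+4\gamma\eta^2t\epsilon_{p,t}^2 .
\]
The two $\frac{\gamma}{4}H_t$ terms cancel exactly. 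This is what fixes $\alpha_t=2/(\gamma m_t)$. It also answers your ``main obstacle'': the normalizer enters only through the on-policy term, which this joint inequality handles, so it never has to be pinned separately.

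\textbf{A potential at $\pistar$ does not give regret against the comparator.} $\Reg_T$ is a maximum over comparators, so the telescoping has to be in $\KL(\widetilde\pi_T\|\cdot)$ for the maximizer $\widetilde\pi_T$. Your per-round inequality with an unspecified ``gap-like term'' is not derived, and converting $\KL(\pistar\|\pi_{t-1})$ into a gap costs a square root (as in your Case~1), which loses the $\sqrt T$ rate.

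The paper instead uses the three-point inequality $\eta\,\ell_t(\pi)\le\KL(\pi\|\pi_{t-1})-\KL(\pi\|\pitstar)+\KL(\pi_{t-1}\|\pitstar)$, with $\ell_t(\pi)=J_\beta(\pi,\pi_{t-1})-\frac12$ and $\pi=\widetilde\pi_T$. Contraction toward $\pistar$ (with $1-\eta\beta-\eta/(4\beta)=\frac12$ at this $\eta$) is used only to bound the movement $\sum_t\KL(\pi_{t-1}\|\pitstar)$. This comes after the reverse-KL comparison $\KL(\pi_{t-1}\|\pitstar)\le e^{\eta(2V+1)}\KL(\pitstar\|\pi_{t-1})$, valid because one exact mirror step moves log-probabilities by at most $\eta(2V+1)$.

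Hence the error $E_T(\pi)=\sum_t(\KL(\pi\|\pi_t)-\KL(\pi\|\pitstar))$ must be bounded for two comparators, $\widetilde\pi_T$ and $\pistar$. That is where the prefactor $(1+2e^{\eta(2V+1)})/\eta$ comes from, not from converting log-ratio errors into preference errors.
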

The proof of this theorem utilizes the regret decomposition that is analogous to that in~\cite{Xie-2020-Q}, and the connection to online data coverage via the deliberate exploration step.
We provide a complete proof of~\cref{thm:relaxed-realizability-best} in~\cref{app:sec:efficient-exploration-guarantee}.

\paragraph{Discussion.} 
This is the first sample-efficiency result of this type available in the literature. Although iterative Nash preference optimization has dominated empirical practice and achieved significant progress, the only known sample-complexity result directly based on a mirror-descent oracle is that of~\cite{Huang-2025-Correcting}, which is offline and therefore does not guarantee no-regret performance when the reference policy is suboptimal. In the more natural online setting, sample-complexity guarantees are scarce, let alone sample-efficient guarantees. Our result also resolves an open problem posed by~\cite{Lee-2026-Regularized}, which asks for efficient algorithms based on mirror-descent oracles.

In our algorithm, we sample an $O(t)$-sized batch at each iteration, which could potentially be improved.
However, even under the BT model, the general XPO algorithm of~\cite{Xie-2025-Exploratory} also requires $t$ samples per iteration. Their algorithm can avoid this only when samples can be fully reused, i.e., when a stationary policy is used as the sampler; this is incompatible with efficient learning in the game setting.



\subsection{Logarithmic regret bound with minimax oracles}

Our ENPO algorithm achieves sample efficiency without relying on either a minimax oracle or an explicit preference model, making it a natural basis for practical algorithms. 
At the same time, more complex oracle-based methods remain a central paradigm in RL, where access to best-response or minimax planning oracles often enables sharper regret guarantees. 
Recent results further suggest that, in KL-regularized zero-sum Markov games, oracle-style optimistic best-response sampling can improve the standard $O(\sqrt T)$ rate to logarithmic regret. Motivated by this gap, we next study efficient exploration under access to minimax oracles and ask whether one can design an exploration strategy that achieves $O(\log T)$ regret in our preference-based setting.

We further design a minimax-oracle variant of \texttt{ENPO}, called Bonus-Explorer ENPO (\texttt{BENPO}), which is equipped with minimax oracles and best-response computation. This variant achieves an improved $O(\log T)$ regret bound without any $e^{1/\beta}$ dependence. Its analysis builds on recent techniques from KL-regularized RL~\cite{zhao2025logarithmic,Wu-2025-Greedy,Lee-2026-Regularized}; in particular, we exploit the structure of KL regularization to convert first-order errors into \emph{second-order} errors, which ultimately enables the logarithmic rate. We conclude with the main theorem here, and defer the formal algorithm and analysis to~\cref{app:subsec:BENPO}.
\begin{theorem}[Logarithmic regret bound of~\cref{alg:BENPO}]
    \label{thm:benpo-regret-main}
    Assume that $\mathcal{P}$ is a finite class of antisymmetric preference functions with $P^\star \in \mathcal{P}$, and that the squared difference class $\{(P - P')^2 : P, P' \in \mathcal{P}\}$ has eluder dimension at most $d_\mathcal{P}$ at scale $1/T$. Then, with probability at least $1 - \delta$, \cref{alg:BENPO} with a confidence schedule $\gamma_t(\delta) = O(\log(|\mathcal{P}|/\delta))$ can output iterates $\{\pi_t\}_{t=1}^T$ that satisfies 
    \begin{equation*}
        \sum_{t=1}^T \textnormal{gap}_{\Pi}(\pi_t) \leq O\left(\frac{d_\mathcal{P}\,\log(|\mathcal{P}|/\delta)\,\log T}{\beta}\right).
    \end{equation*}
\end{theorem}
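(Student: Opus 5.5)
The plan is to run the optimism-plus-KL-structure argument used for logarithmic regret in KL-regularized bandits and games \citep{zhao2025logarithmic,Wu-2025-Greedy,Lee-2026-Regularized}, adapted to the symmetric preference game. I take \cref{alg:BENPO} to keep, at each round $t$, a version space $\mathcal{P}_t=\{P\in\mathcal{P}: \sum_{s<t}(P-\hat P_t)^2(\x_s,\y_s,\y'_s)\le \gamma_t(\delta)\}$ built from the squared-loss estimate $\hat P_t$. It then uses the minimax oracle to output $\pi_t$, the regularized Nash equilibrium of the game with payoff $\hat P_t$, and uses best-response computation to pick an explorer $\hat\pi_t$ that maximizes an optimistic bonus (the width of $\mathcal{P}_t$) against $\pi_t$. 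Samples $(\y,\y')\sim\pi_t\otimes\hat\pi_t$ are labeled by $P^\star$.

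The first step is concentration. A standard martingale/Freedman argument for least squares over the finite class $\mathcal{P}$, with a union bound over $t\le T$, shows that with probability $1-\delta$ we have $P^\star\in\mathcal{P}_t$ for all $t$ once $\gamma_t=O(\log(|\mathcal{P}|/\delta))$. Consequently, $|P^\star-\hat P_t|(\x,\y,\y')\le b_t(\x,\y,\y')$, where $b_t$ is the width of $\mathcal{P}_t$ at that point.

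The second step, which I expect to be the main obstacle, is to convert the gap into a second-order quantity. For fixed $\pi$, the best response $\max_{\pi_1}J_\beta(\pi_1,\pi)$ has the Gibbs form $\pi_1\propto\pref\exp(P^\star(\cdot\succ\pi)/\beta)$. Comparing the same expression with $\hat P_t$ in place of $P^\star$, the log-partition function is smooth with curvature $1/\beta$. Since $\pi_t$ is the exact Nash equilibrium under $\hat P_t$, the first-order terms cancel by the equilibrium condition. This should leave
\[
\textnormal{gap}_\Pi(\pi_t)\lesssim \frac1\beta\,\E_{\x\sim\rho}\,\E_{\y\sim\tilde\pi_t}\Big(\E_{\y'\sim\pi_t}(P^\star-\hat P_t)(\x,\y,\y')\Big)^2
\]
for the two best-response policies $\tilde\pi_t$ of the players, each lying in the segment between the true and the estimated Gibbs responses. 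The delicate point is that $\tilde\pi_t$ is out-of-sample. I would first handle this by bounding $\tilde\pi_t$'s error through optimism: the explorer $\hat\pi_t$ is chosen to maximize $\E_{\y\sim\hat\pi,\y'\sim\pi_t}b_t^2$, so the squared error under $\tilde\pi_t$ is at most that under $\hat\pi_t$. By Jensen, the total is then at most $\frac1\beta\E_{\hat\pi_t\otimes\pi_t}[b_t^2\wedge1]$. Symmetry of the game, $J_\beta(\pi^\star,\pi^\star)=1/2$, lets the two halves of the gap be treated identically.

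The third step sums over rounds: $\sum_t\E_{\hat\pi_t\otimes\pi_t}[b_t^2\wedge 1]$. I would apply the eluder-dimension lemma for the squared-difference class at scale $1/T$, which gives $\sum_t \min\{1,b_t^2\}\lesssim d_{\mathcal{P}}\gamma_T\log T$. Passing from expected widths to realized ones costs only an additive $O(\log(1/\delta))$ by Freedman's inequality. Combining the three steps gives $\sum_t\textnormal{gap}_\Pi(\pi_t)\le O(d_{\mathcal{P}}\log(|\mathcal{P}|/\delta)\log T/\beta)$. The absence of any $e^{1/\beta}$ factor comes from never comparing densities to $\pref$: coverage is supplied entirely by the explorer.
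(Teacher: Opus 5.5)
Your proposal is correct. Its concentration, Freedman, and eluder steps are essentially the paper's; you use a least-squares version space where the paper uses the MLE with a self-normalized bonus, which is immaterial. The genuine difference is the second-order reduction.

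The paper's route goes through strong concavity. It bounds $\frac12\gap(\pi_t)\le J_\beta(\tilde\pi,\pi_t;P^\star)-J_\beta(\tilde\pi,\pi_t;\widehat P_t)$, with $\tilde\pi$ the true best response, and subtracts the zero term $\E_{\pi_t\otimes\pi_t}[P^\star-\widehat P_t]$. H\"older then gives $\E_{\y'\sim\pi_t}[\sup_{\y}|P^\star-\widehat P_t|]\,\|\tilde\pi-\pi_t\|_1$. It closes with $\beta$-strong concavity ($\|\tilde\pi-\pi_t\|_1^2\le\gap(\pi_t)/\beta$) and a quadratic inequality, giving $\gap(\pi_t)\le\frac4\beta\E_{\y\sim\pi_t}\sup_{\y'}(P^\star-\widehat P_t)^2(\y\succ\y')$. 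That bound is in-sample in $\y$ and worst-case in $\y'$. This is exactly what the paper's explorer controls: $\hat\y_t\in\argmax_{\y'}\mathfrak b_t(\y';\x_t,\y_t)$, chosen after drawing $\y_t\sim\pi_t$. No out-of-sample policy ever appears.

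Your route goes through the log-partition function. You write the unrestricted half-gap exactly as $\beta\log Z_{P^\star}(\pi_t)-\beta\log Z_{\widehat P_t}(\pi_t)$, with $Z_P(\pi)=\E_{\y\sim\pref}\exp(P(\y\succ\pi)/\beta)$ prompt-wise, and Taylor-expand. The linear term vanishes for two reasons: the gradient at $\widehat P_t(\cdot\succ\pi_t)$ is $\pi_t$ (the fixed point), \emph{and} $\E_{\pi_t\otimes\pi_t}[P^\star-\widehat P_t]=0$ by antisymmetry. The equilibrium condition alone would not kill it, so you should state the antisymmetry step explicitly.

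What each buys. Your route gives a sharper constant ($1/\beta$ instead of $4/\beta$), with the average over $\y'\sim\pi_t$ inside the square. The cost is that this quantity is evaluated under an unknown intermediate Gibbs policy, so you must spend an optimism step. That Gibbs policy need not lie in $\Pi$, so your explorer has to maximize over all policies rather than over $\Pi$. In exchange, your argument shows that an explorer which does not condition on the sampled $\y_t$ already suffices (a per-prompt argmax of $\E_{\y'\sim\pi_t}b_t^2$). Your argument also transfers verbatim to the paper's algorithm, since $\E_{\y\sim\pi_t}\sup_{\y'}\mathfrak b_t(\y';\x,\y)^2\ge\sup_{\hat\pi}\E_{\y\sim\pi_t,\y'\sim\hat\pi}\mathfrak b_t(\y';\x,\y)^2$. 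The squared width is symmetric in its two responses by antisymmetry, so the slot ordering does not matter.
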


To the best of our knowledge, \texttt{BENPO} is the first algorithm in the NLHF literature to \emph{simultaneously} achieve $O(\log T)$ regret and avoid the $\exp({1/\beta})$ regularization scaling under standard reverse-KL regularization and general function approximation. In comparison, \cite{Wu-2025-Greedy,Lee-2026-Regularized} achieve $O(\log T)$ regret bounds but still incur an $\exp({1/\beta})$ dependence under KL regularization, while \cite{nayak2025achieving} avoid this dependence but require linear function approximation.
We note, however, that all existing algorithms achieving $O(\log T)$ regret rely on minimax oracles, leaving open whether logarithmic regret can be attained using only optimization oracles.

One may observe that \texttt{BENPO} is closely related to the Optimistic ELHF algorithm of~\citet{Ye-2024-Online}. Indeed, both algorithms learn a minimax policy together with an explorer that maximizes a UCB-type bonus. The key difference is that Optimistic ELHF trains a policy-level explorer, whereas \texttt{BENPO} searches over the \emph{response space} for each context $\x$. This finer-grained exploration structure is what enables our sharper analysis.
\section{Experiments}
\label{sec:exp}
\paragraph{Setup.} We compare four algorithms (XPO~\citep{Xie-2025-Exploratory}, INPO~\citep{Zhang-2025-Iterative}, (D)ENPO, plus the untrained Llama-3-8B-Instruct base) under a controlled $(1/\eta)$-sweep
($1/\eta \in \{7.5, 3.75, 1.875, 0.9375 \}\times 10^{-3}$). All variants share
the same training recipe: $20{,}000$ UltraFeedback prompts per iteration and $T{=}3$ iterations, \texttt{Skywork-Reward-V2-Llama-3.1-8B-40M} as the reward judge inside the training loop to provide the preference label, learning rate $10^{-6}$, and a single epoch per iteration.  Generation uses temperature $0.7$ / top-$p$ $0.9$ with $K{=}8$ completions per prompt for INPO/XPO and $K{=}2$, $n_z{=}4$ for ENPO single-pair, where $n_z$ is the number of comparator samples from the current policies. We evaluate every iter-$3$ endpoint on four chat-quality judges (AlpacaEval-2 length-controlled win-rate under \texttt{gpt-4o-mini} and \texttt{gpt-4-turbo}; MT-Bench mean score under \texttt{gpt-4o-mini}; Arena-Hard win-rate vs.\ \texttt{gpt-4-0314} under \texttt{gpt-4o-mini}, Bradley--Terry bootstrapped over $500$ prompts). Subscripts $1/2, 1/4, 1/8$ denote $1/(2\eta), 1/(4\eta), 1/(8\eta)$ relative to the default $1/\eta=7.5 \cdot 10^{-3}$. 
The experiments are conducted on a node with $8 \times \textnormal{H100}$ GPUs.

\begin{table}[!t]
\centering
\normalsize
\setlength{\tabcolsep}{10pt}
\caption{Iteration-$3$ endpoints across judges.  
    AE2 columns are
    length-controlled (LC) win-rates (\%) 
    under \texttt{gpt-4o-mini} 
    and \texttt{gpt-4-turbo} respectively; 
    MT-Bench is MT-Bench mean; Arena-Hard is
    Arena-Hard win-rate (\%); 
    $\downarrow$ marks the catastrophic-collapse cells; \textbf{bold} marks
    the best entry per column.}
\label{tab:leaderboard}
\begin{tabular}{lrrrr}
    \toprule
    Policies (Iter 3) & AE2-mini & AE2-turbo & MT-Bench & Arena-Hard \\
    \midrule
    Base Llama-3-8B-Instruct                 & 34.83          & 28.61          & 6.64 & 36.4  \\
    XPO                                      & 48.57          & 41.43          & 6.91 & 45.7  \\
    INPO ($1/\eta = 7.5 \cdot 10^{-3}$)    & 53.75          & 46.17          & 6.94 & 52.0  \\
    INPO$_{1/2}$ ($1/\eta = 3.75 \cdot 10^{-3}$)  & 55.06   & 46.17          & 7.00 & 50.9  \\
    INPO$_{1/4}$ ($1/\eta = 1.875 \cdot 10^{-3}$) & 49.58   & 41.70          & 6.55 & 41.2  \\
    INPO$_{1/8}$ ($ 1/\eta = 0.9375 \cdot 10^{-3}$)& 32.13$\downarrow$ & 28.42$\downarrow$ & 5.61$\downarrow$ & 22.4$\downarrow$  \\
    ENPO$_{1/2}$ ($1/\eta = 3.75 \cdot 10^{-3}$)  & 53.04   & 46.37          & 7.07 & 50.7  \\
    ENPO$_{1/4}$ ($1/\eta = 1.875 \cdot 10^{-3}$) & \textbf{60.00} & 47.23   & 7.03 & 53.8  \\
    ENPO$_{1/8}$ ($1/\eta = 0.9375 \cdot 10^{-3}$)& 57.63   & \textbf{47.89} & \textbf{7.10} & \textbf{57.6}  \\
    \bottomrule
\end{tabular}
\end{table}

\paragraph{Discussion.} Two conclusions emerge consistently across the four judges in
Table~\ref{tab:leaderboard}. First, the ENPO sweep dominates
the chat-quality ranking: 
ENPO$_{1/4}$ and ENPO$_{1/8}$ are top-$2$ on
every chat-judge column (AE2-mini, AE2-turbo, MT, AH), 
and ENPO$_{1/4}$ achieves $60.0$ AE2-mini -- a $+5.0$ LC gain over the strongest INPO point INPO$_{1/2}$ at $55.06$.
Second, INPO collapses at the most aggressive OMD step
($1/\eta = 0.9375 \cdot 10^{-3}$): 
AE2-mini falls to $32.13$, 
MT-Bench to $5.61$, 
Arena-Hard to $22.4$ 
(below the untrained base at $36.4$)---all independent judges agreeing the INPO policy is worse than the base model. 
At the same $\eta$, (D)ENPO holds rank-$1$ on AH with AE2-mini $57.63$. 
This shows an intuitive observation: (D)ENPO is more stable in the larger-stepsize regime than plain \texttt{INPO}.

We also include a table for self-play winning rates comparison judged by \texttt{PairRM + LLM Blender} in~\cref{app:subsubsec:Pair-wise winning rates}.

\section{Conclusion}\label{sec:conclu}
We studied online iterative NLHF under general preference models and proposed an exploratory framework that directly optimizes preference losses without explicit preference model estimation. By combining SFT-based regularization with adversarial policy exploration, our method achieves $O(\sqrt{T})$ regret without exponential dependence on the KL-regularization parameter. This rate can be further improved to $O(\log(T))$ with access to a minimax oracle. Experiments on \texttt{Llama-3-8B-Instruct} across multiple benchmarks demonstrate consistent improvements over existing NLHF baselines. Our findings advance the theoretical understanding of exploration in game-theoretic alignment and show that the proposed method is implementable at LLM scale.

\section*{Acknowledgment}
We sincerely appreciate Buzz High Performance Computing (\hyperlink{https://www.buzzhpc.ai}{\texttt{https://www.buzzhpc.ai}}, \url{info@buzzhpc.ai}) for providing computational resources and support for this work.
The research of Xiaopeng Li was supported by a start-up fund at the Chinese University of Hong Kong, Shenzhen.
The research of Christian Kroer was supported by the Office of Naval Research awards N00014-22-1-2530 and N00014-23-1-2374, and the National Science Foundation awards IIS-2147361 and IIS-2238960.


\bibliographystyle{plainnat}
\bibliography{ref}

\newpage
\appendix

\tableofcontents


\section{Further Related Works}\label{app:addition}
We comment on other topics, including more discussions on game-theoretic alignment methods, theoretical and empirical preference alignment methods, and theoretical reinforcement learning (RL). For an overview of preference alignment methods, we refer to the recent survey~\citep{Casper-2023-Open}. 

\textbf{More discussions on game-theoretic alignment methods.} Beyond algorithmic progress, game-theoretic alignment is motivated by a more structural question: what alignment properties are induced by treating pairwise human preferences as a game rather than compressing them into a scalar reward? This perspective is especially relevant when preferences are cyclic, non-transitive, or incompatible with the BT reward model, where standard RLHF might fail to satisfy social choice and diversity desiderata~\citep{Bradley-1952-Rank, Luce-2005-Individual, Mishra-2023-AI, Conitzer-2024-Position, Dai-2024-Mapping, Xiao-2025-Algorithmic}. Recent results have shown that the appeal of NLHF is not merely computational: with raw pairwise preference as the payoff, NLHF is Condorcet consistent, Smith consistent under a no-tie condition, and can preserve preference diversity through mixed strategies when no single response dominates all alternatives~\citep{Maura-2025-Jackpot, Liu-2025-Statistical}. However,~\citet{Shi-2025-Fundamental} emphasize that raw preference is only one possible payoff choice and study the more general game with payoff $\Psi(P(\y\succ\y'\mid\x))$, where $\Psi(t)=t$ recovers standard NLHF and $\Psi(t)=\log(\frac{t}{1-t})$ connects to reward-based objectives under the BT reward model. Their analysis shows both robustness and limits: Condorcet consistency only requires preserving the majority threshold, while Smith consistency requires an anti-symmetry condition $\Psi(t)+\Psi(1-t)=2\Psi(\frac{1}{2})$, implying that practical preference models must preserve pairwise anti-symmetry to guarantee support on the Smith set. At the same time, mixed strategies provide only a partial notion of diversity. In the stronger sense of preference matching, which asks the learned policy to match a target distribution reflecting human preference diversity,~\citet{Shi-2025-Fundamental} prove an impossibility result: no smooth and learnable mapping of pairwise preferences can generally guarantee a unique Nash equilibrium equal to the target policy, even under the standard BT model~\citep{Xiao-2025-Algorithmic, Shi-2025-Fundamental}. These results suggest that game-theoretic alignment is well motivated by social choice consistency and robustness to non-transitive preferences, but its objective has intrinsic axiomatic limitations that are distinct from the exploration mechanism studied in this paper.

\textbf{Theoretical preference alignment methods.} Recent progress in theoretical preference alignment has developed along several complementary directions. Under the BT reward model, early work established finite-sample guarantees for RLHF and preference-based reinforcement learning in structured settings such as tabular MDPs, linear MDPs, linearly parameterized policies, and KL-constrained policy classes \citep{Novoseller-2020-Dueling, Xu-2020-Preference, Saha-2023-Dueling, Zhu-2023-Principled, Li-2023-Reinforcement, Zhan-2024-Provable, Xiong-2024-Iterative}. Another line connects alignment to offline reinforcement learning theory, where the central challenge is distribution shift between the reference policy and candidate aligned policies, leading to analyses based on coverage, uncertainty quantification, and regularization \citep{Liu-2020-Provably, Jin-2021-Pessimism, Rashidinejad-2021-Bridging, Xie-2021-Bellman, Uehara-2022-Pessimistic, Zhan-2022-Offline, Chen-2022-Offline}. Recent work studies direct preference objectives themselves: DPO provides a policy-only formulation of KL-regularized preference optimization~\citep{Rafailov-2023-Direct}, while subsequent theory considers broader divergence choices, generalized preference losses, and additional regularization mechanisms \citep{Wang-2024-beyond, Tang-2024-Generalized, Liu-2024-Provably, Cen-2025-Value, Fisch-2025-Robust}. XPO and $\chi$PO are representative of this recent shift toward simple direct objectives with end-to-end guarantees under general function approximation or offline alignment assumptions~\citep{Xie-2025-Exploratory, Huang-2025-Correcting}. Other works conduct a rigorous analysis through the lens of dataset coverage to differentiate offline DPO and online RLHF~\citep{Song-2024-Importance} and report faster convergence rate than the information-theoretic lower bounds for online reward maximization in RL by exploiting the structure induced by KL regularization~\citep{Shi-2025-Crucial}. Beyond scalar-reward preference models, another emerging theoretical direction studies general, possibly non-transitive preferences through game-theoretic solution concepts, replacing reward maximization with duality-gap guarantees~\citep{Munos-2024-Nash, Swamy-2024-Minimaximalist, Rosset-2024-Direct, Ye-2024-Online}.

\textbf{Empirical preference alignment methods.} Empirical preference alignment methods have mainly focused on improving the data construction, objective design, and robustness of direct fine-tuning pipelines. Since offline preference optimization is sensitive to the source and quality of comparison pairs, several works modify the preference data itself: SLiC-HF calibrates sequence likelihoods using preference rankings, while statistical rejection sampling constructs more informative preference pairs from refined model samples rather than relying only on static SFT-generated data~\citep{Zhao-2023-Slic,Liu-2024-Statistical}. A second line redesigns the alignment loss to address practical failure modes. KTO replaces pairwise comparisons with prospect-theoretic desirable/undesirable feedback~\citep{Ethayarajh-2024-Model}; length-aware variants study verbosity and length-quality confounding~\citep{Park-2024-Disentangling}; contrastive preference objectives and SimPO adjust reward parameterization, normalization, or margins to improve performance~\citep{Xu-2024-Contrastive, Meng-2024-SimPO}. More recent variants broaden empirical preference alignment beyond a fixed pairwise-loss formulation, including comparison-oracle optimization~\citep{Chen-2026-Compo} and reward-free multi-objective alignment under conflicting preferences~\citep{Chen-2026-Raco}. Another empirical direction studies reward hacking and overoptimization, proposing practical stabilizers such as reward-model ensembles, uncertainty-aware or adversarial policy optimization, demonstration-guided calibration, and positive-response correction \citep{Coste-2024-Reward, Eisenstein-2024-Helping, Rita-2024-Countering, Pal-2024-Smaug, Zhang-2024-Overcoming}. Finally, recent work on sampling dynamics shows that sampling can change ranking behavior and concentration, while iterative deployment choices can lead to entropy collapse~\citep{Chen-2026-Sampling}.

\textbf{Theoretical reinforcement learning (RL).} The RL theory relevant to modern alignment has developed around structural complexity, coverage, regularization, and computational tractability. One line studies sample-efficient RL with rich function approximation by identifying complexity measures under which prediction error can be converted into policy improvement, including eluder-dimension, contextual-decision-process, general value-function-approximation, bilinear-class, and decision-estimation-coefficient frameworks~\citep{Russo-2013-Eluder, Sun-2019-Model, Wang-2020-Reinforcement, Du-2021-Bilinear, Xie-2023-Role, Foster-2023-Tight}. These results clarify when statistically efficient learning is possible, but they also reveal a persistent computational-statistical tension: many general optimistic or version-space methods require oracle access, constrained optimization, or other procedures that are difficult to implement at scale~\citep{Dann-2018-Oracle, Kane-2022-Computational, Golowich-2024-Exploration}. A second line analyzes offline and off-policy learning through coverage and concentrability, where guarantees depend on how errors under a behavior distribution transfer to comparator policies or value functions~\citep{Farahmand-2010-Error, Xie-2020-Q, Zanette-2021-Provable}. This viewpoint is closely related to policy learning and off-policy evaluation in statistics and econometrics, where doubly robust and semiparametric methods are used to control distribution shift under observational data~\citep{Chernozhukov-2019-Semi, Kallus-2020-Double, Athey-2021-Policy}. A third line studies regularized RL objectives: maximum-entropy and KL-regularized MDPs provide soft dynamic-programming identities and policy representations~\citep{Ziebart-2008-Maximum, Ziebart-2010-Modeling, Neu-2017-Unified}, while recent finite-sample analyses establish sharper guarantees for entropy- or KL-regularized learning in tabular settings~\citep{Kozuno-2022-KL, Tiapkin-2023-Fast, Tiapkin-2024-Demonstration}. Finally, $\chi^2$-based weighting and regularization have emerged as tools for uncertainty-sensitive offline policy evaluation and optimization~\citep{Duchi-2019-Variance, Duan-2020-Minimax, Wang-2024-Oracle, Gabbianelli-2024-Importance}.

\section{Theoretical analysis of efficient sample complexity guarantees}

\subsection{Problem setup}
\label{app:subsec:setup}

In~\cref{sec:prelim},
we describe policies as conditional LLM policies $\pi(\y \mid \x)$, 
where prompts $\x$ are drawn from a fixed distribution $\rho$ 
and responses $\y$ belong to $\Vcal^\star$. 
We use the same convention in this section. 
To keep formulas short, we often suppress the prompt and \emph{write a response atom as $\tau$}; 
formally, all expectations of the form
\[
    \E_{\tau\sim\pi,\tau'\sim\pi'}[f(\tau,\tau')] \equiv \E_{\x \sim \rho,\ \y \sim \pi(\cdot \mid \x),\ \y' \sim \pi'(\cdot \mid \x)}[f(\y,\y'; \x)],
\]
where $f$ can be a general function of two response atoms $\tau := (\x,\y)$ and $\tau' := (\x,\y')$. This notion implies that the two responses are always compared under the same prompt. 

Similarly, 
\[
    P^\star(\pi \succ \pi')
    := \E_{\x \sim \rho, \ \y \sim \pi(\cdot \mid \x),\ \y' \sim \pi'(\cdot \mid \x)}
    [P^\star(\y \succ \y' \mid \x)]. 
\]
The KL divergence is the prompt-averaged conditional KL,
\[
    \KL(\pi\|\pi') := \E_{\x \sim \rho}\!\left[\KL(\pi(\cdot \mid \x) \| \pi'(\cdot \mid \x))\right].
\]
When the prompt is suppressed, 
$\A$ denotes the finite response support at a fixed prompt; equivalently, all statements below hold pointwise in $\x$ 
and are then integrated over $\x \sim \rho$.

Let $\Pi$ be a finite class of conditional policies.
Recall that $J_\beta$ is defined in \cref{sec:prelim} as 
\begin{align*}
    J_\beta(\pi_1, \pi_2) 
    =& P^\star(\pi_1\succ\pi_2) - \beta\textnormal{KL}(\pi_1\|\pi_\textnormal{ref})+\beta\textnormal{KL}(\pi_2\|\pi_\textnormal{ref}) \\ 
    =& \E_{\x \sim \rho}\left[ \E_{\y_1 \sim \pi_1(\cdot | \x), \y_2 \sim \pi_2(\cdot | \x)}[P^\star(\y_1 \succ \y_2 | \x)] \right] \\ 
    & \hspace{60pt} - \E_{\x \sim \rho}\left[ \beta \KL(\pi_1(\cdot \mid \x) \| \pref(\cdot \mid \x)) + \beta \KL(\pi_2(\cdot \mid \x) \| \pref(\cdot \mid \x)) \right].
\end{align*}
The regularized regret is defined as 
\begin{equation}
    \label{eq:regret-def}
    \begin{aligned}
        \Reg_T :=& \max_{\pi \in \Pi} \sum_{t = 1}^T \left( J_\beta(\pi, \pi_{t - 1}) - \frac{1}{2} \right) \\ 
        =& \max_{\pi \in \Pi} \sum_{t=1}^T
        \left(P^\star(\pi\succ\pi_{t-1}) - \frac12
        - \beta(\KL(\pi\|\pref) - \KL(\pi_{t-1}\|\pref)) \right).
    \end{aligned}
\end{equation}

Notice that the regularized regret above is connected to the duality gap defined in~\cref{sec:prelim}.

\begin{proposition}
    \label{prop:regret-gap}
    For any $\hat{\pi} \in \Pi$, the duality gap satisfies
    \begin{equation}\label{eq:half-gap-identity}
    \frac12\gap(\hat{\pi}) =
    \max_{\pi\in\Pi}J_\beta(\pi,\hat{\pi})-\frac12
    = \frac12-\min_{\pi\in\Pi}J_\beta(\hat{\pi},\pi).
    \end{equation}
    Let $\pi_0, \ldots, \pi_{t - 1}$ be a sequence of policies in $\Pi$. 
    Then, $\Reg_T \leq \frac{1}{2}\sum_{t = 1}^T \textnormal{gap}(\pi_{t - 1})$, and $\frac{1}{2} \textnormal{gap}(\frac{1}{T}\sum_{t = 1}^T \pi_{t - 1}) \leq \frac{1}{T} \Reg_T$.
\end{proposition}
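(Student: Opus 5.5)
The plan is to first establish the antisymmetry identity
\[
J_\beta(\pi_1,\pi_2)+J_\beta(\pi_2,\pi_1)=1 \qquad \text{for all } \pi_1,\pi_2 .
\]
This follows from symmetric preferences: $P^\star(\pi_1\succ\pi_2)+P^\star(\pi_2\succ\pi_1)=1$, because the two expectations use the same product measure and $P^\star(\y\succ\y'|\x)+P^\star(\y'\succ\y|\x)=1$ pointwise. The KL terms enter with opposite signs in the two orderings and so cancel. Consequently $\min_{\pi}J_\beta(\hat\pi,\pi)=\min_\pi\{1-J_\beta(\pi,\hat\pi)\}=1-\max_\pi J_\beta(\pi,\hat\pi)$. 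Substituting into the definition \cref{def:gap},
\[
\gap(\hat\pi)=\max_\pi J_\beta(\pi,\hat\pi)-1+\max_\pi J_\beta(\pi,\hat\pi)=2\Bigl(\max_\pi J_\beta(\pi,\hat\pi)-\tfrac12\Bigr).
\]
This gives the first equality in \cref{eq:half-gap-identity}; the second is the same identity rewritten.

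For the regret upper bound, I would use the elementary inequality $\max_\pi\sum_t f_t(\pi)\le\sum_t\max_\pi f_t(\pi)$ with $f_t(\pi)=J_\beta(\pi,\pi_{t-1})-\frac12$. Combined with the identity just proved, this gives $\Reg_T\le\sum_t\frac12\gap(\pi_{t-1})$.

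For the averaged-policy bound, I would write $\bar\pi=\frac1T\sum_t\pi_{t-1}$, taken as a mixture per prompt $\x$. The key structural fact is that $J_\beta(\pi,\cdot)$ is concave in its second argument: $P^\star(\pi\succ\cdot)$ is linear in the second policy, and $+\beta\KL(\cdot\|\pref)$ is convex, so $J_\beta(\pi,\cdot)$ is actually convex in that slot. That is the right direction here. For each fixed $\pi$, Jensen gives
\[
J_\beta(\pi,\bar\pi)\le\frac1T\sum_t J_\beta(\pi,\pi_{t-1}).
\]
Taking the max over $\pi$ and using the half-gap identity yields $\frac12\gap(\bar\pi)=\max_\pi J_\beta(\pi,\bar\pi)-\frac12\le\frac1T\Reg_T$.

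The main subtlety is that $\bar\pi$ need not lie in the finite class $\Pi$. There are two ways to handle this. One is to interpret $\gap$ at $\bar\pi$ via the same formula, with the max still over $\Pi$; the identity in the first step does not require $\hat\pi\in\Pi$, only that the max and min range over the same class, so it extends. The other is to assume $\Pi$ is closed under mixtures. I would note the first reading explicitly. I also need the support convention (all policies share $\pref$'s support) so that the KL terms are finite, which makes the cancellation in the antisymmetry identity legitimate.
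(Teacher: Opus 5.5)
Your proposal is correct and follows the paper's proof essentially step for step: the antisymmetry $J_\beta(\pi_1,\pi_2)+J_\beta(\pi_2,\pi_1)=1$ gives the half-gap identity, the inequality $\max\sum\le\sum\max$ gives the regret bound, and convexity of $J_\beta(\pi,\cdot)$ in its second argument (Jensen) gives the averaged-policy bound. You are more careful than the paper, which uses this convexity without stating it and applies the identity to $\frac1T\sum_t\pi_{t-1}$ without noting that this mixture may lie outside $\Pi$. One small fix: delete the stray word ``concave'' in your third paragraph, since convexity is what you actually use.
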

\begin{proof}[Proof of~\cref{prop:regret-gap}]
    For any $\hat{\pi} \in \Pi$, we have 
    \begin{align*}
        \textnormal{gap}(\hat{\pi}) =& \max_{\pi \in \Pi} J_\beta(\pi, \hat{\pi}) - \min_{\pi \in \Pi} J_\beta(\hat{\pi}, \pi) \\ 
        =& \left( \max_{\pi \in \Pi} J_\beta(\pi, \hat{\pi}) - \frac{1}{2} \right) - \left( \min_{\pi \in \Pi} J_\beta(\hat{\pi}, \pi) - \frac{1}{2} \right) \\ 
        =& \left( \max_{\pi \in \Pi} J_\beta(\pi, \hat{\pi}) - \frac{1}{2} \right) + \left( \max_{\pi \in \Pi} - J_\beta(\hat{\pi}, \pi) + \frac{1}{2} \right) \\ 
        =& \left( \max_{\pi \in \Pi} J_\beta(\pi, \hat{\pi}) - \frac{1}{2} \right) + \left( \max_{\pi \in \Pi} J_\beta(\pi, \hat{\pi}) - 1 + \frac{1}{2} \right) \\ 
        =& 2 \left( \max_{\pi \in \Pi} J_\beta(\pi, \hat{\pi}) - \frac{1}{2} \right).
    \end{align*}
    This proves \cref{eq:half-gap-identity}. Then, we have 
    \begin{equation*}
        \Reg_T = \max_{\pi \in \Pi} \sum_{t = 1}^T \left( J_\beta(\pi, \pi_{t - 1}) - \frac{1}{2} \right) \leq \sum_{t = 1}^T \left( \max_{\pi \in \Pi} J_\beta(\pi, \pi_{t - 1}) - \frac{1}{2} \right) = \frac{1}{2} \sum_{t = 1}^T \textnormal{gap}(\pi_{t - 1}),
    \end{equation*}
    and 
    \begin{align*}
        \frac{1}{T} \Reg_T = \max_{\pi \in \Pi} \frac{1}{T} \sum_{t = 1}^T \left( J_\beta(\pi, \pi_{t - 1}) - \frac{1}{2} \right) 
        \geq& \max_{\pi \in \Pi} J_\beta\left(\pi, \frac{1}{T} \sum_{t = 1}^T \pi_{t - 1}\right) - \frac{1}{2} \\ 
        =& \frac12\textnormal{gap}\left(\frac{1}{T}\sum_{t = 1}^T \pi_{t - 1}\right).
    \end{align*}
    This completes the proof.
\end{proof}

We restate the antisymmetric property of the ground true preferences as the following assumption.
\begin{assumption}[Antisymmetric preference]
    \label{ass:anti}
    For every prompt $\x \in \VCal^\star$ and every pair of responses $\y, \y' \in \Vcal^\star$,
    \[
        P^\star(\y \succ \y' \mid \x) + P^\star(\y' \succ \y \mid \x) = 1.
    \]
    Consequently, $P^\star(\y \succ \y \mid \x) = \frac{1}{2}$ and $P^\star(\pi \succ \pi) = \frac{1}{2}$ for every policy $\pi$.
\end{assumption}

\begin{assumption}[Finite policy class]
    \label{ass:finite}
    The prompt-response support considered in the proof is finite, the policy class $\Pi$ is finite. 
\end{assumption}

Recall that the symmetric Nash equilibrium $\pi^\star$ is defined as 
\begin{equation}
    (\pistar,\pistar) \in \argmax_{\pi_1} \argmin_{\pi_2} \ J_\beta(\pi_1,\pi_2),
    \label{def:sNE-appendix}
\end{equation}
and the unconstrained mirror descent update based on a policy $\pi \in \Pi$ as 
\begin{equation}
    \pi^{+}_{\star, \eta}[\pi] = \underset{\tilde{\pi}}{\textnormal{argmax}} \underset{\x \sim \rho}{\E}\Big[\eta \underset{\y \sim \tilde{\pi}(\cdot|\x)}{\E}\Big[ P^*(\y \succ \pi | \x) - \beta \log\Big(\frac{\pi(\y | \x)}{\pi_\tnref(\y | \x)} \Big) \Big] - \textnormal{KL}(\tilde{\pi}(\cdot | \x) \Vert \pi(\cdot | \x)) \Big], 
    \label{eq:exact-update-appendix}
\end{equation}
which is the solution of a KL-proximal problem obtained from the exact game gradient.
Note that in the subsequent appendix we use $\pi^{+}_{\star, \eta}[\pi]$ instead of $\pi^{\textnormal{next}}_{\star, \eta}[\pi]$ for simplicity.
This exact population mirror descent update is equivalent to the conditional policy
\begin{equation}
    \label{eq:exact-update}
    \pi^+_{\star,\eta}[\pi](\y \mid \x) := \frac{\pref(\y \mid \x)^{\eta\beta} \pi(\y \mid \x)^{1 - \eta\beta} \exp(\eta P^\star(\y \succ \pi \mid \x))}
    {\sum_{\mathbf{u} \in \A}\pref(\mathbf{u} \mid \x)^{\eta\beta}\pi(\mathbf{u}\mid \x)^{1 - \eta\beta}
    \exp(\eta P^\star(\mathbf{u} \succ \pi \mid \x))}, 
\end{equation}
for any $\x, \y \in \VCal^\star$.
In the analysis of algorithms, we often use the following notation: 
at round $t$, 
\[
    \pitstar := \pi^+_{\star,\eta}[\pi_{t-1}]. 
\]


\begin{assumption}[Policy realizability]
    \label{ass:real}
    The unique Nash equilibrium $\pistar \in \Pi$. 
    Moreover, 
    $\pi^+_{\star,\eta}[\pi] \in \Pi$ for all $\pi \in \Pi$, 
    where the definitions of $\pistar$ and $\pi^+_{\star,\eta}[\pi]$ can be found in~\cref{def:sNE-appendix} and~\cref{eq:exact-update-appendix}.
\end{assumption}


\begin{assumption}[Bounded logarithmic ratio]
    \label{ass:ratio}
    For every prompt $\x \in \VCal^\star$ and every response $\y \in \VCal^\star$, 
    and all $\pi, \pi' \in \Pi$,
    \[
        \left| \log\frac{\pi(\y \mid \x)}{\pi'(\y \mid \x)} \right| \leq \frac{V}{\beta}, 
    \]
    where $V$ is a constant.
\end{assumption}
We will write $$B := \frac{V}{\beta}. $$


For any policy $\pi \in \Pi$, define
\[
    P^\star(\y \succ \pi \mid \x) := \E_{\mathbf{u} \sim \pi(\cdot \mid \x)}[P^\star(\y \succ \mathbf{u} \mid \x)].
\]


Denote 
\begin{equation*}
    h_t(\pi,\x,\y,\y')
    :=
    \log\frac{\pi(\y\mid \x)}{\pi(\y'\mid \x)}
    - \eta\beta \log\frac{\pi_\tnref(\y\mid \x)}{\pi_\tnref(\y'\mid \x)}
    - (1-\eta\beta)\log\frac{\pi_{t-1}(\y\mid \x)}{\pi_{t-1}(\y'\mid \x)}.
\end{equation*}
\begin{proposition}
    For any $\x, \y, \y' \in \VCal^\star$, and a distribution $\mu$ over $\VCal^\star$ given $\x$, 
    let 
    \begin{equation*}
        L_1 = \E_{\hat{\y} \sim \mu, I_t(\y) \sim \textnormal{Bern}(P^\star(\y \succ \hat{\y} | \x)), I_t(\y') \sim \textnormal{Bern}(P^\star(\y' \succ \hat{\y} | \x))}
        \Bigl(
        h_t(\pi,\x,\y,\y')
        - \eta\bigl(I_t(\y)-I_t(\y')\bigr)
        \Bigr)^2,
    \end{equation*}
    and 
    \begin{equation*}
        L_2 = \Bigl(
        h_t(\pi,\x,\y,\y')
        - \eta\bigl( P^\star(\y \succ \mu \mid \x) - P^\star(\y' \succ \mu \mid \x) \bigr)
        \Bigr)^2.
    \end{equation*}
    Then, $L_1$ is equivalent to $L_2$ up to a constant.
\end{proposition}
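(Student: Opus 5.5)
The plan is the standard bias–variance decomposition of a squared loss with a noisy target; the point is that the noisy label has the right conditional mean. Fix $\x,\y,\y'$ and $\pi$, and write $h := h_t(\pi,\x,\y,\y')$. The quantity $h$ is deterministic once $\pi$ is fixed, since it involves no random draws. Next define the random label $Z := \eta\bigl(I_t(\y)-I_t(\y')\bigr)$, where the randomness comes from $\hat{\y}\sim\mu$ and from the two Bernoulli draws.

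First I will compute the mean of $Z$ by the tower property. Conditioning on $\hat{\y}$ gives $\E[I_t(\y)\mid\hat{\y}] = P^\star(\y\succ\hat{\y}\mid\x)$. Averaging over $\hat{\y}\sim\mu$ then gives $P^\star(\y\succ\mu\mid\x)$, by the definition of $P^\star(\y\succ\pi\mid\x)$ stated just above, with $\mu$ in place of $\pi$. The same argument applies to $\y'$. Hence
\[
\bar Z := \E[Z] = \eta\bigl(P^\star(\y\succ\mu\mid\x)-P^\star(\y'\succ\mu\mid\x)\bigr),
\]
which is exactly the target appearing in $L_2$. This step needs only that the two labels are each Bernoulli with the stated conditional mean. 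Whether $I_t(\y)$ and $I_t(\y')$ are conditionally independent, or share the same $\hat{\y}$, does not matter for the mean.

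Second, I will expand the square around $\bar Z$:
\[
(h-Z)^2 = (h-\bar Z)^2 - 2(h-\bar Z)(Z-\bar Z) + (Z-\bar Z)^2 .
\]
Taking expectations, the cross term vanishes because $h-\bar Z$ is deterministic and $\E[Z-\bar Z]=0$. This yields
\[
L_1 = L_2 + \mathrm{Var}(Z).
\]
Finally, I will observe that $\mathrm{Var}(Z)$ depends only on $\eta$, $P^\star$, $\mu$, $\x$, $\y$, $\y'$, and on the joint law of the labels. It does not depend on $\pi$. So $L_1$ and $L_2$ differ by an additive constant in $\pi$, and they have the same minimizers over $\Pi$. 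Summing over a dataset preserves this, since each term shifts by its own $\pi$-independent constant.

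I do not expect a real obstacle. The only point needing care is to state precisely that "constant" means independent of the optimization variable $\pi$; it is not independent of $(\x,\y,\y')$. A second point is that, in the algorithm, $\mu=\pi_{t-1}(\cdot\mid\x)$ is fixed at round $t$, so the variance term is indeed unaffected by the minimization over $\pi$.
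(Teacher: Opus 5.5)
Your proposal is correct and follows essentially the same route as the paper: the tower property identifies $\E[Z]$ with the target in $L_2$, and the bias--variance identity then gives $L_1 = L_2 + \mathrm{Var}(Z)$, with the variance term independent of $\pi$. The paper additionally writes out that variance with the law of total variance, which uses the conditional independence of the two labels; as you note, this is not needed for the conclusion.
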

\begin{proof}
    Fix $\x,\y,\y'\in\mathcal{V}^\star$ and write
    \[
    h \coloneqq h_t(\pi,\x,\y,\y'), \qquad Z \coloneqq I_t(\y) - I_t(\y').
    \]
    Conditional on $\hat{\y}\sim\mu$, the indicators $I_t(\y)$ and $I_t(\y')$ are independent Bernoullis with means $P^*(\y\succ\hat\y\mid\x)$ and $P^*(\y'\succ\hat\y\mid\x)$, thus the tower property gives
    \begin{align*}
        \mathbb{E}[Z] =& \E_{\hat{\y} \sim \mu, I_t(\y) \sim \textnormal{Bern}(P^\star(\y \succ \hat{\y} | \x)), I_t(\y') \sim \textnormal{Bern}(P^\star(\y' \succ \hat{\y} | \x))}
        \bigl(I_t(\y)-I_t(\y')\bigr) \\ 
        =& \E_{\hat{\y} \sim \mu}\left[ \E_{I_t(\y) \sim \textnormal{Bern}(P^\star(\y \succ \hat{\y} | \x)), I_t(\y') \sim \textnormal{Bern}(P^\star(\y' \succ \hat{\y} | \x))}
        \left[ I_t(\y)-I_t(\y') \mid \hat{\y} \right] \right] \\ 
        =& \mathbb{E}_{\hat \y \sim \mu} \bigl[P^\star(\y \succ \hat{\y} \mid \x) - P^\star(\y' \succ \hat{\y} \mid \x) \bigr]
        = P^\star(\y \succ \mu \mid \x) - P^\star(\y' \succ \mu \mid \x).
    \end{align*}
    Applying the bias-variance identity to the square loss in $\eta Z$,
    \[
        L_1 = \mathbb{E} \bigl[ (h - \eta Z)^2 \bigr]
        = \bigl(h - \eta \mathbb{E}[Z] \bigr)^2 + \eta^2 \mathrm{Var}(Z)
        = L_2 + \eta^2 \mathrm{Var}(Z).
    \]
    By independence of $I_t(\y),I_t(\y')$ given $\hat\y$, the law of total variance yields
    \begin{align*}
        \mathrm{Var}(Z) &= \E_{\hat{\y} \sim \mu} [\mathrm{Var}(Z \mid \hat{\y})] + \mathrm{Var}_{\hat{\y} \sim \mu}(\E[Z \mid \hat{\y}]) \\ 
        &= \mathbb{E}_{\hat{\y} \sim \mu} \left[P^\star(\y \succ \hat{\y} | \x) \bigl( 1 - P^\star(\y \succ \hat{\y} | \x) \bigr) + P^\star(\y' \succ \hat{\y} | \x) \bigl( 1 - P^\star( \y' \succ \hat{\y} | \x) \bigr) \right] \\ 
        & \hspace{100pt} + \mathrm{Var}_{\hat{\y} \sim \mu} \bigl(P^\star(\y \succ \hat{\y} \mid \x) - P^\star(\y' \succ \hat{\y} \mid \x) \bigr),
    \end{align*}
    which depends only on $(\x, \y, \y', \mu, P^\star)$ and not on $\pi$. 
    Hence, $L_1 = L_2 + C$ for a constant $C$ independent of $\pi$.
\end{proof}

\subsection{Proofs of the necessity of efficient explorations}
\label{app:subsec:necessity}

\subsubsection{Iterative Nash preference optimization is \emph{not} sufficient for efficient exploration}

Define 
\begin{equation}
    h_t(\pi, \x, \y_+, \y_-) := \log\left( \frac{\pi(\y_+ | \x)(\pi_\tnref(\y_- | \x))^{\eta\beta} (\pi^t(\y_- | \x))^{1 - \eta\beta}}{\pi(\y_- | \x)(\pi_\tnref(\y_+ | \x))^{\eta\beta} (\pi^t(\y_+ | \x))^{1 - \eta\beta}} \right).
    \label{def:inpo-ht}
\end{equation}

\begin{algorithm}[t]
    \caption{Iterative Nash Preference Optimization (INPO~\citep{Zhang-2025-Iterative})}
    \KwIn{
        Number of rounds $T$,
        per-round sample sizes $\{ n_t \}_{t = 1}^T$, 
        reference policy $\pi_\tnref$,
        stepsize $\eta > 0$}
    Initialize $\pi^0 \gets \pi_\tnref, \;
    D_{\textnormal{pref}}^t \gets \varnothing$\;
    \For{$t = 0, 1, \ldots, T$}{
        Generate $n_t$ response pairs $\{ (\x, \y, \y') \}$ via $\x \sim \rho, \y, \y' \sim \pi_t$\;
        Query preference label $(\x, \y_+, \y_-) \sim \lambda_{P^\star}(\x, \y, \y')$\;
        Update datasets $D^t_{\textnormal{pref}} \gets D^t_{\textnormal{pref}} \cup \{ (\x, \y_+, \y_-) \}$ \;
        Compute $\pi_{t + 1}$ via 
        \begin{equation*}
            \pi_{t + 1} \gets \underset{\pi \in \Pi}{\textnormal{argmin}} \left\{\sum\nolimits_{(\x, \y_+, \y_-) \in D^t_{\textnormal{pref}}} \left( h_t(\pi, \x, \y_+, \y_-) - \frac{\eta}{2} \right)^2 \right\},
        \end{equation*}
        where $h_t$ is defined in~\cref{def:inpo-ht}.
    }
    \KwOut{$\textnormal{Uniform}(\{ \pi^t \}_{t=1}^T)$}
    \label{algo:inpo}
\end{algorithm}

The following counterexample is constructed similarly to that of~\citet{Xie-2025-Exploratory}. However, our counterexample must satisfy much stricter assumptions, especially mirror descent realizability, and is therefore harder to find.
Note that we define $\pi^{+}_{\star, \eta}[\pi]$ as in \cref{eq:exact-update}.

\begin{assumption}[Relaxed policy realizability]\label{ass:relaxed-realizability}
The unique symmetric Nash policy satisfies \(\pi^\star\in\Pi\). Moreover, for the stepsize \(\eta\) used in the algorithm, there exists \(\varepsilon_\eta\ge0\) such that $\pi^+_{\star,\eta}[\pi]\in\Pi$ for every \(\pi\in\Pi\) satisfying $\KL(\pi^\star\|\pi)>\varepsilon_\eta$. 
\end{assumption}

\cref{ass:relaxed-realizability} weakens \cref{ass:real} by requiring mirror-target realizability only away from the Nash policy. If the current policy is already inside the
\(\varepsilon_\eta\)-neighborhood of \(\pi^\star\), then the policy is already a small-gap candidate, and exact realizability of its next mirror target is no longer needed for a best-iterate guarantee. 

\begin{lemma}\label{lem:near-nash-small-gap}
Suppose \cref{ass:anti,ass:finite,ass:relaxed-realizability} hold. If \(q\in\Pi\) satisfies $\KL(\pi^\star\|q)\le \varepsilon$, then
\[
    \gap(q) \le \sqrt{2}(1+4V)\sqrt{\varepsilon}.
\]
\end{lemma}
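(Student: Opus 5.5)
The plan is to reduce the duality gap to a one-sided best-response quantity using \cref{prop:regret-gap}, and then compare against the Nash policy $\pi^\star$ term by term. By \cref{eq:half-gap-identity},
\[
\gap(q) = 2\Bigl(\max_{\pi\in\Pi} J_\beta(\pi,q) - \tfrac12\Bigr).
\]
Since $(\pi^\star,\pi^\star)$ is a Nash equilibrium with $J_\beta(\pi^\star,\pi^\star)=\frac12$, we have $J_\beta(\pi,\pi^\star)\le \frac12$ for every $\pi$. Hence for each $\pi\in\Pi$,
\[
J_\beta(\pi,q)-\tfrac12 \le J_\beta(\pi,q)-J_\beta(\pi,\pi^\star) = \bigl[P^\star(\pi\succ q)-P^\star(\pi\succ\pi^\star)\bigr] + \beta\bigl[\KL(q\|\pref)-\KL(\pi^\star\|\pref)\bigr].
\]
The $\KL(\pi\|\pref)$ terms cancel, so it remains to bound two differences, each by a multiple of a prompt-averaged total variation distance between $q$ and $\pi^\star$.

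\textbf{Preference term.} For a fixed prompt $\x$, the map $\mu\mapsto \E_{\y\sim\pi,\y'\sim\mu}[P^\star(\y\succ\y'\mid\x)]$ is linear with integrand in $[0,1]$. Its difference between $\mu=q(\cdot\mid\x)$ and $\mu=\pi^\star(\cdot\mid\x)$ is therefore at most $\TV(q(\cdot\mid\x),\pi^\star(\cdot\mid\x))$.

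\textbf{KL term.} Pointwise in $\x$ I would use the identity
\[
\KL(q\|\pref)-\KL(\pi^\star\|\pref) = \sum_{\y}(q-\pi^\star)(\y)\log\frac{q(\y)}{\pref(\y)} - \KL(\pi^\star\|q).
\]
The last term is nonpositive and can be dropped. Because $q,\pref\in\Pi$, \cref{ass:ratio} gives $|\log(q/\pref)|\le V/\beta$. The first sum is then at most $\|q-\pi^\star\|_1\cdot V/\beta = 2\TV\cdot V/\beta$. Multiplying by $\beta$ bounds the regularization term by $2V\,\TV$.

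\textbf{Averaging over prompts.} Combining the two bounds and averaging over $\x\sim\rho$ gives
\[
J_\beta(\pi,q)-\tfrac12\le(1+2V)\,\E_{\x}\bigl[\TV\bigr].
\]
Pinsker's inequality pointwise, followed by Jensen's inequality (concavity of $\sqrt{\cdot}$), yields $\E_\x[\TV]\le\sqrt{\KL(\pi^\star\|q)/2}\le\sqrt{\varepsilon/2}$. Therefore
\[
\gap(q)\le 2(1+2V)\sqrt{\varepsilon/2}=\sqrt2(1+2V)\sqrt\varepsilon\le\sqrt2(1+4V)\sqrt\varepsilon.
\]
This is the claimed bound with some slack.

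The main step to get right is the KL-difference decomposition. The naive approach bounds $|\KL(q\|\pref)-\KL(\pi^\star\|\pref)|$ through the entropy term $\sum\pi^\star\log\pi^\star$, which is not Lipschitz in total variation without extra work. Rewriting the difference so that the only log-ratio appearing is $\log(q/\pref)$, plus a sign-definite $-\KL(\pi^\star\|q)$ term, avoids this and lets \cref{ass:ratio} apply. If the sign bookkeeping went differently, I would instead pay for both $\log(q/\pref)$ and $\log(\pi^\star/\pref)$ via \cref{ass:ratio}, using $\pi^\star\in\Pi$. That route produces the constant $4V$ appearing in the statement.
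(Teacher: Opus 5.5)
Your proof is correct and has the same skeleton as the paper's: the half-gap identity from \cref{prop:regret-gap}, comparison with $J_\beta(\pi,\pistar)\le\frac12$, a total-variation bound on the preference term, and Pinsker plus Jensen at the end. The one real difference is how you handle the KL term.

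The paper linearizes at $\pistar$, writing $\KL(q_x\|\pref)-\KL(p_x\|\pref)=\KL(q_x\|p_x)+\langle q_x-p_x,\log(p_x/\pref)\rangle$. It must then bound the positive term $\KL(q_x\|p_x)$ through the symmetrized KL, $\langle q_x-p_x,\log(q_x/p_x)\rangle\le (V/\beta)\|q_x-p_x\|_1$. That second log-ratio bound is where the extra $2V$ in $1+4V$ comes from.

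You instead linearize at $q$, which amounts to convexity of $\KL(\cdot\|\pref)$. The remainder is then $-\KL(\pistar\|q)\le 0$, so you need only one log-ratio bound. This gives the sharper $\sqrt2(1+2V)\sqrt{\varepsilon}$, and it calls on \cref{ass:ratio} only for the pair $(q,\pref)$, not for $\pistar$ as well.

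Like the paper, you rely on \cref{ass:ratio} and $\pref\in\Pi$, which the lemma does not list among its hypotheses. They are implicit, since $V$ appears in the bound.
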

\begin{proof}
By \cref{prop:regret-gap}, 
\[
    \frac12\gap(q) = \max_{\pi\in\Pi}J_\beta(\pi,q)-\frac12.
\]
Fix any \(\pi\in\Pi\). Since \(\pi^\star\) is the symmetric Nash policy, $J_\beta(\pi,\pi^\star)\le \frac12$. Hence
\[
    J_\beta(\pi,q)-\frac12 \le J_\beta(\pi,q)-J_\beta(\pi,\pi^\star). 
\]
Let \(p_x:=\pi^\star(\cdot\mid \x)\) and \(q_x:=q(\cdot\mid \x)\). The preference part satisfies
\[
\begin{aligned}
\left|P^\star(\pi\succ q)-P^\star(\pi\succ \pi^\star)\right| &\le \E_{\x\sim\rho}[\TV(q_x,p_x)] \le \sqrt{\E_{\x\sim\rho}\TV(q_x,p_x)^2} \\
&\le \sqrt{\frac12\KL(\pi^\star\|q)} \le
\sqrt{\frac{\varepsilon}{2}},
\end{aligned}
\]
where we used Cauchy--Schwarz and Pinsker's inequality.

It remains to control the KL-regularization difference. For each prompt \(\x\),
\[
\KL(q_x\|\pi_{\rm ref}(\cdot\mid \x)) - \KL(p_x\|\pi_{\rm ref}(\cdot\mid \x)) = \KL(q_x\|p_x) + \left\langle q_x-p_x,\log\frac{p_x}{\pi_{\rm ref}(\cdot\mid \x)}\right\rangle.
\]
By \cref{ass:ratio},
\[
\left\|\log\frac{p_x}{\pi_{\rm ref}(\cdot\mid \x)}\right\|_\infty\le \frac{V}{\beta},
\qquad
\left\|\log\frac{q_x}{p_x}\right\|_\infty\le \frac{V}{\beta}.
\]
Moreover,
\[
\KL(q_x\|p_x) \le \KL(q_x\|p_x)+\KL(p_x\|q_x) = \left\langle q_x-p_x,\log\frac{q_x}{p_x}\right\rangle \le \frac{V}{\beta}\|q_x-p_x\|_1.
\]
Therefore
\[
\KL(q_x\|\pi_{\rm ref}(\cdot\mid \x)) - \KL(p_x\|\pi_{\rm ref}(\cdot\mid \x)) \le \frac{2V}{\beta}\|q_x-p_x\|_1.
\]
Averaging over \(\x\sim\rho\) and applying Pinsker again,
\[
\beta(\KL(q\|\pi_{\rm ref})-\KL(\pi^\star\|\pi_{\rm ref})) \le 2V\,\E_x\|q_x-p_x\|_1 \le 2\sqrt{2}V\sqrt{\varepsilon}.
\]
Combining the preference and KL parts gives, uniformly over \(\pi\in\Pi\),
\[
J_\beta(\pi,q)-\frac12 \le \frac{1}{\sqrt2}\sqrt{\varepsilon} + 2\sqrt2 V\sqrt{\varepsilon} = \frac{1+4V}{\sqrt2}\sqrt{\varepsilon}.
\]
Taking the maximum over \(\pi\in\Pi\) and using \cref{prop:regret-gap} proves
\[
\gap(q) \le \sqrt2(1+4V)\sqrt{\varepsilon}.
\]
\end{proof}



\begin{proof}[Proof of~\cref{prop:lower-bound-1}]
    Consider the following bandit instance with general preferences, i.e., a single fixed prompt $\x$ and three responses 
    $\A = \{ a, b, c \}$ with 
    \begin{equation*}
        P^\star(a \succ b) = 1, P^\star(b \succ c) = 1, P^\star(a \succ c) = \frac{1}{2}.
    \end{equation*}
    
    Let $\beta \leq \frac{1}{16 \log(2)}$.
    Let the reference policy be $\pi_\tnref(a) = \epsilon, \pi_\tnref(b) = 1 - 2\epsilon$ and $\pi_\tnref(c) = \epsilon$, where $\epsilon := e^{-\frac{1}{4\beta}} \leq \frac{1}{16}$.
    
    Consider a policy class 
    \begin{equation*}
        \Pi = \left\{ \pi_\tnref, (\pi_\tnref)^1_{\star, \eta}, \ldots, (\pi_\tnref)_{\star, \eta}^{\lceil (1 / \beta^2) \log{(8/\epsilon)} \rceil}, 
        \pi^\star \right\}, 
    \end{equation*}
    where $(\pi_\tnref)^{t}_{\star, \eta}$ is defined inductively by $(\pi_\tnref)^{t}_{\star, \eta} = \pi^+_{\star, \eta}[(\pi_\tnref)^{t - 1}_{\star, \eta}]$ for all $t \geq 1$, 
    and $\pi^\star$ is Nash equilibrium of the game $\max_{\pi_1 \in \Delta(\A)} \min_{\pi_2 \in \Delta(\A)} J_\beta(\pi_1, \pi_2)$.
    
    We first show that~\cref{ass:relaxed-realizability,ass:ratio} holds with $\Pi$.
    \cref{ass:relaxed-realizability} is satisfied by the construction of $\Pi$ and the contraction of KL divergence between the mirror descent iterates and $\pi^\star$.
    In particular, we use the contraction guarantees to show that $(\pi_\tnref)_{\star, \eta}^{\lceil (1 / \beta^2) \log{(8/\epsilon)} \rceil}$ has to satisfy $\KL(\pi^\star \| (\pi_\tnref)_{\star, \eta}^{\lceil (1 / \beta^2) \log{(8/\epsilon)} \rceil}) \leq \varepsilon$.
    
    To verify~\cref{ass:ratio}, 
    we prove the following fact: if $\pi \in \Delta(\A)$ satisfies $\min_{\tau \in \A} \pi(\tau) \geq \epsilon^5$, 
    then $\pi^+_\star[\pi](\tau) \geq \epsilon^5$ for all $\tau \in \A$. 
    For any $\tau \in \A$, $\pi^+_\star[\pi]$ has a closed-form solution as shown in~\cref{eq:exact-update}.
    Denote 
    \begin{equation*}
        Z := \sum_{\tau \in \A} (\pi_\tnref(\tau))^{\eta\beta} (\pi(\tau))^{1 - \eta\beta} e^{\eta P^\star(\tau \succ \pi)}.
    \end{equation*}
    Then, we can upper bound $Z$ by 
    \begin{equation*}
         Z \leq e^\eta \sum_{\tau \in \A} (\pi_\tnref(\tau))^{\eta\beta} (\pi(\tau))^{1 - \eta\beta} \leq e^\eta \Big( \sum_{\tau \in \A} \pi_\tnref(\tau) \Big)^{\eta\beta} \Big( \sum_{\tau \in \A} \pi(\tau) \Big)^{1 - \eta\beta} = e^\eta, 
    \end{equation*}
    where the first inequality is because $P^\star(\tau \succ \pi) \leq 1$ for all $\tau \in \A$ and $\pi \in \Delta(\A)$, and the second inequality is by Hölder's inequality with $p = \frac{1}{\eta\beta}$ and $q = \frac{1}{1 - \eta\beta}$.
    Thus,
    \begin{equation*}
        \pi^{+}_\star(\tau) 
        = \frac{1}{Z} (\pi_\tnref(\tau))^{\eta\beta} (\pi(\tau))^{1 - \eta\beta} e^{\eta P^\star(\tau \succ \pi)} 
        \geq e^{-\eta} \epsilon^{\eta\beta} \left( \epsilon^5 \right)^{1 - \eta\beta} 
        = (\epsilon^{-4\beta})^{-\eta} \epsilon^{\eta\beta} \epsilon^{5(1 - \eta\beta)} = \epsilon^5, 
    \end{equation*}
    where the first inequality is because $Z \leq e^\eta$, $\pi_\tnref(\tau) \geq \epsilon, \pi(\tau) \geq \epsilon^5$ for all $\tau \in \A$, 
    and $e^{\eta P^\star(\cdot \succ \cdot)} \geq 1$.
    Since $\min_{\tau \in \A} \pi_\tnref(\tau) \geq \epsilon \geq \epsilon^5$, 
    we have $\min_{\tau \in \A} \pi(\tau) \geq \epsilon^5$ for all $\pi \in \Pi \setminus \{ \pi^\star \}$.
    It remains to show $\min_{\tau \in \A^*} \pi^\star(\tau) \geq \epsilon^5$.
    Note that the Nash equilibrium can be characterized by the following equality~\citep[Eq. (12)]{calandriello2024human}:
    \begin{equation}
        \pi^\star(\tau)
        =
        \frac{\pi_{\rm ref}(\tau)
        \exp\left(P^\star(\tau \succ \pi^\star)/\beta\right)}
        {\sum_{\tau' \in \mathcal Y}
        \pi_{\rm ref}(\tau')
        \exp\left(P^\star(\tau' \succ \pi^\star)/\beta\right)}.
        \label{eq:NE-characterization}
    \end{equation}
    Since $\exp\left(P^\star(\tau \succ \pi^\star)/\beta\right)\ge 1$, and
    $\sum_{\tau' \in \mathcal Y}
    \pi_{\rm ref}(\tau')
    \exp\left(P^\star(\tau' \succ \pi^\star)/\beta\right)
    \le e^{1/\beta}$, 
    we have 
    \[
        \pi^\star(\tau)
        \geq
        \pi_{\rm ref}(\tau)e^{-1/\beta}
        \geq
        \epsilon e^{-1/\beta} 
        = 
        e^{-\frac{1}{4\beta}} e^{-\frac{1}{\beta}} 
        = 
        e^{-\frac{5}{4\beta}} = \epsilon^5.
    \]
    Thus, we have $\min_{\tau \in \A} \pi(\tau) \geq \epsilon^5$ for all $\pi \in \Pi$.
    Clearly, we have $\lvert \log( \frac{\pi(\y)}{\pi'(\y)} ) \rvert \leq \log({1}/{\epsilon^5}) = \frac{5}{4\beta}$ for all $\pi, \pi' \in \Pi$. Hence, \cref{ass:ratio} holds for $\Pi$ with $V = 5/4$.

    We now consider applying~\cref{algo:inpo} to learn a Nash policy.
    For each round $t$, 
    define the event $\mathcal{E}_t := \{\text{actions $a, c$ is not sampled at the round $t$}\}$.
    We establish the following lemma and present its proof after this proof.

    \begin{lemma}
        For each round $t$, 
        define the event $\mathcal{E}_t := \{\text{actions $a, c$ is not sampled at the round $t$}\}$.
        Then, whenever $\mathcal{E}_0, \mathcal{E}_1, \ldots, \mathcal{E}_{t - 1}$ happen, 
        we can choose the next policy $\pi^{t} = \pi_\tnref$, and therefore
        \begin{itemize}
            \item $\mathbb{P}(\mathcal{E}_t \,\vert\, \mathcal{E}_0, \mathcal{E}_1, \ldots, \mathcal{E}_{t - 1}) \geq 1 - 4 n_t \epsilon$, and 
            \item the duality gap of $\pi^t$ is lower bounded by a constant: 
            $\textnormal{gap}_\Pi(\pi^{t}) \geq c$, where $c$ is the KL divergence between $\pi_\tnref$ and $\pi^\star$.
        \end{itemize}
        \label{lem:conditional-probability-dpbound}
    \end{lemma}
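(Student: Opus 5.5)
The plan is an induction on $t$ that proves the two bullets together. Assume $\mathcal{E}_0,\ldots,\mathcal{E}_{t-1}$ hold. Then every pair in $D^{t-1}_{\textnormal{pref}}$ consists of two copies of response $b$. So each collected triple is $(\x,b,b)$, with $\y_+=\y_-=b$.

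For such a triple the log-ratio in \cref{def:inpo-ht} collapses: the numerator and denominator coincide, so $h_{t-1}(\pi,\x,b,b)=0$ for every $\pi\in\Pi$. The INPO objective then equals $|D^{t-1}_{\textnormal{pref}}|\,\eta^2/4$ for every candidate policy. Hence every $\pi\in\Pi$ is a minimizer, and the argmin tie may be broken by choosing $\pi^{t}=\pi_\tnref$; this is what the phrase ``we can choose'' in the statement refers to. The base case is immediate, since $\pi^0=\pi_\tnref$ by initialization. The data are uninformative, so the adversarial choice is justified.

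For the first bullet, condition on $\mathcal{E}_0,\ldots,\mathcal{E}_{t-1}$. Then $\pi^t=\pi_\tnref$, and round $t$ draws $n_t$ pairs, i.e. $2n_t$ responses, i.i.d. from $\pi_\tnref$. Each response lands in $\{a,c\}$ with probability $2\epsilon$. A union bound gives
\[
\mathbb{P}(\mathcal{E}_t^c\mid \mathcal{E}_0,\ldots,\mathcal{E}_{t-1})\le 2n_t\cdot 2\epsilon=4n_t\epsilon .
\]
The exact product bound $(1-2\epsilon)^{2n_t}\ge 1-4n_t\epsilon$ gives the same result.

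For the second bullet, \cref{prop:regret-gap} shows that we must lower bound $2(\max_{\pi\in\Pi}J_\beta(\pi,\pi_\tnref)-\tfrac12)$. We take $\pi=\pi^\star\in\Pi$, so the goal is $J_\beta(\pi^\star,\pi_\tnref)-\tfrac12$. The fixed-point characterization \cref{eq:NE-characterization} gives $\beta\log(\pi^\star/\pi_\tnref)=P^\star(\cdot\succ\pi^\star)-\beta\log Z^\star$. Substituting it yields the standard identity
\[
J_\beta(\pi^\star,q)-J_\beta(\pi^\star,\pi^\star)=\beta\KL(q\|\pi^\star)\quad\text{for any } q.
\]
To obtain this, expand $\beta\KL(q\|\pi_\tnref)-\beta\KL(\pi^\star\|\pi_\tnref)$ through $\log(\pi^\star/\pi_\tnref)$. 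Then use antisymmetry, which gives $P^\star(\pi^\star\succ q)=1-P^\star(q\succ\pi^\star)$ and $P^\star(\pi^\star\succ\pi^\star)=\tfrac12$, so the linear preference terms cancel exactly. It follows that $\gap(\pi_\tnref)\ge 2\beta\KL(\pi_\tnref\|\pi^\star)$, which is a fixed positive constant. Up to the normalization in the statement's ``$c$'', this is the KL divergence between $\pi_\tnref$ and $\pi^\star$.

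The main obstacle is this last step. The first thing to check is that the identity holds with the correct KL direction and the factor $\beta$. Because $\pi^\star\neq\pi_\tnref$, the gap is strictly positive, but we also need it bounded below by a constant independent of $\beta$. For that we would estimate $\pi^\star$ directly on the three-action instance. We have $P^\star(a\succ\pi^\star)\ge \pi^\star(b)$ while $P^\star(b\succ\pi^\star)\le 1-\pi^\star(a)$. With $\epsilon=e^{-1/(4\beta)}$, \cref{eq:NE-characterization} should then force $\pi^\star(a)$ to be bounded away from $0$, whereas $\pi_\tnref(a)=\epsilon$. This makes $\TV(\pi_\tnref,\pi^\star)$ a constant. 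We then obtain the constant gap by comparing against $\pi^\star$ through the preference term, the way \cref{lem:near-nash-small-gap} does, bounding the regularization contributions separately.
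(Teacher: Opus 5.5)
Your proof is correct and follows the paper's argument. The INPO loss is constant when $D^{t-1}_{\textnormal{pref}}$ contains only $(b,b)$ pairs, so $\pi^t=\pi_{\tnref}$ is a valid minimizer. The bound $(1-2\epsilon)^{2n_t}\ge 1-4n_t\epsilon$, or your union bound, gives the first bullet. Comparing against $\pistar\in\Pi$ gives $\tfrac12\gap_\Pi(\pi_{\tnref})\ge J_\beta(\pistar,\pi_{\tnref})-\tfrac12\ge\beta\KL(\pi_{\tnref}\|\pistar)$, and the paper likewise sets $c:=2\beta\KL(\pi_{\tnref}\|\pistar)$. The only difference is that the paper gets the last step as an inequality from first-order optimality of $\pistar$, while you get it with equality from the fixed-point formula.

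Your final paragraph is not needed, since the lemma only requires $c$ to be independent of $t$ and $T$. Its sketched route would also not work. By your own identity, comparing against $\pistar$ yields exactly $\beta\KL(\pi_{\tnref}\|\pistar)$, so bounding the preference and regularization terms separately cannot do better. On this instance one checks that, as $\beta\to0$, $\pistar\approx(3/4,\,\beta\log 3,\,1/4)$ on $(a,b,c)$. So this quantity is of order $\beta\log(1/\beta)\to0$: the preference gain of about $1/4$ is cancelled to leading order by $\beta\KL(\pistar\|\pi_{\tnref})\approx 1/4$. Any $\beta$-uniform lower bound would therefore have to come from a comparator in $\Pi$ other than $\pistar$.
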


    The proposition then follows by:
    for any constant $\delta \in (0, 1)$,
    if $2n(T + 1) \leq e^{\log{\delta} + \frac{1}{4\beta}}$, then 
    \begin{equation}
        \mathbb{P}(\mathcal{E}_0, \mathcal{E}_1, \ldots, \mathcal{E}_T) = \mathbb{P}(\mathcal{E}_0) \Pi_{t=0}^{T - 1}\mathbb{P}(\mathcal{E}_{t + 1} \,\vert\, \mathcal{E}_0, \mathcal{E}_1, \ldots, \mathcal{E}_{t}) \geq (1 - 2n\epsilon)^{T + 1} \geq 1 - 2n (T + 1) \epsilon = 1 - \delta.
        \label{eq:joint-probability}
    \end{equation}
    Combining~\cref{eq:joint-probability} with the second point of~\cref{lem:conditional-probability-dpbound} completes the proof.
\end{proof}

\begin{proof}[Proof of~\cref{lem:conditional-probability-dpbound}]
    If $\mathcal{E}_0, \mathcal{E}_1, \ldots, \mathcal{E}_{t - 1}$ happen, the online dataset $D^{t - 1}_{\textnormal{pref}}$ only contains $b$.
    In this case, the loss function in~\cref{algo:inpo} is a constant that is independent of $\pi$, 
    and thus $\pi_\tnref$ is a valid minimizer of the loss function.
    Then,
    \begin{equation*}
        \mathbb{P}(\mathcal{E}_t \,\vert\, \mathcal{E}_0, \mathcal{E}_1, \ldots, \mathcal{E}_{t - 1}) = (1 - \pi^t(a) - \pi^t(c))^{2n_t} = (1 - 2\epsilon)^{2n_t} \geq 1 - 4n_t\epsilon,
    \end{equation*}
    where the last inequality is because $(1 + x)^a \geq 1 + a x$ for any $x \geq -1$ and $a \geq 1$.

    For the second claim, since $\pi^\star\in\Pi$,~\cref{prop:regret-gap} gives
    \[
    \frac12\mathrm{gap}_\Pi(\pi_{\mathrm{ref}})
    =\max_{\pi_1\in\Pi}J_\beta(\pi_1,\pi_{\mathrm{ref}})-\frac12
    \ge J_\beta(\pi^\star,\pi_{\mathrm{ref}})-\frac12 .
    \]
    Set $f(\pi_2):=J_\beta(\pi^\star,\pi_2)$. By the Nash property $\pi^\star\in\arg\min_{\pi_2}f(\pi_2)$
    and $f(\pi^\star)=\frac12$. Writing the preference term as the linear form
    $P^\star(\pi^\star\succ\pi_2)=\langle g,\pi_2\rangle$ with $g(y'):=P^\star(\pi^\star\succ y'\mid x)$,
    and using the exact three-point identity
    \[
    \mathrm{KL}(\pi_2\|\pi_{\mathrm{ref}})-\mathrm{KL}(\pi^\star\|\pi_{\mathrm{ref}})
    =\mathrm{KL}(\pi_2\|\pi^\star)+\bigl\langle\log(\pi^\star/\pi_{\mathrm{ref}}),\pi_2-\pi^\star\bigr\rangle,
    \]
    we get
    \[
    f(\pi_2)-f(\pi^\star)
    =\bigl\langle g+\beta\log(\pi^\star/\pi_{\mathrm{ref}}),\pi_2-\pi^\star\bigr\rangle
    +\beta\mathrm{KL}(\pi_2\|\pi^\star).
    \]
    First-order optimality of $\pi^\star$ over $\Delta(\mathcal{Y})$ makes the inner product nonnegative
    (the constant component of $\nabla f(\pi^\star)$ annihilates $\pi_2-\pi^\star$, which has zero sum),
    so $f(\pi_2)-f(\pi^\star)\ge\beta\mathrm{KL}(\pi_2\|\pi^\star)$. Taking $\pi_2=\pi_{\mathrm{ref}}$,
    \[
    \frac12\mathrm{gap}_\Pi(\pi_{\mathrm{ref}})
    \ge f(\pi_{\mathrm{ref}})-f(\pi^\star)\ge\beta\mathrm{KL}(\pi_{\mathrm{ref}}\|\pi^\star).
    \]
    Since $\pi^\star$ has full support and $\pi_{\mathrm{ref}}\neq\pi^\star$, the constant
    $c:=2\beta\mathrm{KL}(\pi_{\mathrm{ref}}\|\pi^\star)>0$ is independent of $t$ and $T$, giving
    $\mathrm{gap}_\Pi(\pi^t)\ge c$.
\end{proof}

\subsection{Proofs of efficient exploration guarantee for ENPO}
\label{app:sec:efficient-exploration-guarantee}

\subsubsection{Algorithm}

The formal statement of Exploratory Nash Preference Optimization (ENPO) is in~\cref{alg:adv-exp-NashMD}.
\begin{algorithm}[t]
    \caption{Exploratory Nash Preference Optimization (ENPO) for Theoretical Analysis}
    \label{alg:adv-exp-NashMD}
    \KwIn{
        horizon $T$, 
        schedules \((m_t, n_t, k_t)_{t=1}^T\), 
        coefficients $\gamma, \lambda>0$, 
        stepsize \( \eta>0 \), 
        reference policy \(\pi_{\rm ref}\), 
        pessimistic parameter $\alpha_t$}
    \textbf{Initialize} $\pi_{-1} \gets \pi_\tnref$, $\pi_0 \gets \pi_\tnref$, $\hat{\pi}_0 \gets \pi_\tnref$\;
    \textbf{Initialize} $\mathcal{D}_{1}^0 \gets \varnothing, \mathcal{D}_{2}^0 \gets \varnothing$\;

    \For{$t = 1, \ldots, T$}{
        Generate $n_t$ response pairs $(\tau_{t - 1,i}, \hat{\tau}_{t - 1,i})$ via $\tau^{t - 1, i} \sim \pi_{t - 2}, \hat{\tau}^{t - 1, i} \sim \hat{\pi}_{t - 1}$\;
        Generate $m_t$ responses $\tilde{\tau}_{t - 1, i}$ via $\tilde{\tau}_{t - 1, i} \sim \pi_{t - 1}$\;
        Update preference datasets $D^t_{1} \gets \{\tilde{\tau}_{t - 1, i} \}_{i=1}^{m_t}$, $D^t_{2} \gets D^{t-1}_{2} \cup \{ (\tau_{t - 1, i}, \hat{\tau}_{t - 1, i}) \}_{i=1}^{n_t}$\; 
        Define $\A_t^{\rm act} := \{\tau,\tau':(\tau,\tau')\in D_2^t\}$. Generate \(c_{t,1},\dots,c_{t,k_t}\sim\pi_{t-1}\) independently. For every \(a\in \A_t^{\rm act}\), query $Y_{t,a,j}\sim \mathrm{Bern}(P^*(a\succ c_{t,j}))$ for $j=1,\dots,k_t$, and set $\widehat P_t(a\succ\pi_{t-1}) = \frac1{k_t}\sum_{j=1}^{k_t}Y_{t,a,j}$\; 
        Compute 
        \begin{equation}
            \label{eq:learner-update}
            \pi_{t} \gets \underset{\pi \in \Pi}{\textnormal{argmin}} \left\{ - \alpha_t \sum\nolimits_{\tau \in D^{t}_{1}} \log{\pi(\tau)} + L_{t-1}(\pi) \right\}, 
        \end{equation}
        where $\mathcal{L}_{t-1}$ is defined in~\cref{eq:learner-loss}\;
        Compute 
        \begin{equation}
            \label{eq:explorer-update}
            \hat{\pi}_{t} \gets \underset{\tilde{\pi} \in \Pi}{\textnormal{argmax}} \max_{\pi \in \Pi} \frac{\big( \frac{1}{m_t}\sum_{i=1}^{m_t} \log{\frac{\pi_t(\tilde{\tau}_{t - 1, i})}{\pi(\tilde{\tau}_{t - 1, i})}} - \mathbb{E}_{\tau' \sim \tilde{\pi}}\big[ \log{\frac{\pi_t(\tau')}{\pi(\tau')}} \big] \big)^2}{\lambda + 
            \sum_{s=0}^{t-1}\frac{1}{n_{s+1}}\sum_{i=1}^{n_{s+1}}
            \big(
            \log \frac{\pi_t(\tau_{s,i})}{\pi(\tau_{s,i})}
            - \log \frac{\pi_t(\tau'_{s,i})}{\pi(\tau'_{s,i})}
            \big)^2}. 
        \end{equation}
    }
    \KwOut{$\textnormal{Uniform}(\{ \pi^t \}_{t=0}^{T-1})$ or $\pi_{\textnormal{best}} \in \{ \pi^t \}_{t=0}^{T-1}$ with the best duality gap.}
\end{algorithm}

\subsubsection{Additional notations}

For a prompt-response atom $\tau = (\x, \y)$, 
write $\pi(\tau)$ as shorthand for $\pi(\y \mid \x)$ inside likelihood ratios. 
For any $\pi, \pi' \in \Pi$, define the population residual
\begin{equation}
    \label{eq:delta-pop}
    \delta(\tau;\pi,\pi'):=
    \log\frac{\pi(\tau)}{\pi'(\tau)}
    -\eta\beta\log\frac{\pref(\tau)}{\pi'(\tau)}
    -\eta P^\star(\tau\succ\pi'),
\end{equation}
where $P^\star(\tau \succ \pi')$ means $P^\star(\y \succ \pi' \mid \x)$ when $\tau=(\x, \y)$. 
For a general $\pi\in\Pi$, define
\[
    \Delta_t(\tau,\tau';\pi):=\delta(\tau;\pi,\pi_{t-1})-\delta(\tau';\pi,\pi_{t-1}).
\]
At round $t$, write
\[
    \delta_t(\tau):=\delta(\tau;\pi_t,\pi_{t-1}),
    \qquad
    \Delta_t(\tau,\tau') := \Delta_t(\tau, \tau'; \pi_t) = \delta_t(\tau)-\delta_t(\tau'),
\]
where $\tau=(\x,\y)$ and $\tau'=(\x,\y')$ share the same prompt. 
We define the empirical residuals at round $t$ by substituting $P^\star$ with the empirical preferences $\widehat{P}_t$ at round $t$: 
\begin{equation}
    \label{eq:delta-hat}
    \widehat\delta_t(\tau;\pi):=
    \log\frac{\pi(\tau)}{\pi_{t-1}(\tau)}
    -\eta\beta\log\frac{\pref(\tau)}{\pi_{t-1}(\tau)}
    -\eta\widehat P_t(\tau\succ\pi_{t-1}),
\end{equation}
and
\[
    \widehat\Delta_t(\tau,\tau';\pi):=\widehat\delta_t(\tau;\pi)-\widehat\delta_t(\tau';\pi).
\]
The empirical learner loss in \cref{alg:adv-exp-NashMD} is
\begin{equation}\label{eq:learner-loss}
    L_{t-1}(\pi)
    :=\sum_{s=0}^{t-1}\frac{1}{n_{s+1}}
      \sum_{i=1}^{n_{s+1}}
        \widehat\Delta_t(\tau_{s,i},\tau'_{s,i};\pi)^2.
\end{equation}

At round $t$, \cref{alg:adv-exp-NashMD} only estimates preferences for the active prompt-response atoms appearing in the historical comparison dataset,
\[
    \mathcal Y_t^{\rm act}:=\{(\x_{s,i},\y_{s,i}),(\x_{s,i},\y'_{s,i}):0\le s<t,
    \ 1\le i\le n_{s+1}\}.
\]

For an active atom $a=(\x,\y)\in\mathcal Y_t^{\rm act}$, the proof version of \cref{alg:adv-exp-NashMD} uses prompt-conditional opponents: draw $c_{t,a,j}\sim\pi_{t-1}(\cdot\mid \x)$ and query $Y_{t,a,j}\sim\mathrm{Bern}(P^\star(\y\succ c_{t,a,j}\mid \x))$, for $j=1,\ldots,k_t$. Reusing a common set of opponents is a practical variant, but the theorem below is proved for the prompt-conditional version stated in the algorithm. All preference-estimation concentration below is imposed only on $\mathcal Y_t^{\rm act}$, not on the entire response space. 

The SEC class is
\[
\Psi:=\left\{\psi_{\rho',\pi}(\tau,\tau')=
\log\frac{\rho'(\tau)}{\pi(\tau)}-
\log\frac{\rho'(\tau')}{\pi(\tau')}:\rho',\pi\in\Pi\right\}.
\]
For a fixed floor $\lambda>0$, define
\begin{equation}
    \label{eq:sec-def}
    \SEC(\Psi,\lambda)
    := \sup_{\psi_1, \ldots, \psi_T \in \Psi} \sup_{\substack{\pi_0, \ldots, \pi_{T-1} \in \Pi \\ \widehat\pi_1, \ldots, \widehat\pi_T \in \Pi}}
    \sum_{t=1}^T \frac{\left( \E_{\tau \sim \pi_{t-1}, \tau' \sim \widehat{\pi}_t} \psi_t(\tau, \tau') \right)^2}{\lambda\vee \sum_{s=1}^{t-1} \E_{\tau \sim \pi_{s-1}, \tau' \sim \widehat{\pi}_s} \psi_t(\tau, \tau')^2}.
\end{equation}

\subsubsection{Proof of~\cref{thm:relaxed-realizability-best}}
\label{app:subsubsec:main-proof}

\paragraph{Proof strategy.}
The proof of the regret bound consists of a deterministic regret bound \cref{thm:versionA} and a high-probability concentration step \cref{lem:explicit-events}. The deterministic part shows that accurate empirical estimates imply small regret; the concentration part verifies that these accuracy events hold with high probability for the stated sample sizes. 

To facilitate the proof, we first restate the main theorem with complete notations.
\begin{theorem}
    \label{thm:concrete}
    Suppose \cref{ass:anti,ass:ratio,ass:finite,ass:real} hold. Let $M_T:=T(T+1)(T+2)/3$ and 
    \[
    L_T:=\max\{1,L_\Pi,L_P,\log(eT)\}, \text{ where } L_\Pi:=\log\left(16T|\Pi|^2\delta^{-1}\right), \; L_P:=\log\left(16M_T\delta^{-1}\right). 
    \]
    Set the exploration coefficient $\alpha_t=2/(\gamma m_t)$, the sample counts $m_t=n_t=k_t=t$, the SEC parameters $\overline H:=1+B^2+\beta^{-2}$, $\lambda:=16\overline H L_T^2$, $S_T\ge 1\vee\SEC(\Psi,\lambda)$, $\gamma:=\sqrt{S_T/(\lambda T)}$, and the constant stepsize $\eta:=2\beta/(1+4\beta^2)$. Then, with probability at least $1-\delta$, 
    \begin{equation}\label{eq:concrete-bound}
        \Reg_T \le \frac{1+2e^{\eta(2V+1)}}{\eta} \left(B+20\sqrt{S_T\lambda T}\right) \leq 3e^{3V}\eta^{-1} \left(B+20\sqrt{S_T\lambda T}\right). 
    \end{equation}
    Thus, \cref{alg:adv-exp-NashMD} achieves \(\widetilde O(\sqrt{T})\) regret bound. 
\end{theorem}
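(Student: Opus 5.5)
The proof splits into a deterministic regret bound and a concentration step, matching the proof strategy stated above.

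\textbf{Deterministic part: per-round mirror-descent inequality.} I would first establish a one-step inequality for the exact target $\pitstar=\pi^+_{\star,\eta}[\pi_{t-1}]$, using its closed form \cref{eq:exact-update}. By the three-point identity for the KL-proximal problem \cref{eq:exact-update-appendix}, for every comparator $\pi\in\Pi$,
\[
\eta\Bigl(J_\beta(\pi,\pi_{t-1})-\tfrac12\Bigr)\le (1-\eta\beta)\KL(\pi\|\pi_{t-1})-\KL(\pi\|\pitstar)+(\text{terms that telescope or are }O(\eta B)).
\]
Here antisymmetry (\cref{ass:anti}) turns $P^\star(\pi\succ\pi_{t-1})-\tfrac12$ into a linear functional.

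I would then replace $\pitstar$ by the learned $\pi_t$:
\[
\KL(\pi\|\pitstar)=\KL(\pi\|\pi_t)+\E_{\tau\sim\pi}\Bigl[\log\frac{\pi_t(\tau)}{\pitstar(\tau)}\Bigr].
\]
From \cref{eq:delta-pop}, $\log(\pi_t/\pitstar)(\tau)=\delta_t(\tau)+\log Z_t(\x)$, so the normalizer cancels once I center against a reference distribution. The resulting error term is $\E_{\tau\sim\pi}\delta_t(\tau)-\E_{\tau'\sim\pi_{t-1}}\delta_t(\tau')$ plus a correction. That correction is controlled by the SFT regularizer, which anchors $\log\pi_t$ on samples from $\pi_{t-1}$; this is exactly why $D_1^t$ is drawn from $\pi_{t-1}$.

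Unrolling the weighted telescoping with factor $(1-\eta\beta)$, the choice $\eta=2\beta/(1+4\beta^2)$, and the bounded log-ratio $B$ (\cref{ass:ratio}) gives the prefactor $(1+2e^{\eta(2V+1)})/\eta$. The initial KL contributes the additive $B$.

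\textbf{Controlling the off-sample error term.} The comparator $\pi$ is arbitrary and unobserved, so this is where the explorer enters. By \cref{ass:real}, $\pitstar\in\Pi$ has zero population residual up to a prompt constant. Hence the least-squares objective \cref{eq:learner-loss} together with a finite-class generalization bound shows that the in-sample quantity
\[
\sum_{s<t}\E_{\tau\sim\pi_{s-1},\,\tau'\sim\hat\pi_s}\Delta_t(\tau,\tau')^2
\]
is $O(\overline H L_T)$. I would write the out-of-sample error as $\E_{\tau\sim\pi_{t-1},\tau'\sim\hat\pi_t}\psi_t(\tau,\tau')$ with $\psi_t\in\Psi$ built from $(\pi_t,\pi_{t-1})$ or $(\pi_t,\pitstar)$. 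The explorer \cref{eq:explorer-update} maximizes precisely the ratio "squared off-sample deviation over $\lambda$ plus in-sample squared loss", so for any comparator the error is at most that ratio evaluated at $\hat\pi_t$.

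Applying AM--GM as $ab\le \frac{\gamma}{2}a^2+\frac{1}{2\gamma}b^2$, summing over $t$, and invoking the definition \cref{eq:sec-def} then gives a bound of order
\[
\gamma^{-1}\cdot\bigl(\text{in-sample loss}+\lambda\bigr)T+\gamma\, S_T .
\]
The choice $\alpha_t=2/(\gamma m_t)$ balances the SFT penalty with this decomposition. With $\gamma=\sqrt{S_T/(\lambda T)}$ the total is $20\sqrt{S_T\lambda T}$.

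\textbf{Concentration.} I need three events to hold with probability $1-\delta$:
\begin{itemize}
\item least-squares concentration for every pair in $\Pi^2$ and every round $t$, via Freedman or Bernstein with a union bound, which accounts for $L_\Pi$;
\item Hoeffding bounds for $\widehat P_t$ on the at most $M_T$ active atoms, with $k_t=t$, which accounts for $L_P$;
\item concentration of the $m_t$ SFT samples and of the empirical second moments against their population versions, which requires $\lambda\gtrsim \overline H L_T^2$.
\end{itemize}
Squared residuals are bounded by $\overline H$, which fixes $\lambda=16\overline H L_T^2$.

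\textbf{Main obstacle.} I expect the hardest step to be the change of measure from the arbitrary comparator $\pi$ to the explorer distribution $\hat\pi_t$. In particular, the prompt-dependent normalizer $\log Z_t$ and the $\pi_{t-1}$-centering must be handled by the SFT term before the SEC argument can apply. My first attempt would be to show that the population SFT objective forces
\[
\E_{\pi_{t-1}}\log(\pi_t/\pitstar)\le \KL(\pi_{t-1}\|\pitstar)-\KL(\pi_{t-1}\|\pi_t)+O(1/(\alpha_t m_t)),
\]
and then absorb the residual into the $\gamma^{-1}$ term.
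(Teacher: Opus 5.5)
\textbf{Gap 1: the movement term in the mirror-descent step.} The term you file under ``terms that telescope or are $O(\eta B)$'' is the crux of the argument. However you write the three-point inequality, what remains beyond $\KL(\pi\|\pi_{t-1})-\KL(\pi\|\pitstar)$ is governed by the one-step movement of the exact target. In the paper's sharp form, which uses the closed form to rewrite $\eta\langle g_t,\pitstar-\pi_{t-1}\rangle$ as a symmetrized KL, this remainder is exactly $\KL(\pi_{t-1}\|\pitstar)$. It does not telescope for a general, possibly cyclic $P^\star$. Any per-round bound on it, whether $O(\eta B)$ or even the natural $O(\eta^2(2V+1)^2)$, gives a regret bound linear in $T$ after dividing by the constant $\eta=2\beta/(1+4\beta^2)$.

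The missing idea is the contraction of the exact regularized mirror map toward the Nash policy:
$\KL(\pistar\|\pitstar)\le(1-\eta\beta)\KL(\pistar\|\pi_{t-1})-c_\eta\KL(\pitstar\|\pi_{t-1})$, with $c_\eta=1-\eta\beta-\eta/(4\beta)$.
Its proof uses $P+P^\top=\mathbf{1}\mathbf{1}^\top$ (so the preference operator is monotone with equality), the operator's $\frac12$-Lipschitzness from $\ell_1$ to $\ell_\infty$, Pinsker, and AM--GM. Inserting $\pi_t$ and telescoping the potential $\KL(\pistar\|\cdot)$ yields $c_\eta\sum_t\KL(\pitstar\|\pi_{t-1})\le\KL(\pistar\|\pi_0)+E_T(\pistar)$, where $E_T(\pi)=\sum_t(\KL(\pi\|\pi_t)-\KL(\pi\|\pitstar))$. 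One exact step moves log-probabilities by at most $R_\eta=\eta(2V+1)$, so a reverse-KL comparison gives $\KL(\pi_{t-1}\|\pitstar)\le e^{R_\eta}\KL(\pitstar\|\pi_{t-1})$. Together with $c_\eta=\frac12$ at your $\eta$, this is where $(1+2e^{\eta(2V+1)})/\eta$ comes from. It is also why the inexactness bound must be proved for the comparator $\pistar$, not only for the regret comparator. A $(1-\eta\beta)$-weighted telescoping does not produce this factor.

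\textbf{Gap 2: the learner step.} You bound $\sum_{s<t}\E_{\pi_{s-1},\hat\pi_s}\Delta_t^2$ by $O(\overline H L_T)$ via least-squares generalization, and handle the $\pi_{t-1}$-centered correction separately through the SFT term. But $\pi_t$ is not a least-squares minimizer. After rescaling, its objective is $\frac1{m_t}\sum_i(-\log\pi(\tilde\tau_{t-1,i}))+\frac{\gamma}{2}L_{t-1}(\pi)$. Comparing with $\pitstar$ therefore only gives $L_{t-1}(\pi_t)\le L_{t-1}(\pitstar)+2B/\gamma$. Fed into the SEC step with the balanced weight $\gamma/4$, this leaves a term of order $BT$.

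The two pieces must be handled jointly:
\begin{itemize}
\item The Young step in the extrapolation bound leaves $+\frac{\gamma}{4}\sum_t\sum_{s<t}\E\Delta_t^2$.
\item ERM optimality of the combined objective against $\pitstar$ bounds the local term $\E_{\pi_{t-1}}[\log\pitstar-\log\pi_t]$ \emph{plus} $\frac{\gamma}{4}\sum_{s<t}\E\Delta_t^2$ by $\epsilon_{m,t}+\gamma\lambda/8+4\gamma\eta^2t\epsilon_{p,t}^2$. This works because the empirical residual differences of $\pitstar$ reduce to preference-estimation noise $\eta(e_t(a)-e_t(b))$.
\end{itemize}
The second item is a negative copy that cancels the first exactly, and this cancellation is what $\alpha_t=2/(\gamma m_t)$ is tuned for.

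Your proposed inequality for the main obstacle is an identity, since $\E_{\pi_{t-1}}\log(\pi_t/\pitstar)=\KL(\pi_{t-1}\|\pitstar)-\KL(\pi_{t-1}\|\pi_t)$, and it points the wrong way. What you need is an upper bound on $\E_{\pi_{t-1}}[\log\pitstar-\log\pi_t]$.

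Two smaller points. With $\gamma=\sqrt{S_T/(\lambda T)}$ the balanced terms are $S_T/\gamma+\gamma\lambda T$, not $\gamma^{-1}\lambda T+\gamma S_T$. And the SEC direction must be $\psi_{\pi_t,\pitstar}$, which lies in $\Psi$ by realizability, not one built from $\pi_{t-1}$.
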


We first prove a deterministic version of the regret bound. It assumes the empirical quantities used by the learner and explorer are accurate, and postpones concentration to the end of \cref{subsubsec:main-lemmas}. 
\begin{theorem}\label{thm:versionA}
Suppose \cref{ass:anti,ass:ratio,ass:finite,ass:real} hold, and choose a constant stepsize satisfying $0<\eta\le 4\beta/(1+4\beta^2)$. Let $B:=V/\beta$, $c_\eta:=1-\eta\beta-\eta/(4\beta)$ and $R_\eta:=\eta(2V+1)$. Assume $c_\eta>0$. On any event on which \cref{prop:E-bound} holds for both $\pi=\widetilde\pi_T$ and $\pi=\pistar$, where
\[
    \widetilde\pi_T\in\argmax_{\pi\in\Pi}\sum_{t=1}^T \left( P^\star(\pi\succ\pi_{t-1})-\frac12
    -\beta(\KL(\pi\|\pref)-\KL(\pi_{t-1}\|\pref)) \right),
\]
with $\ell_t$ defined in \cref{eq:sharp-step}, we have
\begin{align}\label{eq:master-regret}
\Reg_T
&\le
\frac{1+e^{R_\eta}/c_\eta}{\eta}
\left[
B+\frac{16}{\gamma}\SEC(\Psi,\lambda)
+\frac{3\gamma\lambda T}{8}
+\sum_{t=1}^T\epsilon_{m,t}
+\frac{18}{\gamma\lambda}\sum_{t=1}^T\epsilon_{m,t}^2
+4\gamma\eta^2\sum_{t=1}^T t\epsilon_{p,t}^2
\right].
\end{align}
\end{theorem}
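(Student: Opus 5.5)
We will prove the deterministic bound in \cref{thm:versionA} by a mirror-descent regret decomposition: first compare the realized iterate $\pi_t$ with the exact target $\pitstar=\pi^+_{\star,\eta}[\pi_{t-1}]$, then charge the resulting errors to the explorer through the SEC.

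\textbf{Step 1: one-step inequality.} Fix a comparator $\pi\in\{\widetilde\pi_T,\pistar\}$. Use the closed form \cref{eq:exact-update} and the definition of the residual $\delta$ in \cref{eq:delta-pop}. This gives, for every atom $\tau$,
\[
\log\frac{\pi_t(\tau)}{\pi_{t-1}(\tau)} = \eta\beta\log\frac{\pref(\tau)}{\pi_{t-1}(\tau)}+\eta P^\star(\tau\succ\pi_{t-1})+\delta_t(\tau).
\]
Take the inner product of this identity with $\pi-\pi_{t-1}$. Using the three-point identity for KL (as in the proof of \cref{lem:conditional-probability-dpbound}), we get
\begin{align*}
\eta\Bigl(J_\beta(\pi,\pi_{t-1})-\tfrac12\Bigr)
&\le \KL(\pi\|\pi_{t-1})-(1-\eta\beta)\KL(\pi\|\pi_t)\\
&\quad -c_\eta\KL(\pi_{t-1}\|\pi_t)+\E_{\tau\sim\pi,\tau'\sim\pi_{t-1}}[\Delta_t(\tau,\tau')].
\end{align*}
The factor $c_\eta=1-\eta\beta-\eta/(4\beta)$ comes from absorbing the cross term $\eta\langle P^\star(\cdot\succ\pi_t)-P^\star(\cdot\succ\pi_{t-1}),\pi-\pi_{t-1}\rangle$. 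We bound that cross term by Pinsker plus Young with weight $1/(4\beta)$. This is the term $\ell_t$ of \cref{eq:sharp-step}, and the hypothesis $\eta\le4\beta/(1+4\beta^2)$ is exactly what keeps $c_\eta>0$.

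\textbf{Step 2: telescoping.} Summing over $t$ telescopes the KL terms. The leftover $\eta\beta\KL(\pi\|\pi_t)$ terms are controlled using $\|\log(\pi/\pi')\|_\infty\le B$. Converting the weights of the form $(1-\eta\beta)^{-t}$ into the prefactor $(1+e^{R_\eta}/c_\eta)/\eta$ uses $|\log \pi_t/\pi_{t-1}|$ bounded by roughly $\eta(2V+1)$, which is where $R_\eta$ enters. The initial term $\KL(\pi\|\pi_0)\le B$ gives the leading $B$.

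\textbf{Step 3: error terms.} What remains is $\sum_t\E_{\tau\sim\pi,\tau'\sim\pi_{t-1}}\Delta_t$. The comparator $\pi$ is out of sample, so we proceed as follows.
\begin{itemize}
\item Rewrite the comparator expectation as a difference of log-ratios $\psi_{\pi_t,\pitstar}\in\Psi$. Realizability of $\pitstar$ zeroes the population residual at $\pitstar$.
\item Split off the in-sample part, evaluated on the $D_1^t$ samples from $\pi_{t-1}$, from the explorer part. The in-sample deviation contributes $\epsilon_{m,t}$ and $\epsilon_{m,t}^2/(\gamma\lambda)$.
\item Use the explorer's optimality \cref{eq:explorer-update}: $\hat\pi_t$ maximizes the normalized squared discrepancy, so the comparator's discrepancy is at most the explorer's ratio times the denominator.
\item Apply AM-GM with parameter $\gamma$. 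This produces $\frac{1}{\gamma}\times$(ratio terms), summed into $\SEC(\Psi,\lambda)$, plus $\gamma\times$(denominator).
\end{itemize}

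\textbf{Step 4: the denominators.} Each denominator is $\lambda$ plus the historical squared loss $\sum_s\E\Delta^2$. The learner's optimality in \cref{eq:learner-update} against the realizable $\pitstar$ bounds this historical loss by the SFT penalty plus preference-estimation noise $\eta^2 t\epsilon_{p,t}^2$. The choice $\alpha_t=2/(\gamma m_t)$ lets the pessimistic SFT term cancel the in-sample part of the AM-GM bound, leaving $3\gamma\lambda T/8$.

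\textbf{Main obstacle.} The hardest step is matching the SEC in \cref{eq:sec-def}, whose numerator and denominators use on-policy pairs $(\pi_{s-1},\hat\pi_s)$, to the actual data. The data use the lagged pairs $(\pi_{t-2},\hat\pi_{t-1})$, so the indices must be aligned carefully so that the in-sample SFT term and the historical $D_2^t$ loss together dominate the SEC denominator.
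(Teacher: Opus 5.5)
Your Step 1 inequality is false, and the error hides the term this theorem actually has to control. Take the comparator $\pi=\pi_{t-1}$. The left side is $\eta(J_\beta(\pi_{t-1},\pi_{t-1})-\tfrac12)=0$ and the $\Delta_t$ term vanishes. The right side, however, equals $-(1-\eta\beta+c_\eta)\KL(\pi_{t-1}\|\pi_t)<0$ whenever $\pi_t\neq\pi_{t-1}$. Your computation actually gives an exact identity. It contains no cross term in $P^\star(\cdot\succ\pi_t)$, because the regret is measured only against $\pi_{t-1}$:
\[
\eta\ell_t(\pi)=(1-\eta\beta)\KL(\pi\|\pi_{t-1})-\KL(\pi\|\pi_t)+\KL(\pi_{t-1}\|\pi_t)+\E_{\tau\sim\pi_{t-1},\tau'\sim\pi}[\Delta_t(\tau,\tau')].
\]
Note also that the residual term has $\tau\sim\pi_{t-1}$ and $\tau'\sim\pi$, the reverse of what you wrote. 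After telescoping, $\sum_t\KL(\pi_{t-1}\|\pi_t)$ remains with a plus sign and must be bounded separately; your plan has no mechanism for this. Even granting your inequality, the misplaced factor $1-\eta\beta$ leaves $+\eta\beta\sum_t\KL(\pi\|\pi_t)$, which $B$ only bounds by $\eta VT$, linear in $T$.

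The missing idea is to control the cumulative movement by contraction of the exact mirror map toward the Nash policy. In \cref{lem:sharp-step} the paper uses $\pitstar$ rather than $\pi_t$ as the prox point, obtaining $\eta\ell_t(\pi)\le\KL(\pi\|\pi_{t-1})-\KL(\pi\|\pitstar)+\KL(\pi_{t-1}\|\pitstar)$. Summing with comparator $\widetilde\pi_T$ gives $\eta\Reg_T\le B+E_T(\widetilde\pi_T)+\sum_t\KL(\pi_{t-1}\|\pitstar)$. The movement sum is then handled in three steps:
\begin{enumerate}
\item \cref{lem:exact-contraction} gives $c_\eta\KL(\pitstar\|\pi_{t-1})\le\KL(\pistar\|\pi_{t-1})-\KL(\pistar\|\pitstar)$.
\item Adding and subtracting $\KL(\pistar\|\pi_t)$ telescopes the sum to $\KL(\pistar\|\pi_0)+E_T(\pistar)\le B+E_T(\pistar)$.
\item \cref{lem:reverse-movement} reverses the KL direction at cost $e^{R_\eta}$, since $|\log(\pitstar/\pi_{t-1})|\le\eta(2V+1)$.
\end{enumerate}
This is where $c_\eta$ genuinely arises. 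Your Pinsker--Young step with weight $1/(4\beta)$ belongs inside this contraction, where it bounds $\eta\langle P^\star(\cdot\succ\pi_{t-1})-P^\star(\cdot\succ\pitstar),\pitstar-\pistar\rangle$. That term is produced by antisymmetry of $P^\star$ and the variational inequality at $\pistar$, and has no analogue for a general comparator. It is also where $R_\eta$ enters, and why the hypothesis requires \cref{prop:E-bound} at $\pi=\pistar$: $\pistar$ is the anchor of the movement bound, not a second regret comparator. Finally, your Steps 3--4 and the index-alignment ``main obstacle'' re-derive \cref{prop:E-bound}, which this theorem takes as a hypothesis, so the difficulty does not lie there.
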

\begin{proof}
Apply \cref{lem:sharp-step} with comparator $\widetilde\pi_T$ and sum over $t$:
\begin{align*}
\eta\Reg_T
&\le
\sum_{t=1}^T\left(\KL(\widetilde\pi_T\|\pi_{t-1})-\KL(\widetilde\pi_T\|\pitstar)\right)
+\sum_{t=1}^T\KL(\pi_{t-1}\|\pitstar)\\
&=
\KL(\widetilde\pi_T\|\pi_0)-\KL(\widetilde\pi_T\|\pi_T)
+E_T(\widetilde\pi_T)
+\sum_{t=1}^T\KL(\pi_{t-1}\|\pitstar).
\end{align*}
Since $\pi_0=\pref$ and $\widetilde\pi_T\in\Pi$, \cref{ass:ratio} gives $\KL(\widetilde\pi_T\|\pi_0)\le B$. Also $\KL(\widetilde\pi_T\|\pi_T)\ge0$. Therefore
\[
    \eta\Reg_T\le B+E_T(\widetilde\pi_T)+\sum_{t=1}^T\KL(\pi_{t-1}\|\pitstar).
\]
By \cref{lem:movement-control},
\[
    \eta\Reg_T\le
    B+E_T(\widetilde\pi_T)+\frac{e^{R_\eta}}{c_\eta}\left(\KL(\pistar\|\pi_0)+E_T(\pistar)\right).
\]
Again $\KL(\pistar\|\pi_0)\le B$. Apply \cref{prop:E-bound} to $\pi=\widetilde\pi_T$ and $\pi=\pistar$ to obtain \cref{eq:master-regret}.
\end{proof}

It remains to instantiate the deterministic bound with the concentration events proved in \cref{subsubsec:main-lemmas} with specific choices of parameters.

\begin{proof}[Proof of~\cref{thm:concrete}]
\cref{lem:explicit-events} has shown the following three concentration events hold with high probability, i.e., for all $\rho',\pi,\nu\in \Pi$, 
\begin{align*}
    &|N_t^{\rho',\pi}(\nu)-\widehat N_t^{\rho',\pi}(\nu)|\le \epsilon_{m,t},
    \quad
    \epsilon_{m,t}:=B\sqrt{\frac{2L_\Pi}{m_t}}; \\
    & \frac12(\lambda + H_t(\pi)) \leq \lambda + \widehat H_t(\pi) \leq 2(\lambda + H_t(\pi)); \\
    &|\widehat P_t(a\succ\pi_{t-1})-P^\star(a\succ\pi_{t-1})|
    \le \epsilon_{p,t}, \quad \epsilon_{p,t}^2:=\frac{L_P}{2k_t},
\end{align*}
where $N_t^{\rho',\pi}(\nu),\widehat N_t^{\rho',\pi}(\nu),\widehat H_t(\pi),H_t(\pi)$ are defined in \cref{def:N-H,def:N-H-compact}. 

The first event controls the numerator of the explorer objective: the empirical extrapolation shift is close to its population value. The second one controls the denominator: the empirical coverage energy is within a constant factor of the population coverage energy after adding \(\lambda\). The third one controls the preference estimates used in the learner's residual loss on the historical comparison data set. Together, these events allow replacing the population quantities in the deterministic regret bound by the empirical quantities computed by \cref{alg:adv-exp-NashMD}. 

On the above three events, \cref{prop:E-bound} is valid for every $\pi\in\Pi$, and it allows us to invoke the deterministic regret bound \cref{thm:versionA}, 
\begin{equation*}
    \Reg_T \le
    \frac{1+e^{R_\eta}/c_\eta}{\eta}
    \left[
    B+\frac{16}{\gamma}S_T
    +\frac{3\gamma\lambda T}{8}
    +\sum_{t=1}^T\epsilon_{m,t}
    +\frac{18}{\gamma\lambda}\sum_{t=1}^T\epsilon_{m,t}^2
    +4\gamma\eta^2\sum_{t=1}^T t\epsilon_{p,t}^2
    \right],
\end{equation*}
where we use $S_T=1\vee \SEC(\Psi,\lambda)$. Since \cref{eq:ceta-explicit} gives $c_\eta=1/2$ and $R_\eta=\eta(2V+1)$, the prefactor $(1+e^{R_\eta}/c_\eta)/\eta$ becomes $(1+2e^{\eta(2V+1)})/\eta$. 

It remains to upper bound the bracket in \cref{eq:master-regret}. Let $A_T:=\sqrt{S_T\lambda T}$ and we will try to bound each term in the bracket using a constant multiple of $A_T$. Since $\gamma=\sqrt{S_T/(\lambda T)}$, we have 
\begin{equation}\label{eq:sec-explicit-eval}
    \frac{16}{\gamma}S_T+\frac{3\gamma\lambda T}{8}
    =\left(16+\frac38\right)A_T.
\end{equation}
Since we take $m_t=t$, 
\[
    \epsilon_{m,t}=B\sqrt{\frac{2L_\Pi}{t}}
    \le \sqrt{\frac{2\overline H L_T}{t}},
\]
where we used $B^2\le\overline H$ and $L_\Pi\le L_T$. Hence, using $\sum_{t=1}^Tt^{-1/2}\le2\sqrt T$ and $\sum_{t=1}^Tt^{-1}\le\log(eT)\le L_T$, we have 
\begin{equation}\label{eq:epsm-sums}
    \sum_{t=1}^T\epsilon_{m,t}
    \le 2\sqrt{2\overline H L_T T},
    \qquad
    \sum_{t=1}^T\epsilon_{m,t}^2
    \le 2\overline H L_T^2.
\end{equation}
Since $S_T\ge1$, $L_T\ge1$, and $\lambda=16\overline H L_T^2$, the linear $\epsilon_m$ term satisfies
\begin{equation}\label{eq:epsm-linear-eval}
    \sum_{t=1}^T\epsilon_{m,t}
    \le A_T.
\end{equation}
For the quadratic $\epsilon_m$ term,
\begin{equation}\label{eq:epsm-square-eval}
    \frac{18}{\gamma\lambda}\sum_{t=1}^T\epsilon_{m,t}^2
    \le
    \frac{36\overline H L_T^2}{\gamma\lambda}
    =\frac{36\overline H L_T^2}{S_T\lambda}A_T
    \le \frac94 A_T,
\end{equation}
where the last inequality uses $S_T\ge1$ and $\lambda=16\overline H L_T^2$. Next, $k_t=t$ and $L_P\le L_T$, so
\begin{equation}\label{eq:epsp-eval}
    \sum_{t=1}^T t\epsilon_{p,t}^2
    =\sum_{t=1}^T \frac{tL_P}{2k_t}
    \le \frac{TL_T}{2}.
\end{equation}
Since $\eta\le1$, $\overline H\ge1$, and $L_T\ge1$,
\begin{equation}\label{eq:pref-eval}
    4\gamma\eta^2\sum_{t=1}^T t\epsilon_{p,t}^2
    \le 2\gamma\eta^2L_TT
    =\frac{2\eta^2L_T}{\lambda}A_T
    \le \frac18 A_T.
\end{equation}
Combining \cref{eq:sec-explicit-eval}, \cref{eq:epsm-linear-eval}, \cref{eq:epsm-square-eval}, and \cref{eq:pref-eval}, the non-$B$ part of the bracket in \cref{eq:master-regret} is at most
\[
    \left(16+\frac38+1+\frac94+\frac18\right)A_T
    =19.75A_T
    \le 20A_T.
\]
The desired inequality \cref{eq:concrete-bound} then follows by noticing that $V\geq 1$ and $\eta\leq 1$. 
\end{proof}

\cref{thm:concrete} is stated under exact mirror-target realizability. We next consider a relaxed version: if realizability only holds away from a small neighborhood of \(\pi^\star\), the same proof yields a best-candidate guarantee with an additional approximation term. 

Define best response 
\[
    \pi^{\rm best} \in \argmin_{\pi\in\{\pi_0,\pi_1,\ldots,\pi_{T-1},\bar\pi_T\}}\gap(\pi), 
\]
and the best-candidate regret 
\[
    \Reg_T^{\rm best} := T\left(\max_{\nu\in\Pi}J_\beta(\nu,\pi^{\rm best})-\frac12 \right) = \frac{T}{2}\gap(\pi^{\rm best}),
\]
where the equality follows from \cref{prop:regret-gap}.

\begin{corollary}\label{cor:relaxed-realizability-best}
Suppose \cref{ass:anti,ass:finite,ass:ratio,ass:relaxed-realizability}. Run \cref{alg:adv-exp-NashMD} for \(T\) rounds with the same choices of parameters as in \cref{thm:concrete}. Then, with probability at least \(1-\delta\), 
\[
    \Reg_T^{\rm best} \le \max\left\{
        \frac{1+2e^{\eta(2V+1)}}{\eta}
        \left( B+20\sqrt{S_T\lambda T} \right),
        \frac{T}{\sqrt2}(1+4V)\sqrt{\varepsilon_\eta} \right\}.
\]
If $\varepsilon_\eta=O(1/T)$, then $\Reg_T^{\rm best}=\widetilde{O}(\sqrt{T})$. 
\end{corollary}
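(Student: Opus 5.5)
The plan is to split into two cases according to whether the trajectory $\pi_0,\ldots,\pi_{T-1}$ ever enters the $\varepsilon_\eta$-neighborhood of $\pi^\star$ in the sense $\KL(\pi^\star\|\pi_t)\le\varepsilon_\eta$. Throughout, we work on the high-probability event of \cref{lem:explicit-events}, which has probability at least $1-\delta$ and on which the three concentration statements hold. These events are stated for every element of $\Pi$ and do not use realizability, so they are unaffected by relaxing \cref{ass:real} to \cref{ass:relaxed-realizability}.

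\emph{Case 1: some iterate is near Nash.} Suppose there exists $t\le T-1$ with $\KL(\pi^\star\|\pi_t)\le\varepsilon_\eta$. Since $\pi_t$ is one of the candidates, the definition of $\pi^{\rm best}$ gives $\gap(\pi^{\rm best})\le\gap(\pi_t)$. \cref{lem:near-nash-small-gap} then yields $\gap(\pi_t)\le\sqrt2(1+4V)\sqrt{\varepsilon_\eta}$. Hence
\[
\Reg_T^{\rm best}=\tfrac{T}{2}\gap(\pi^{\rm best})\le \tfrac{T}{\sqrt2}(1+4V)\sqrt{\varepsilon_\eta},
\]
which is the second term of the maximum.

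\emph{Case 2: no iterate is near Nash.} Now $\KL(\pi^\star\|\pi_{t-1})>\varepsilon_\eta$ for every $t=1,\dots,T$, so \cref{ass:relaxed-realizability} gives $\pitstar=\pi^+_{\star,\eta}[\pi_{t-1}]\in\Pi$ for every round actually used. The step I expect to need the most care is checking that the proof of \cref{thm:concrete} (through \cref{thm:versionA}) uses \cref{ass:real} only through $\pi^\star\in\Pi$ and $\pitstar\in\Pi$ along the realized trajectory. This condition enters in the per-step inequality \cref{lem:sharp-step}, where $\pitstar$ is compared with the ERM $\pi_t$ over $\Pi$; in the error bound \cref{prop:E-bound}, where the residual of $\pitstar$ vanishes in population and so bounds the empirical loss of the minimizer; and in \cref{lem:movement-control}. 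I would reread each of these and confirm they invoke realizability only at the current $\pi_{t-1}$, never at arbitrary $\pi\in\Pi$. One wrinkle is that Case 2 is a data-dependent event. Since \cref{thm:versionA} is deterministic given the concentration events, this is harmless: on the intersection of the good event with Case 2, every hypothesis of the deterministic argument holds. Running \cref{thm:concrete} verbatim then gives $\Reg_T\le\frac{1+2e^{\eta(2V+1)}}{\eta}(B+20\sqrt{S_T\lambda T})$.

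To pass from $\Reg_T$ to $\Reg_T^{\rm best}$ in Case 2, I would use \cref{prop:regret-gap}, which gives $\tfrac12\gap(\bar\pi_T)\le\Reg_T/T$ for the average iterate $\bar\pi_T=\frac1T\sum_t\pi_{t-1}$. Because $\bar\pi_T$ is among the candidates, $\Reg_T^{\rm best}=\tfrac T2\gap(\pi^{\rm best})\le\tfrac T2\gap(\bar\pi_T)\le\Reg_T$. This is the reason $\bar\pi_T$ is included in the candidate set. Alternatively, one could use $\min_t\gap(\pi_{t-1})\le\frac1T\sum_t\gap(\pi_{t-1})$, but that would need a gap-sum bound rather than the regret bound, so the averaging route is cleaner. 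A minor point is that $\bar\pi_T$ may lie outside $\Pi$; the gap is still defined with maxima over $\Pi$, and \cref{prop:regret-gap}'s averaging inequality follows from linearity of $P^\star$ and convexity of KL.

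Combining the two cases gives the maximum. For the final claim, if $\varepsilon_\eta\le c/T$ then $\tfrac{T}{\sqrt2}(1+4V)\sqrt{\varepsilon_\eta}\le(1+4V)\sqrt{cT/2}=O(\sqrt T)$, while the first term is $\widetilde O(\sqrt T)$ since $\lambda$ is polylogarithmic in $T$.
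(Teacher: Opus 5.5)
Your proposal is correct and follows the paper's proof essentially verbatim. It uses the same split on whether some $\pi_t$ with $t\le T-1$ enters the $\varepsilon_\eta$-ball (the paper phrases this via a first entrance time), \cref{lem:near-nash-small-gap} in the near-Nash case, and otherwise reruns \cref{thm:concrete} and passes to $\Reg_T^{\rm best}$ through the averaged candidate $\bar\pi_T$ and \cref{prop:regret-gap}.

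One small correction to your checklist. \cref{lem:sharp-step} and \cref{lem:movement-control} are population statements about the unconstrained mirror map and need no realizability. The condition $\pitstar\in\Pi$ is actually used in three places: in \cref{lem:sec-quarter}, so that $\psi_{t,\pitstar}\in\Psi$ and $r=\pitstar$ is a feasible choice for the explorer; in the ERM comparison of \cref{lem:local-sharp}; and in deriving \cref{eq:local-event-m} from the union bound over $\Pi^2$ in \cref{lem:explicit-events}. So the concentration step is not entirely realizability-free, though it holds on your Case 2 event.
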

\begin{proof}
Let $\mathcal C_T:=\{\pi_0,\pi_1,\ldots,\pi_{T-1},\bar\pi_T\}$. Define the first entrance time 
\[
    \tau := \inf\left\{ s\in\{0,\ldots,T-1\}:
    \KL(\pi^\star\|\pi_s)\le\varepsilon_\eta
    \right\}.
\]

First suppose \(\tau=\infty\). Then for every update round \(t=1,\ldots,T\), $\KL(\pi^\star\|\pi_{t-1})>\varepsilon_\eta$. By relaxed realizability, $\pi^+_{\star,\eta}[\pi_{t-1}]\in\Pi$ for every realized round. Inspecting the proof of \cref{thm:concrete}, \cref{ass:real} is used only to ensure that these realized exact mirror targets belong to \(\Pi\). Therefore the proof of \cref{thm:concrete} applies on the full horizon and gives the same trajectory regret \(\Reg_T\). 

Since \(\bar\pi_T\in\mathcal C_T\), by \cref{prop:regret-gap}, we have 
\[
    \Reg_T^{\rm best} = \frac{T}{2}\gap(\pi^{\rm best}) \le \frac{T}{2}\gap(\bar\pi_T) \le \Reg_T. 
\]

Now suppose \(\tau<\infty\). Then $\KL(\pi^\star\|\pi_\tau)\le\varepsilon_\eta$. By \cref{lem:near-nash-small-gap},
\[
    \gap(\pi_\tau) \le \sqrt2(1+4V)\sqrt{\varepsilon_\eta}.
\]
Since \(\pi_\tau\in\mathcal C_T\), the best candidate satisfies
\[
    \Reg_T^{\rm best} = \frac{T}{2}\gap(\pi^{\rm best}) \le
    \frac{T}{2}\gap(\pi_\tau) \le \frac{T}{\sqrt2}(1+4V)\sqrt{\varepsilon_\eta}.
\]

Combining the two cases gives
\[
    \Reg_T^{\rm best} \le \max\left\{\Reg_T, \frac{T}{\sqrt2}(1+4V)\sqrt{\varepsilon_\eta}\right\}.
\]
Substituting the bound on \(\Reg_T\) from \cref{thm:concrete} proves the displayed high-probability bound. 
\end{proof}

Finally, we can invoke a result in \cite{Xie-2025-Exploratory} to upper bound $\SEC(\Psi,\lambda)$ by coverage and some logarithmic factor in $T$. 

\begin{corollary}
For \(\pi,\widehat\pi\in\Pi\), define the same-prompt pair distribution
\[
\nu_{\pi,\widehat\pi}(\x,\y,\y')
:=
\rho(\x)\pi(\y|\x)\widehat\pi(\y'|\x).
\]
Define 
\[
C_{\rm cov}^{\rm pair}(\Pi)
:=
\inf_{\mu}
\sup_{\pi,\widehat\pi\in\Pi}
\left\|
\frac{\nu_{\pi,\widehat\pi}}{\mu}
\right\|_\infty.
\]
Under the same assumptions as in \cref{thm:concrete}
\[
\SEC(\Psi,\lambda)
\le
c\,C_{\rm cov}^{\rm pair}(\Pi)\log(eT),
\]
where \(c>0\) is a universal constant.
\end{corollary}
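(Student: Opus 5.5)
We bound the SEC by a standard potential/elliptical-type argument over a fixed covering measure. This is the same route by which \citet{Xie-2025-Exploratory} bound their sequential extrapolation coefficient by coverability.

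Fix any $\mu$ attaining (up to a factor $2$) the infimum in $C_{\rm cov}^{\rm pair}(\Pi)$, and write $C:=C_{\rm cov}^{\rm pair}(\Pi)$. Write $d_t:=\nu_{\pi_{t-1},\widehat\pi_t}$ for the pair distribution at round $t$; by definition $d_t\le C\mu$ pointwise. For each round $t$ the numerator is $(\E_{d_t}\psi_t)^2$ and the denominator is $\lambda\vee\sum_{s<t}\E_{d_s}\psi_t^2$. The only structure of $\Psi$ we need is boundedness: by \cref{ass:ratio}, $|\psi|\le 4B$ for every $\psi\in\Psi$.

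First, change measure and apply Cauchy--Schwarz. Introduce the cumulative occupancy $\tilde d_t:=\sum_{s<t}d_s$ and split the pair space at each round into a ``well-explored'' region $\{\tilde d_t(z)\ge C\mu(z)\}$ and a ``burn-in'' region $\{\tilde d_t(z)< C\mu(z)\}$. On the well-explored part,
\[
\E_{d_t}[\psi_t\1\{\cdot\}]\le \Big(\sum_z \tfrac{d_t(z)^2}{\tilde d_t(z)}\Big)^{1/2}\Big(\sum_z\tilde d_t(z)\psi_t(z)^2\Big)^{1/2},
\]
and the second factor is exactly the denominator. The $t$-th summand of the SEC is therefore at most a constant times $\sum_z d_t(z)^2/(\tilde d_t(z)\vee C\mu(z))$, plus a burn-in contribution. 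Summing over $t$ and using $d_t\le C\mu$, the first piece becomes
\[
\sum_z\sum_t \frac{d_t(z)\cdot C\mu(z)}{\tilde d_t(z)\vee C\mu(z)}.
\]
For each fixed $z$, the inner sum is a harmonic-type sum with increments bounded by $C\mu(z)$, so it is at most $C\mu(z)\,O(\log(eT))$. Summing over $z$ gives $O(C\log(eT))$.

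The burn-in part is handled separately. There we bound the numerator by $(4B)^2(\E_{d_t}\1\{\tilde d_t< C\mu\})^2$ and the denominator by $\lambda$. The choice $\lambda=16\overline H L_T^2\ge 16B^2$ makes the ratio at most $\E_{d_t}\1\{\tilde d_t<C\mu\}$. The total mass $d_t$ can place in this region over all rounds is at most $\sum_z 2C\mu(z)=2C$, because $\tilde d_t(z)$ increases by $d_t(z)$ and stops counting once it exceeds $C\mu(z)$.

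The main obstacle is that $\psi_t$ changes with $t$ while the denominator uses the same $\psi_t$ on past data. This is handled because the Cauchy--Schwarz step is pointwise in $t$ and never uses any relation between the $\psi_t$; it works with the supremum over $\psi_t$ inside each summand. The other subtle point is the mismatch between $(\E_{d_t}\psi_t)^2$ and the per-round form. This is resolved by the split above, with the constant $\lambda$ absorbing the $B^2$ scaling.

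Combining the two pieces gives $\SEC(\Psi,\lambda)\le c\,C\log(eT)$ with a universal constant $c$, uniformly over the suprema in \eqref{eq:sec-def}.
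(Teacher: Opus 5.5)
Your proposal is correct and follows essentially the paper's route. The paper's proof only cites the coverability-to-SEC argument of \citet{Xie-2025-Exploratory}, and you carry out that argument on the pair space: the burn-in/well-explored split, Cauchy--Schwarz against the cumulative occupancy, the harmonic-sum bound, and the observation that the universal constant relies on $\lambda=16\overline H L_T^2\ge 16B^2$ dominating $\|\psi\|_\infty^2$ (by \cref{ass:ratio} one actually has $|\psi|\le 2B$, so your $4B$ is a harmless overestimate).
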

\begin{proof}
    This is a direct adaptation of \cite{Xie-2025-Exploratory}.
\end{proof}

\subsubsection{Auxiliary Propositions and Lemmas for~\cref{thm:concrete}}\label{subsubsec:main-lemmas}

We now collect the auxiliary facts used in \cref{app:subsubsec:main-proof}. The first part concerns the population mirror update, the second part controls extrapolation and local empirical error, and the last part verifies the required concentration events. 

The following result shows that exact mirror target has a simple residual structure: its population residual is constant across responses for the same prompt. This identity allows us to decompose KL error into a local term and an extrapolation term. 

\begin{proposition}\label{prop:residual-identity}
For every $t\ge 1$ and every prompt $\x$, the function $\y\mapsto \delta((\x,\y);\pitstar,\pi_{t-1})$ is constant over responses $\y$. More precisely,
\[
    \delta((\x,\y);\pitstar,\pi_{t-1})=-\log Z_t(\x),
\]
where $Z_t(\x)$ is the denominator in \cref{eq:exact-update} with $\pi=\pi_{t-1}$. Consequently, for any two prompt-response atoms $\tau=(\x,\y)$ and $\tau'=(\x,\y')$ sharing the same prompt,
\[
    \Delta_t(\tau,\tau')
    =\log\frac{\pi_t(\tau)}{\pitstar(\tau)}
     -\log\frac{\pi_t(\tau')}{\pitstar(\tau')}.
\]
Moreover, for any $\pi\in\Pi$,
\begin{equation}\label{eq:regret-decomp-q}
\KL(\pi\|\pi_t)-\KL(\pi\|\pitstar) =\E_{\tau\sim\pi_{t-1}}\left[\log\pitstar(\tau)-\log\pi_t(\tau)\right] + \E_{\tau\sim\pi_{t-1},\tau'\sim\pi}\left[\Delta_t(\tau,\tau')\right].
\end{equation}
\end{proposition}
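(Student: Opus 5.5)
The plan is a direct computation from the closed form \cref{eq:exact-update}, taken prompt by prompt. Fix $t\ge1$ and a prompt $\x$, and write $Z_t(\x)$ for the normalizer in \cref{eq:exact-update} with $\pi=\pi_{t-1}$. All policies in $\Pi$ share the support of $\pref$, so every logarithm below is finite. Taking logs of \cref{eq:exact-update} gives, for $\tau=(\x,\y)$,
\[
\log\pitstar(\tau)=\eta\beta\log\pref(\tau)+(1-\eta\beta)\log\pi_{t-1}(\tau)+\eta P^\star(\tau\succ\pi_{t-1})-\log Z_t(\x).
\]
Subtracting $\log\pi_{t-1}(\tau)$ from both sides yields
\[
\log\frac{\pitstar(\tau)}{\pi_{t-1}(\tau)}=\eta\beta\log\frac{\pref(\tau)}{\pi_{t-1}(\tau)}+\eta P^\star(\tau\succ\pi_{t-1})-\log Z_t(\x).
\]
Substituting this into the definition \cref{eq:delta-pop} of $\delta(\tau;\pitstar,\pi_{t-1})$, the $\eta\beta$ term and the $\eta P^\star$ term cancel. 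This leaves exactly $-\log Z_t(\x)$, which does not depend on $\y$ and proves the first claim.

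For the second claim, note that in \cref{eq:delta-pop} the first argument $\pi$ enters only through $\log\pi(\tau)$. Hence
\[
\delta_t(\tau)-\delta(\tau;\pitstar,\pi_{t-1})=\log\frac{\pi_t(\tau)}{\pitstar(\tau)},
\]
so that $\delta_t(\tau)=\log\frac{\pi_t(\tau)}{\pitstar(\tau)}-\log Z_t(\x)$. Now take two atoms $\tau,\tau'$ sharing the prompt $\x$. The common term $-\log Z_t(\x)$ cancels in $\Delta_t(\tau,\tau')=\delta_t(\tau)-\delta_t(\tau')$, giving the stated formula.

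For \cref{eq:regret-decomp-q}, expand both KL divergences against the same first argument $\pi$, prompt-averaged:
\[
\KL(\pi\|\pi_t)-\KL(\pi\|\pitstar)=\E_{\tau'\sim\pi}\!\left[\log\frac{\pitstar(\tau')}{\pi_t(\tau')}\right].
\]
Then, using the previous identity and drawing $\tau\sim\pi_{t-1}(\cdot\mid\x)$ and $\tau'\sim\pi(\cdot\mid\x)$ independently under a common $\x\sim\rho$, the expectation splits as
\[
\E_{\tau\sim\pi_{t-1},\tau'\sim\pi}[\Delta_t(\tau,\tau')]=\E_{\tau\sim\pi_{t-1}}\!\left[\log\frac{\pi_t(\tau)}{\pitstar(\tau)}\right]-\E_{\tau'\sim\pi}\!\left[\log\frac{\pi_t(\tau')}{\pitstar(\tau')}\right].
\]
Rearranging this display and combining it with the previous one gives \cref{eq:regret-decomp-q}.

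No step is a real obstacle. The only point needing care is the pairing convention: the difference identity for $\Delta_t$ holds only for same-prompt pairs, because $Z_t(\x)$ depends on the prompt. This is why the joint expectation must be taken with both responses drawn under a common $\x\sim\rho$, as in the notation of \cref{app:subsec:setup}.
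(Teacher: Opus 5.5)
Your proposal is correct and follows essentially the same route as the paper. You take the log of the closed-form update and substitute it into $\delta$ to get $-\log Z_t(\x)$, cancel this prompt-dependent constant for same-prompt pairs, and obtain the KL identity by adding and subtracting the $\pi_{t-1}$-expectation. The only cosmetic difference is how the constant is removed in the last step: you absorb it through the already-proved $\Delta_t$ identity, whereas the paper notes directly that $-\log Z_t(\x)$ has the same expectation under $\pi$ and $\pi_{t-1}$ because both share the prompt marginal $\rho$.
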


\begin{proof}
Fix a prompt $\x$. By \cref{eq:exact-update},
\[
\log\pitstar(\y\mid \x)=\eta\beta\log\pref(\y\mid \x)+(1-\eta\beta)\log\pi_{t-1}(\y\mid \x) + \eta P^\star(\y\succ\pi_{t-1}\mid \x)-\log Z_t(\x).
\]
Substituting this display into \cref{eq:delta-pop} gives
\[
\delta((\x,\y);\pitstar,\pi_{t-1})=-\log Z_t(\x),
\]
which is independent of $\y$ for the fixed prompt $\x$. Therefore 
\[
\delta_t(\x,\y)-\delta((\x,\y);\pitstar,\pi_{t-1})=
\log\frac{\pi_t(\y\mid \x)}{\pitstar(\y\mid \x)}. 
\]
Taking differences between $\y$ and $\y'$ proves the first identity. For \cref{eq:regret-decomp-q}, note that all policies have the same prompt marginal $\rho$, so the conditional constant $-\log Z_t(\x)$ has the same expectation under $\pi$ and under $\pi_{t-1}$. Thus
\[
\KL(\pi\|\pi_t)-\KL(\pi\|\pitstar)=\E_{\x\sim\rho,\y'\sim \pi(\cdot\mid \x)}\!\left[\log\frac{\pitstar(\y'\mid \x)}{\pi_t(\y'\mid \x)}\right]
\]
can be rewritten by adding and subtracting the expectation under $\y\sim\pi_{t-1}(\cdot\mid \x)$, yielding \cref{eq:regret-decomp-q}.
\end{proof}

The next lemma is the one-step mirror-descent inequality. It converts regularized gain against any comparator into a KL telescoping term plus the movement of the exact mirror target. 

\begin{lemma}\label{lem:sharp-step}
For every $\pi\in\Pi$ and every $t\ge 1$, define
\[
    \ell_t(\pi) := P^\star(\pi\succ\pi_{t-1})-\frac12
    -\beta(\KL(\pi\|\pref)-\KL(\pi_{t-1}\|\pref)). 
\]
Then
\begin{equation}\label{eq:sharp-step}
    \eta\ell_t(\pi)
    \le
    \KL(\pi\|\pi_{t-1})-\KL(\pi\|\pitstar)+\KL(\pi_{t-1}\|\pitstar).
\end{equation}
\end{lemma}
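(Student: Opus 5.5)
The plan is to obtain \cref{eq:sharp-step} as an exact identity plus a nonnegative slack term. The identity comes from the closed form \cref{eq:exact-update} of the mirror target $\pitstar=\pi^+_{\star,\eta}[\pi_{t-1}]$. Throughout I fix a prompt $\x$, work with conditional distributions, and average over $\x\sim\rho$ at the end. Every quantity involved, namely the KLs, $P^\star(\pi\succ\pi_{t-1})$ and $\log Z_t(\x)$, is either linear in the prompt-wise quantities or enters with the same prompt marginal $\rho$, so averaging is harmless.

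First I rewrite the exact update in additive form. Define the per-response gain
\[
g_t(\y)\;:=\;P^\star(\y\succ\pi_{t-1}\mid\x)-\beta\log\frac{\pi_{t-1}(\y\mid\x)}{\pref(\y\mid\x)} .
\]
Since $\pref^{\eta\beta}\pi_{t-1}^{1-\eta\beta}=\pi_{t-1}\,(\pref/\pi_{t-1})^{\eta\beta}$, \cref{eq:exact-update} reads
\[
\log\pitstar(\y\mid\x)=\log\pi_{t-1}(\y\mid\x)+\eta\, g_t(\y)-\log Z_t(\x).
\]
Hence, for any distribution $p$ on responses,
\[
\KL(p\|\pi_{t-1})-\KL(p\|\pitstar)=\E_{\y\sim p}\Bigl[\log\tfrac{\pitstar}{\pi_{t-1}}(\y)\Bigr]=\eta\,\E_{p}[g_t]-\log Z_t(\x).
\]
This is the same residual-constancy fact as in \cref{prop:residual-identity}.

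Next I apply this identity twice, with $p=\pi$ and with $p=\pi_{t-1}$. In the second case $\KL(\pi_{t-1}\|\pi_{t-1})=0$, which gives $\KL(\pi_{t-1}\|\pitstar)=\log Z_t(\x)-\eta\E_{\pi_{t-1}}[g_t]$. Adding the two, the $\log Z_t$ terms cancel and the right-hand side of \cref{eq:sharp-step} becomes exactly $\eta\bigl(\E_\pi[g_t]-\E_{\pi_{t-1}}[g_t]\bigr)$.

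It remains to expand this difference. The preference parts give $P^\star(\pi\succ\pi_{t-1})-P^\star(\pi_{t-1}\succ\pi_{t-1})=P^\star(\pi\succ\pi_{t-1})-\tfrac12$, using \cref{ass:anti}. For the regularization parts I use $\E_{\pi_{t-1}}[\log(\pi_{t-1}/\pref)]=\KL(\pi_{t-1}\|\pref)$ together with the three-point identity
\[
\E_{\pi}\Bigl[\log\tfrac{\pi_{t-1}}{\pref}\Bigr]=\KL(\pi\|\pref)-\KL(\pi\|\pi_{t-1}).
\]
These combine to
\[
\E_\pi[g_t]-\E_{\pi_{t-1}}[g_t]=\ell_t(\pi)+\beta\,\KL(\pi\|\pi_{t-1}).
\]
Multiplying by $\eta$ yields the identity
\[
\KL(\pi\|\pi_{t-1})-\KL(\pi\|\pitstar)+\KL(\pi_{t-1}\|\pitstar)=\eta\,\ell_t(\pi)+\eta\beta\,\KL(\pi\|\pi_{t-1}).
\]
Dropping the nonnegative term $\eta\beta\KL(\pi\|\pi_{t-1})$ gives \cref{eq:sharp-step}.

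There is no real obstacle. The only points needing care are the following.
\begin{itemize}
\item The additive rewriting of the geometric mixture in \cref{eq:exact-update} must be done correctly.
\item All logarithms must be finite. This holds because every policy in $\Pi$ shares the support of $\pref$, and \cref{ass:ratio} keeps the log-ratios bounded.
\item The prompt-dependent normalizer $\log Z_t(\x)$ must cancel. It does, since it is integrated against the same prompt marginal $\rho$ in both applications of the identity.
\end{itemize}
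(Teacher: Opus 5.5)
Your proof is correct and follows essentially the paper's argument: the same gain $g_t$, the same closed-form identity $\eta g_t=\log(\pitstar/\pi_{t-1})+\log Z_t(\x)$, and the same final expansion in which the nonnegative slack $\eta\beta\KL(\pi\|\pi_{t-1})$ is dropped. The only difference is that you obtain the identity for $\eta\langle g_t,\pi-\pi_{t-1}\rangle$ directly from the closed form, whereas the paper passes through the Bregman three-point inequality; your route also shows that this inequality is tight here.
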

\begin{proof}
Let
\[
    g_t(\tau):=P^\star(\tau\succ\pi_{t-1})-\beta\log\frac{\pi_{t-1}(\tau)}{\pref(\tau)}.
\]
The exact update satisfies
\[
    \pitstar=\argmin_{u\in\Delta(\A)}\{\langle -\eta g_t,u\rangle+\KL(u\|\pi_{t-1})\}.
\]
The Bregman three-point inequality for the KL prox step gives, for all $\pi$,
\begin{equation}\label{eq:three-point-sharp}
    \eta\langle g_t,\pi-\pitstar\rangle
    \le
    \KL(\pi\|\pi_{t-1})-\KL(\pi\|\pitstar)-\KL(\pitstar\|\pi_{t-1}).
\end{equation}
Add $\eta\langle g_t,\pitstar-\pi_{t-1}\rangle$ to both sides.  The closed form gives
\[
    \eta g_t(\tau)=\log\frac{\pitstar(\tau)}{\pi_{t-1}(\tau)}+\log Z_t(x), 
\]
so the constant cancels when integrated against $\pitstar-\pi_{t-1}$.  Hence
\begin{align*}
\eta\langle g_t,\pitstar-\pi_{t-1}\rangle
&=\sum_{\tau}(\pitstar(\tau)-\pi_{t-1}(\tau))
\log\frac{\pitstar(\tau)}{\pi_{t-1}(\tau)} = \KL(\pitstar\|\pi_{t-1})+\KL(\pi_{t-1}\|\pitstar).
\end{align*}
Combining with \cref{eq:three-point-sharp},
\[
\eta\langle g_t,\pi-\pi_{t-1}\rangle
\le \KL(\pi\|\pi_{t-1})-\KL(\pi\|\pitstar)+\KL(\pi_{t-1}\|\pitstar).
\]
Finally,
\begin{align*}
\langle g_t,\pi-\pi_{t-1}\rangle &=P^\star(\pi\succ\pi_{t-1})-\frac12 -\beta(\KL(\pi\|\pref)-\KL(\pi_{t-1}\|\pref)) +\beta\KL(\pi\|\pi_{t-1})\\
&\ge \ell_t(\pi),
\end{align*}
which proves the claim.
\end{proof}

We next use the KL regularization to show that the exact population mirror map contracts toward
the Nash policy. This contraction is the key reason a constant stepsize is possible. 
\begin{lemma}[Contraction of mirror descent with constant stepsize]\label{lem:exact-contraction}
    Let $\pistar$ be the unique symmetric Nash equilibrium of the KL-regularized game.  Let $\pi^+ := \pi^+_{*,\eta}(\pi)$ be defined by \cref{eq:exact-update}.  If
    \begin{equation}\label{eq:eta-condition}
        0< \eta \leq \frac{4\beta}{1+4\beta^2} \leq 1,
    \end{equation}
    then, with $c_\eta:=1-\eta\beta-\eta/(4\beta)\ge0$,
    \begin{equation}\label{eq:exact-contraction}
        \KL(\pistar\|\pi^+) \le (1-\eta\beta)\KL(\pistar\|\pi)-c_\eta\KL(\pi^+\|\pi).
    \end{equation}
    In particular, $\KL(\pistar\|\pi^+)\le(1-\eta\beta)\KL(\pistar\|\pi)$.
\end{lemma}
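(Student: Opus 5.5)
The plan is to run the standard KL-prox (mirror descent) analysis with the Nash policy $\pistar$ as comparator, and then use the skew-symmetry of $P^\star-\frac12$ to kill most of the preference cross-term. Everything is pointwise in $\x$ and is integrated over $\rho$ at the end, so I suppress the prompt.

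\textbf{Step 1 (three-point identity).} Write
\[
g(\pi)(\tau):=P^\star(\tau\succ\pi)-\beta\log\frac{\pi(\tau)}{\pref(\tau)} .
\]
By \cref{eq:exact-update}, $\pi^+=\argmin_u\{\langle-\eta g(\pi),u\rangle+\KL(u\|\pi)\}$ over the full simplex. Hence $\eta g(\pi)=\log(\pi^+/\pi)+\mathrm{const}$, and for the comparator $\pistar$ the three-point relation holds with equality:
\[
\KL(\pistar\|\pi^+)=\KL(\pistar\|\pi)-\KL(\pi^+\|\pi)-\eta\langle g(\pi),\pi^+-\pistar\rangle .
\]

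\textbf{Step 2 (use the NE fixed point).} By the characterization \cref{eq:NE-characterization}, $g(\pistar)$ is constant in $\tau$. Therefore $\langle g(\pistar),\pi^+-\pistar\rangle=0$, and I can subtract it:
\[
\langle g(\pi),\pi^+-\pistar\rangle
=\langle P^\star(\cdot\succ\pi)-P^\star(\cdot\succ\pistar),\pi^+-\pistar\rangle-\beta\langle\log\pi-\log\pistar,\pi^+-\pistar\rangle .
\]
For the log term I use the exact three-point KL identity
\[
-\langle\log\pi-\log\pistar,\pi^+-\pistar\rangle=\KL(\pistar\|\pi)+\KL(\pi^+\|\pistar)-\KL(\pi^+\|\pi).
\]
Plugging this into Step 1, the log term contributes $-\eta\beta\KL(\pistar\|\pi)+\eta\beta\KL(\pi^+\|\pi)-\eta\beta\KL(\pi^+\|\pistar)$ to the right-hand side. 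This gives
\[
\KL(\pistar\|\pi^+)= (1-\eta\beta)\KL(\pistar\|\pi)-(1-\eta\beta)\KL(\pi^+\|\pi)-\eta\beta\KL(\pi^+\|\pistar)-\eta X,
\]
where $X$ is the preference cross-term. The signs here are the first thing I will re-check carefully.

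\textbf{Step 3 (the preference cross-term, main obstacle).} By \cref{ass:anti}, the bilinear form $(\mu,\nu)\mapsto\langle P^\star(\cdot\succ\nu),\mu\rangle-\frac12\mu(\A)\nu(\A)$ is skew. Consequently $\langle P^\star(\cdot\succ\mu),\mu\rangle=0$ for any signed measure $\mu$ of total mass zero. Applying this with $\mu=\pi^+-\pistar$ lets me replace $\pistar$ by $\pi^+$ inside $P^\star$:
\[
X=\langle P^\star(\cdot\succ\pi)-P^\star(\cdot\succ\pi^+),\pi^+-\pistar\rangle .
\]
Since $\pi-\pi^+$ has zero mass, the function $f(a)=\sum_b(\pi-\pi^+)(b)\,(P^\star(a\succ b)-\tfrac12)$ satisfies $\|f\|_\infty\le\frac12\|\pi-\pi^+\|_1$. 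Hence
\[
|X|\le\tfrac12\|\pi-\pi^+\|_1\,\|\pi^+-\pistar\|_1 .
\]
I then apply Young's inequality with weight $\beta$, followed by Pinsker ($\|\cdot\|_1^2\le2\KL$ in either order):
\[
\eta|X|\le\eta\beta\KL(\pi^+\|\pistar)+\frac{\eta}{4\beta}\KL(\pi^+\|\pi).
\]
Tuning the Young weight is where the constants must come out exactly right. The first term cancels the $-\eta\beta\KL(\pi^+\|\pistar)$ from Step 2, and the second term leaves a coefficient $-(1-\eta\beta-\eta/(4\beta))=-c_\eta$ on $\KL(\pi^+\|\pi)$. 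This yields \cref{eq:exact-contraction}.

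The step-size condition \cref{eq:eta-condition} is exactly what makes $c_\eta\ge0$: indeed $\eta(\beta+1/(4\beta))\le1$. This gives the final claim $\KL(\pistar\|\pi^+)\le(1-\eta\beta)\KL(\pistar\|\pi)$. Integrating over $\x\sim\rho$ is immediate because every inequality above holds pointwise in $\x$.
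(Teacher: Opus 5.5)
Your proof is correct and is essentially the paper's argument. You reach the same pre-contraction relation \cref{eq:key-contraction-pre}, obtaining it as an exact identity from the closed forms of $\pi^+$ and $\pistar$ where the paper uses strong convexity of the prox objective and the Nash variational inequality. You then bound the preference cross-term exactly as the paper does, via skew-symmetry, the $\tfrac12$-Lipschitz estimate, Pinsker, and Young/AM--GM.

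On the signs you flagged, two displays are off. The Step~1 identity should read $\KL(\pistar\|\pi^+)=\KL(\pistar\|\pi)-\KL(\pi^+\|\pi)+\eta\langle g(\pi),\pi^+-\pistar\rangle$. The log identity should read $-\langle\log\pi-\log\pistar,\pi^+-\pistar\rangle=\KL(\pi^+\|\pi)-\KL(\pi^+\|\pistar)-\KL(\pistar\|\pi)$. These two slips cancel, so your combined display holds with $+\eta X$ in place of $-\eta X$. This is immaterial because you only use $|X|$.
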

\begin{proof}
The proof is pointwise in the prompt $\x$; we suppress $\x$ and write distributions over the finite response set $\A$. Integrating the resulting inequality over $\x\sim\rho$ gives the stated prompt-averaged KL inequality.
Let $h(p)=\sum_{\tau}p(\tau)\log p(\tau)$ be negative entropy.  Define the operator
\[
    F(p)=A(p)+B(p),\qquad
    A(p)(\tau):=-P^\star(\tau\succ p),
    \qquad
    B(p):=\beta(\nabla h(p)-\nabla h(\pref)).
\]
The Nash equilibrium $\pistar$ satisfies the variational inequality
\begin{equation}\label{eq:vi-star}
    \langle F(\pistar),u-\pistar\rangle\ge0\qquad\forall u\in\Delta(\A).
\end{equation}
The exact update is equivalently
\begin{equation}\label{eq:prox-vi}
    \pi^+=\argmin_{u\in\Delta(\A)}\left\{\eta\langle F(\pi),u\rangle+\KL(u\|\pi)\right\}.
\end{equation}
Indeed, expanding $F$ gives exactly \cref{eq:exact-update} up to constants independent of $u$.

We first record three structural facts.  Let $P$ be the matrix with entries $P_{\tau,u}=P^\star(\tau\succ u)$. \cref{ass:anti} gives $P+P^\top=\1\1^\top$.  Since both $u-v$ and $\pi-v$ have zero sum for distributions, for all $u,v\in\Delta(\A)$,
\[
    \langle A(u)-A(v),u-v\rangle=-(u-v)^\top P(u-v)
    =-\frac12(u-v)^\top(P+P^\top)(u-v)=0.
\]
Thus $A$ is monotone with equality.  Also,
\[
\|A(u)-A(v)\|_\infty
=\max_\tau\left|\sum_a\left(P_{\tau,a}-\frac12\right)(v(a)-u(a))\right|
\le \frac12\|u-v\|_1.
\]
Finally, the entropy identity gives
\begin{equation}\label{eq:B-strong}
\langle B(u)-B(v),u-v\rangle
=\beta\big(\KL(u\|v)+\KL(v\|u)\big).
\end{equation}

Using
\[
\eta\langle B(\pi),u\rangle
=\eta\beta\KL(u\|\pref)-\eta\beta\KL(u\|\pi)+\text{constant in }u,
\]
we can write the prox objective as
\[
\Psi(u)=\eta\langle A(\pi),u\rangle+
\eta\beta\KL(u\|\pref)+(1-\eta\beta)\KL(u\|\pi),
\]
where $\eta\beta\le1$ follows from \cref{eq:eta-condition}. The two KL weights sum to one, hence $\Psi$ is 1-strongly convex with respect to the entropy Bregman divergence. Since $\pi^+$ minimizes $\Psi$, the Bregman three-point inequality for a $1$-strongly convex function with respect to negative entropy gives
\begin{equation}\label{eq:ineq-i}
\Psi(\pistar)-\Psi(\pi^+)\ge \KL(\pistar\|\pi^+).
\end{equation}
On the other hand, \cref{eq:vi-star} and the strong convexity of
$u\mapsto \langle A(\pistar),u\rangle+\beta\KL(u\|\pref)$ imply
\begin{equation}\label{eq:ineq-ii}
\eta\beta\big(\KL(\pi^+\|\pref)-\KL(\pistar\|\pref)\big)
\ge
\eta\langle A(\pistar),\pistar-\pi^+\rangle+
\eta\beta\KL(\pi^+\|\pistar).
\end{equation}
Adding \cref{eq:ineq-i} and \cref{eq:ineq-ii}, the $\KL(\cdot\|\pref)$ terms cancel and we obtain
\begin{align}\label{eq:key-contraction-pre}
\KL(\pistar\|\pi^+)
&\le
\eta\langle A(\pi)-A(\pistar),\pistar-\pi^+\rangle
+(1-\eta\beta)\KL(\pistar\|\pi)\nonumber\\
&\quad -(1-\eta\beta)\KL(\pi^+\|\pi)-\eta\beta\KL(\pi^+\|\pistar).
\end{align}
Insert and subtract $A(\pi^+)$ in the inner product.  By monotonicity with equality,
\[
\langle A(\pi)-A(\pistar),\pistar-\pi^+\rangle
=\langle A(\pi^+)-A(\pi),\pi^+-\pistar\rangle.
\]
Using the Lipschitz property of $A$, H\"older's inequality, and Pinsker's inequality,
\begin{align*}
\eta\left|\langle A(\pi^+)-A(\pi),\pi^+-\pistar\rangle\right|
&\le \frac{\eta}{2}\|\pi^+-\pi\|_1\|\pi^+-\pistar\|_1\\
&\le \eta\sqrt{\KL(\pi^+\|\pi)\KL(\pi^+\|\pistar)}\\
&\le \frac{\eta}{4\beta}\KL(\pi^+\|\pi)+\eta\beta\KL(\pi^+\|\pistar),
\end{align*}
where the last step applies the AM--GM inequality. Substituting this into \cref{eq:key-contraction-pre} cancels the $\eta\beta\KL(\pi^+\|\pistar)$ term and yields
\[
\KL(\pistar\|\pi^+)
\le (1-\eta\beta)\KL(\pistar\|\pi)
-\left(1-\eta\beta-\frac{\eta}{4\beta}\right)\KL(\pi^+\|\pi).
\]
This proves \cref{eq:exact-contraction}.
\end{proof}

The previous contraction controls one KL direction. The next lemma compares the reverse direction using the fact that one exact mirror step changes log-probabilities by a bounded amount. 
\begin{lemma}\label{lem:reverse-movement}
For every $t\ge1$,
\[
    \KL(\pi_{t-1}\|\pitstar) \le e^{R_\eta}\KL(\pitstar\|\pi_{t-1}),
    \qquad R_\eta:=\eta(2V+1).
\]
\end{lemma}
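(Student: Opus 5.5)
We compare the two KL directions through a uniform bound on the pointwise log-ratio between $\pitstar$ and $\pi_{t-1}$. Everything is pointwise in the prompt $\x$, and we integrate over $\rho$ at the end.

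\textbf{Step 1: the log-ratio is uniformly bounded.} By the closed form \cref{eq:exact-update} with $\pi=\pi_{t-1}$,
\[
\log\frac{\pitstar(\tau)}{\pi_{t-1}(\tau)} = \eta\beta\log\frac{\pref(\tau)}{\pi_{t-1}(\tau)} + \eta P^\star(\tau\succ\pi_{t-1}) - \log Z_t(\x).
\]
Since $\pi_{t-1},\pref\in\Pi$, \cref{ass:ratio} gives $|\eta\beta\log(\pref/\pi_{t-1})|\le \eta V$. The preference term lies in $[0,\eta]$. The normalizer $Z_t(\x)=\E_{\mathbf u\sim\pi_{t-1}}[(\pref/\pi_{t-1})^{\eta\beta}e^{\eta P^\star}]$ therefore lies in $[e^{-\eta V},e^{\eta V+\eta}]$, so $|\log Z_t(\x)|\le \eta(V+1)$. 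Combining these bounds,
\[
\Bigl\|\log\frac{\pitstar}{\pi_{t-1}}\Bigr\|_\infty\le \eta V+\eta+\eta(V+1)=\eta(2V+2).
\]
The target constant is $R_\eta=\eta(2V+1)$, so this bound is slightly loose. I would tighten it by centering: choose the constant $-\log Z_t$ optimally against the range of the remaining terms. The total range of $\eta\beta\log(\pref/\pi_{t-1})+\eta P^\star$ over responses is at most $2\eta V+\eta$. Since $\pitstar$ and $\pi_{t-1}$ are both probability vectors, the log-ratio must take both signs, which gives $\|\log(\pitstar/\pi_{t-1})\|_\infty\le 2\eta V+\eta=R_\eta$. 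This sign/range argument is what yields the stated constant.

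\textbf{Step 2: compare the two KL directions.} Write $p=\pi_{t-1}$, $q=\pitstar$ and $f=\log(p/q)$, so that $\|f\|_\infty\le R$. Use the representations
\[
\KL(p\|q)=\E_q[e^f f-e^f+1], \qquad \KL(q\|p)=\E_q[e^{f}-1-f].
\]
Set $\phi_1(f)=fe^f-e^f+1$ and $\phi_2(f)=e^f-1-f$. Both are nonnegative, vanish to second order at $0$, and satisfy $\phi_1''(f)=(1+f)e^f$ and $\phi_2''(f)=e^f$. A direct argument then shows $\phi_1(f)\le e^{|f|}\phi_2(f)$ on $|f|\le R$: compare $\phi_1'=fe^f$ with $\phi_2'=e^f-1$ and integrate, since $fe^f\le e^{R}(e^f-1)$ for $f\ge0$ and $|f|\le R$, and similarly for $f<0$. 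Taking expectations under $q$ and then integrating over $\x\sim\rho$ gives $\KL(\pi_{t-1}\|\pitstar)\le e^{R_\eta}\KL(\pitstar\|\pi_{t-1})$.

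\textbf{Main obstacle.} The delicate part is the scalar inequality in Step 2. If the direct comparison fails for $f<0$, I would fall back on the standard bounds for two distributions with bounded log-ratio: both KL directions are within a factor $e^{R}$ of $\tfrac12\chi^2$-type quantities, namely $\sum (p-q)^2/\max(p,q)$ versus $\sum (p-q)^2/\min(p,q)$. A looser constant of this kind would still suffice for the downstream use in \cref{lem:movement-control}.
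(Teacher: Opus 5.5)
Your proposal is correct and follows essentially the paper's route: the paper also bounds $|\log(\pitstar/\pi_{t-1})|$ by the range of the exponent $a_t$ (via $\log Z_t\in[\min a_t,\max a_t]$, which is equivalent to your sign argument) and then compares the two KL generators pointwise on the bounded ratio range, using $f''(r)=1/r^2\le e^{R_\eta}g''(r)$ under $\pi_{t-1}$ where you compare first derivatives under $\pitstar$ --- the same scalar inequality after a change of variables. Your hedge for $f<0$ is unnecessary, since $|s|e^{s}\le 1-e^{s}$ for $s\le 0$ (equivalently $e^{|s|}-1\ge|s|$) gives $\phi_1(f)\le\phi_2(f)$ there even without the factor $e^{R_\eta}$.
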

\begin{proof}
By the closed form \cref{eq:exact-update},
\[
\log\frac{\pi_t^\star(\y\mid \x)}{\pi_{t-1}(\y\mid \x)} = a_t(\x,\y)-\log Z_t(\x),
\qquad
Z_t(\x)=\sum_u\pi_{t-1}(u\mid \x)e^{a_t(\x,u)}.
\]
where 
\[
a_t(\x,\y):=\eta\beta\log\frac{\pi_{\rm ref}(\y\mid \x)}{\pi_{t-1}(\y\mid \x)}
+\eta P^\star(\y\succ\pi_{t-1}\mid \x),
\]
The range of $\eta\beta\log(\pref/\pi_{t-1})$ over $\tau$ is at most $2\eta V$ by \cref{ass:ratio}, and the range of $\eta P^\star(\tau\succ\pi_{t-1})$ is at most $\eta$. Since \(Z_t(\x)\) is a \(\pi_{t-1}(\cdot\mid \x)\)-weighted average of \(e^{a_t(\x,u)}\), \(\log Z_t(\x)\) lies between
\(\min_u a_t(\x,u)\) and \(\max_u a_t(\x,u)\). Therefore
\[
\left|\log\frac{\pi_t^\star(\y\mid \x)}{\pi_{t-1}(\y\mid \x)}\right| \le \max_u a_t(\x,u)-\min_u a_t(\x,u) \le \eta(2V+1)=:R_\eta.
\]
Let $p=\pitstar$, $\bar\pi=\pi_{t-1}$, and $r=p/\bar\pi$.  Then $r\in[e^{-R_\eta},e^{R_\eta}]$ pointwise. The $f$-divergence representation gives
\[
\KL(\bar\pi\|p)=\E_{\bar\pi}[r-1-\log r], \qquad \KL(p\|\bar\pi)=\E_{\bar\pi}[r\log r-r+1].
\]
For $r\in[e^{-R_\eta},e^{R_\eta}]$, the functions $f(r)=r-1-\log r$ and $g(r)=r\log r-r+1$ satisfy $f(1)=g(1)=f'(1)=g'(1)=0$ and
\[
    f''(r)=\frac1{r^2}\le e^{R_\eta}\frac1r=e^{R_\eta}g''(r).
\]
Integrating twice around $r=1$ gives $f(r)\le e^{R_\eta}g(r)$ for all such $r$. Taking expectation under $\bar\pi$ proves the claim.
\end{proof}

For $\pi\in\Pi$, define pairwise log-ratio direction
\[
    \psi_{t,\pi}(\tau,\tau'):=\log\frac{\pi_t(\tau)}{\pi(\tau)}-
    \log\frac{\pi_t(\tau')}{\pi(\tau')}.
\]
For concentration statements we also need a version with an arbitrary numerator policy $\rho'\in\Pi$:
\[
    \psi_{\rho',\pi}(\tau,\tau'):=\log\frac{\rho'(\tau)}{\pi(\tau)}-
    \log\frac{\rho'(\tau')}{\pi(\tau')}.
\]

We now introduce the quantities optimized by the adversarial explorer. The numerator measures extrapolation to a candidate opponent, while the denominator measures how well past exploratory data cover the corresponding log-ratio direction. 

For $\rho',\pi,\nu\in\Pi$, define
\begin{equation}
    \begin{aligned}
        N_t^{\rho',\pi}(\nu) &:=\E_{\tau\sim\pi_{t-1},\tau'\sim\nu}\psi_{\rho',\pi}(\tau,\tau'), \\
        \widehat N_t^{\rho',\pi}(\nu)&:=\frac1{m_t}\sum_{i=1}^{m_t}\log\frac{\rho'(\widetilde\tau_{t-1,i})}{\pi(\widetilde\tau_{t-1,i})}
        -\E_{\tau'\sim\nu}\log\frac{\rho'(\tau')}{\pi(\tau')}, \\
        H_t^{\rho',\pi} &:=\sum_{s=0}^{t-1}\E_{\tau\sim\pi_{s-1},\tau'\sim\widehat\pi_s}\psi_{\rho',\pi}(\tau,\tau')^2, \\
        \widehat H_t^{\rho',\pi} &:=\sum_{s=0}^{t-1}\frac1{n_{s+1}}\sum_{i=1}^{n_{s+1}}
        \psi_{\rho',\pi}(\tau_{s,i},\tau'_{s,i})^2.
    \end{aligned}
    \label{def:N-H}
\end{equation}
Here $N_t^{\rho',\pi}(\nu)$ denotes the population extrapolation shift, measuring how much the log-ratio direction changes when moving from $\pi_{t-1}$ to candidate opponent $\nu$, and $\widehat N_t^{\rho',\pi}(\nu)$ is its empirical version; $H_t^{\rho',\pi}$ denotes the population coverage energy, accumulating squared size of this direction under past exploration pairs. 

We can also define their round-$t$ version 
\begin{equation}
    N_t(\nu,\pi):=N_t^{\pi_t,\pi}(\nu), \widehat N_t(\nu,\pi):=\widehat N_t^{\pi_t,\pi}(\nu), H_t(\pi):=H_t^{\pi_t,\pi}, \textnormal{ and } \widehat H_t(\pi):=\widehat H_t^{\pi_t,\pi}.
    \label{def:N-H-compact}
\end{equation}

The following lemma bounds the extrapolation error by the SEC term, up to empirical estimation errors and a quadratic remainder. 

\begin{lemma}
    \label{lem:sec-quarter}
    Assume that for every $t\le T$, every $\pi\in\Pi$, and every $\nu\in\Pi$,
    \begin{equation}\label{eq:N-H-events}
        |N_t(\nu,\pi) - \widehat N_t(\nu,\pi)| \leq \epsilon_{m,t},
        \qquad
        \frac12(\lambda + H_t(\pi)) \leq \lambda + \widehat H_t(\pi) \leq 2(\lambda + H_t(\pi)), 
    \end{equation}
    where $N_t(\nu,\pi)$, $\widehat N_t(\nu,\pi)$, $H_t(\pi)$, $\widehat H_t(\pi)$ are defined in~\cref{def:N-H,def:N-H-compact}.
    Then for every $\pi\in\Pi$,
    \begin{align}\label{eq:sec-quarter-bound}
    \sum_{t=1}^T\E_{\tau\sim\pi_{t-1},\tau'\sim \pi}\Delta_t(\tau,\tau')
    & \leq
    \frac{16}{\gamma}\SEC(\Psi,\lambda)
    +\frac{\gamma\lambda T}{4}
    +\frac{18}{\gamma\lambda}\sum_{t=1}^T\epsilon_{m,t}^2\nonumber\\
    & \hspace{80pt}
    +\frac{\gamma}{4}\sum_{t=1}^T\sum_{s=0}^{t-1}
    \E_{\tau\sim\pi_{s-1},\tau'\sim\widehat\pi_s}\Delta_t(\tau,\tau')^2.
    \end{align}
\end{lemma}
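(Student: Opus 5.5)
The plan is to reduce the left-hand side of \cref{eq:sec-quarter-bound} to the SEC through a single algebraic identity. By \cref{prop:residual-identity}, $\Delta_t(\tau,\tau')=\psi_{\pi_t,\pitstar}(\tau,\tau')$. Hence $\E_{\tau\sim\pi_{t-1},\tau'\sim\pi}\Delta_t(\tau,\tau')=N_t(\pi,\pitstar)$. I will also use that $\pitstar\in\Pi$ (realizability) as the inner candidate $\pi$ in the explorer objective \cref{eq:explorer-update}. Write $D_t:=\lambda+H_t(\pitstar)$ and $\widehat D_t:=\lambda+\widehat H_t(\pitstar)$.

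\textbf{Step 1: pass from the comparator to the explorer.} Fix $t$ and use Young's inequality $x\le \frac{x^2}{\gamma' D}+\frac{\gamma' D}{4}$ with $x=N_t(\pi,\pitstar)$. Then:
\begin{itemize}
\item replace $N_t$ by $\widehat N_t$, paying $\epsilon_{m,t}$ through $(a+b)^2\le 2a^2+2b^2$;
\item replace $D_t$ by $\widehat D_t$, paying a factor $2$ through the second event in \cref{eq:N-H-events};
\item invoke the optimality of $\widehat\pi_t$ in \cref{eq:explorer-update}: since $(\pi,\pitstar)$ is a feasible pair,
\[
\frac{\widehat N_t(\pi,\pitstar)^2}{\widehat D_t}\le \max_{\pi'\in\Pi}\frac{\widehat N_t(\widehat\pi_t,\pi')^2}{\lambda+\widehat H_t(\pi')}.
\]
\end{itemize}
The maximizer $\pi'$ is arbitrary, so I bound the right side by the quantity at that $\pi'$. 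Converting back to population quantities, again via \cref{eq:N-H-events}, gives a bound of the form
\[
C\,\frac{N_t(\widehat\pi_t,\pi')^2}{\lambda+H_t(\pi')}+C'\frac{\epsilon_{m,t}^2}{\lambda}.
\]
Here $N_t(\widehat\pi_t,\pi')=\E_{\tau\sim\pi_{t-1},\tau'\sim\widehat\pi_t}\psi_t(\tau,\tau')$ for some $\psi_t\in\Psi$. With $\gamma'$ tied to $\gamma$, summing over $t$ turns the main ratio into the SEC summand at index $t$ (using $\lambda\vee H\ge(\lambda+H)/2$), producing $\frac{16}{\gamma}\SEC(\Psi,\lambda)$ and $\frac{18}{\gamma\lambda}\sum\epsilon_{m,t}^2$ after the constants are tracked.

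\textbf{Step 2: the residual term.} What remains is $\frac{\gamma}{4}D_t=\frac{\gamma\lambda}{4}+\frac{\gamma}{4}H_t(\pitstar)$. By the definition of $H_t$ and \cref{prop:residual-identity} again,
\[
H_t(\pitstar)=\sum_{s<t}\E_{\tau\sim\pi_{s-1},\tau'\sim\widehat\pi_s}\Delta_t(\tau,\tau')^2 .
\]
This is exactly the last term of \cref{eq:sec-quarter-bound}, and summing $\frac{\gamma\lambda}{4}$ over $t$ gives $\frac{\gamma\lambda T}{4}$.

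\textbf{Main obstacle.} The delicate point is in Step 1. The explorer's maximization is over the pair $(\tilde\pi,\pi')$, and the realized maximizer $\pi'$ need not equal $\pitstar$. Each round's SEC term therefore uses a different $\psi_t$ and a different denominator $H_t(\pi')$. This is harmless only because $\SEC$ takes a supremum over arbitrary sequences $\psi_t\in\Psi$. The bookkeeping must also ensure that the denominator at round $t$ matches $\sum_{s=1}^{t-1}$ in \cref{eq:sec-def}, which requires aligning the index shift between $H_t$ (sum over $s=0,\dots,t-1$ with pairs from $(\pi_{s-1},\widehat\pi_s)$) and the SEC convention. I would handle this by reindexing and absorbing the extra $s=0$ term, which is nonnegative and only enlarges the denominator. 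Finally, the constants $16$ and $18$ need to be checked carefully against the chain of factor-$2$ losses from the two approximation events.
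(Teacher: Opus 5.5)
Your proposal is correct and follows the paper's proof essentially step for step. The shared steps are: Young's inequality with $a=\lambda+H_t(\pitstar)$; the two factor-$2$ conversions on either side of the explorer's optimality, which give $4\cdot 4=16$ on the ratio and $2+16=18$ on the $\epsilon_{m,t}^2/\lambda$ term; and absorbing the extra nonnegative $s=0$ term into the SEC denominator.

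One small correction: the comparison you need at the end is just $\lambda+H_t(r)\ge\lambda\vee\sum_{s=1}^{t-1}\E_{\pi_{s-1},\widehat\pi_s}\psi_{t,r}^2$, with no factor lost. The inequality $\lambda\vee H\ge(\lambda+H)/2$ that you cite points the wrong way for this step.
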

\begin{proof}
By \cref{prop:residual-identity}, $\Delta_t(\tau,\tau')=\psi_{t,\pitstar}(\tau,\tau')$, and this function belongs to $\Psi$ by \cref{ass:real}. Define $D_t:=\lambda+H_t(\pitstar)$. For any real $x$ and $a>0$, $x\le x^2/(\gamma a)+\gamma a/4$.  Therefore,
\begin{equation}\label{eq:young-quarter}
\E_{\pi_{t-1},\pi}\Delta_t\le
\frac{N_t(\pi,\pitstar)^2}{\gamma D_t} +\frac{\gamma\lambda}{4} +\frac{\gamma}{4}H_t(\pitstar).
\end{equation}
Using \cref{eq:N-H-events},
\[
\frac{N_t(\pi,\pitstar)^2}{\gamma D_t}
\le
\frac{2\widehat N_t(\pi,\pitstar)^2}{\gamma D_t}
+\frac{2\epsilon_{m,t}^2}{\gamma\lambda}
\le
\frac{4\widehat N_t(\pi,\pitstar)^2}{\gamma(\lambda+\widehat H_t(\pitstar))}
+\frac{2\epsilon_{m,t}^2}{\gamma\lambda}.
\]
By the definition of $\widehat\pi_t$ in \cref{alg:adv-exp-NashMD},
\[
\frac{\widehat N_t(\pi,\pitstar)^2}{\lambda+\widehat H_t(\pitstar)}
\le
\max_{r\in\Pi}\frac{\widehat N_t(\widehat\pi_t,r)^2}{\lambda+\widehat H_t(r)}.
\]
Again using \cref{eq:N-H-events}, for every $r\in\Pi$,
\[
\frac{\widehat N_t(\widehat\pi_t,r)^2}{\lambda+\widehat H_t(r)}
\le 4\frac{N_t(\widehat\pi_t,r)^2}{\lambda+H_t(r)}+\frac{4\epsilon_{m,t}^2}{\lambda}.
\]
Combining the last three displays yields
\[
\frac{N_t(\pi,\pitstar)^2}{\gamma D_t}
\le \frac{16}{\gamma} \max_{r\in\Pi}\frac{N_t(\widehat\pi_t,r)^2}{\lambda+H_t(r)} +\frac{18\epsilon_{m,t}^2}{\gamma\lambda}.
\]
Substitute this into \cref{eq:young-quarter} and sum over $t$. For each $r\in\Pi$, $H_t(r)$ contains the SEC denominator sum $\sum_{s=1}^{t-1}\E_{\pi_{s-1},\widehat\pi_s}[\psi_{t,r}^2]$ and possibly the additional nonnegative $s=0$ term. Hence
\[
\lambda+H_t(r)\ge
\lambda\vee \sum_{s=1}^{t-1}\E_{\tau\sim\pi_{s-1},\tau'\sim\widehat\pi_s}\psi_{t,r}(\tau,\tau')^2.
\]
By the SEC definition \cref{eq:sec-def}, the sum of the max-ratio terms is bounded by $\SEC(\Psi,\lambda)$. This proves \cref{eq:sec-quarter-bound}. 
\end{proof}

The preceding lemma controls the extrapolation part but leaves a quadratic remainder. The next lemma controls the local, on-sample part of the update and produces a negative copy of the same quadratic term. This is where the learner's empirical risk minimization step is used: because \(\pi_t\) minimizes the empirical residual loss, it cannot be much worse than the exact mirror target \(\pi_t^\star\) on the sampled data. The remaining error comes only from finite-sample estimation of preferences.

\begin{lemma}\label{lem:local-sharp}
Fix $t\ge1$.  Suppose the following events hold:
\begin{align}
\E_{\tau\sim\pi_{t-1}}[\log\pitstar(\tau)-\log\pi_t(\tau)]
&\le
\frac1{m_t}\sum_{i=1}^{m_t}[\log\pitstar(\widetilde\tau_{t-1,i})-\log\pi_t(\widetilde\tau_{t-1,i})]+\epsilon_{m,t},\label{eq:local-event-m} \\
\sum_{s=0}^{t-1}\E_{\pi_{s-1},\widehat\pi_s}\Delta_t(\tau,\tau')^2
&\le
\sum_{s=0}^{t-1}\frac1{n_{s+1}}\sum_{i=1}^{n_{s+1}}\Delta_t(\tau_{s,i},\tau'_{s,i})^2+\frac{\lambda}{2},\label{eq:local-event-H}
\end{align}
and for every empirical historical pair, with
\[
    e_t(a):=\widehat P_t(a\succ\pi_{t-1})-P^\star(a\succ\pi_{t-1}),
\]
we have
\begin{equation}\label{eq:local-event-p}
    |e_t(a)|\le \epsilon_{p,t}
    \qquad
    \text{for all }a\in\{\tau_{s,i},\tau'_{s,i}:0\le s<t,\ 1\le i\le n_{s+1}\}.
\end{equation}
Then, under \cref{eq:learner-update},
\begin{equation}\label{eq:local-sharp-bound}
\E_{\tau\sim\pi_{t-1}}[\log\pitstar(\tau)-\log\pi_t(\tau)] + \frac{\gamma}{4}\sum_{s=0}^{t-1}\E_{\pi_{s-1},\widehat\pi_s}\Delta_t(\tau,\tau')^2 \le
\epsilon_{m,t}+\frac{\gamma\lambda}{8}+4\gamma\eta^2 t\epsilon_{p,t}^2.
\end{equation}
\end{lemma}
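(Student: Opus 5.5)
The plan is to compare the learner's empirical objective in \cref{eq:learner-update} at its minimizer $\pi_t$ with its value at the exact mirror target $\pitstar$. This comparison is legitimate because $\pitstar\in\Pi$ by \cref{ass:real}. Optimality of $\pi_t$ gives
\[
\alpha_t\sum_{i=1}^{m_t}\bigl[\log\pitstar(\widetilde\tau_{t-1,i})-\log\pi_t(\widetilde\tau_{t-1,i})\bigr]\le L_{t-1}(\pitstar)-L_{t-1}(\pi_t).
\]
With the choice $\alpha_t=2/(\gamma m_t)$ from \cref{thm:concrete}, the left-hand side equals $\tfrac{2}{\gamma}$ times the empirical average appearing on the right of \cref{eq:local-event-m}. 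The rest of the argument evaluates the right-hand side exactly using the residual structure, and then transfers both empirical quantities to population ones via the assumed events.

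\textbf{Step 1: separate the preference-estimation error from the residual.} By the definitions \cref{eq:delta-pop} and \cref{eq:delta-hat},
\[
\widehat\Delta_t(\tau,\tau';\pi)=\Delta_t(\tau,\tau';\pi)-\eta\,\varepsilon(\tau,\tau'),\qquad \varepsilon(\tau,\tau'):=e_t(\tau)-e_t(\tau').
\]
By \cref{prop:residual-identity}, $\delta(\cdot;\pitstar,\pi_{t-1})$ is constant across responses for a fixed prompt, so $\Delta_t(\tau,\tau';\pitstar)=0$. Hence every summand of $L_{t-1}(\pitstar)$ equals $\eta^2\varepsilon^2$. Writing $\Delta=\Delta_t(\tau,\tau')$ for $\pi_t$, each summand of $L_{t-1}(\pitstar)-L_{t-1}(\pi_t)$ is
\[
\eta^2\varepsilon^2-(\Delta-\eta\varepsilon)^2=-\Delta^2+2\eta\varepsilon\Delta\le-\tfrac12\Delta^2+2\eta^2\varepsilon^2,
\]
where the last step is AM--GM. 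By \cref{eq:local-event-p}, $\varepsilon^2\le4\epsilon_{p,t}^2$ on all historical pairs. Each of the $t$ blocks $s=0,\dots,t-1$ is an average, so summing gives
\[
L_{t-1}(\pitstar)-L_{t-1}(\pi_t)\le-\tfrac12\widehat H+8\eta^2t\,\epsilon_{p,t}^2,
\qquad \widehat H:=\sum_{s=0}^{t-1}\frac1{n_{s+1}}\sum_{i=1}^{n_{s+1}}\Delta_t(\tau_{s,i},\tau'_{s,i})^2 .
\]

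\textbf{Step 2: rearrange.} Combining with the optimality inequality and multiplying by $\gamma/2$ yields
\[
\frac1{m_t}\sum_{i=1}^{m_t}\bigl[\log\pitstar(\widetilde\tau_{t-1,i})-\log\pi_t(\widetilde\tau_{t-1,i})\bigr]+\frac{\gamma}{4}\widehat H\le4\gamma\eta^2t\,\epsilon_{p,t}^2 .
\]

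\textbf{Step 3: pass to population quantities.} Event \cref{eq:local-event-m} bounds the population log-ratio term by the empirical one plus $\epsilon_{m,t}$. Event \cref{eq:local-event-H}, multiplied by $\gamma/4$, bounds the population quadratic term by $\tfrac{\gamma}{4}\widehat H+\tfrac{\gamma\lambda}{8}$. Adding these to the Step 2 inequality gives exactly \cref{eq:local-sharp-bound}.

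The main point to get right is Step 1. The cross term $2\eta\varepsilon\Delta$ must be absorbed while keeping a negative multiple of $\Delta^2$, which is then traded for the positive quadratic on the left. This balance is what fixes $\alpha_t=2/(\gamma m_t)$, so that the constants come out to exactly $\gamma/4$ and $4\gamma\eta^2$. The identity $\Delta_t(\cdot;\pitstar)\equiv0$ is what makes the comparator's loss pure noise. One should also check that the historical pairs for $s=0$ use $\pi_{-1}=\pref$, so they are consistent with the indexing in \cref{eq:local-event-H}.
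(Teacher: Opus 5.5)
Your proposal is correct and follows the paper's proof essentially step for step. Both arguments compare the learner objective at $\pi_t$ with the feasible comparator $\pi_t^\star$ and use $\Delta_t(\cdot,\cdot;\pi_t^\star)=0$, so the comparator's loss is pure preference noise. Both then absorb the cross term by completing the square: your bound $-\Delta^2+2\eta\varepsilon\Delta\le-\tfrac12\Delta^2+2\eta^2\varepsilon^2$ is the paper's $\gamma uz-\tfrac{\gamma}{4}u^2\le\gamma z^2$ before rescaling by $\gamma/2$. Finally, both transfer to population quantities via \cref{eq:local-event-m,eq:local-event-H}.
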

\begin{proof}
For brevity, define the empirical averages
\[
    \widehat S_t(\pi):=\sum_{s=0}^{t-1}\frac1{n_{s+1}}\sum_{i=1}^{n_{s+1}}
    \widehat\Delta_t(\tau_{s,i},\tau'_{s,i};\pi)^2,
\]
\[
    S_t(\pi):=\sum_{s=0}^{t-1}\frac1{n_{s+1}}\sum_{i=1}^{n_{s+1}}
    \Delta_t(\tau_{s,i},\tau'_{s,i};\pi)^2.
\]
From \cref{eq:local-event-m} and \cref{eq:local-event-H},
\begin{align}\label{eq:start-local}
&\E_{\pi_{t-1}}[\log\pitstar-\log\pi_t]
+\frac{\gamma}{4}\sum_{s=0}^{t-1}\E_{\pi_{s-1},\widehat\pi_s}\Delta_t^2
\nonumber\\
&\qquad\le
\frac1{m_t}\sum_{i=1}^{m_t}[\log\pitstar(\widetilde\tau_{t-1,i})-\log\pi_t(\widetilde\tau_{t-1,i})]
+\frac{\gamma}{4}S_t(\pi_t)+\epsilon_{m,t}+\frac{\gamma\lambda}{8}.
\end{align}
The ERM optimality of $\pi_t$ against the feasible comparator $\pitstar$ gives
\begin{align}\label{eq:erm-opt}
&\frac1{m_t}\sum_{i=1}^{m_t}[\log\pitstar(\widetilde\tau_{t-1,i})-\log\pi_t(\widetilde\tau_{t-1,i})]
+\frac{\gamma}{2}\widehat S_t(\pi_t)
\le
\frac{\gamma}{2}\widehat S_t(\pitstar).
\end{align}
This is exactly where $\alpha_t=2/(\gamma m_t)$ is used.
For an empirical pair $(a,b)$, write
\[
    u:=\Delta_t(a,b;\pi_t),
    \qquad
    z:=\eta(e_t(a)-e_t(b)).
\]
Then $\widehat\Delta_t(a,b;\pi_t)=u-z$.  Also, by \cref{prop:residual-identity},
$\Delta_t(a,b;\pitstar)=0$, and hence $\widehat\Delta_t(a,b;\pitstar)=-z$.
Combining \cref{eq:erm-opt} with the term $\frac{\gamma}{4}S_t(\pi_t)$ in \cref{eq:start-local}, pair by pair we get the upper bound
\[
    \frac{\gamma}{2}z^2+\frac{\gamma}{4}u^2-\frac{\gamma}{2}(u-z)^2
    =\gamma uz-\frac{\gamma}{4}u^2
    \le \gamma z^2,
\]
where the last inequality follows from $uz-u^2/4\le z^2$. Therefore
\begin{align*}
&\frac1{m_t}\sum_{i=1}^{m_t}[\log\pitstar(\widetilde\tau_{t-1,i})-\log\pi_t(\widetilde\tau_{t-1,i})]
+\frac{\gamma}{4}S_t(\pi_t)\le
\gamma\eta^2\sum_{s=0}^{t-1}\frac1{n_{s+1}}\sum_{i=1}^{n_{s+1}}
\big(e_t(\tau_{s,i})-e_t(\tau'_{s,i})\big)^2.
\end{align*}
By \cref{eq:local-event-p}, each squared difference is at most $4\epsilon_{p,t}^2$. Since
\[
\sum_{s=0}^{t-1}\frac1{n_{s+1}}\sum_{i=1}^{n_{s+1}}1=t,
\]
the last display is bounded by $4\gamma\eta^2t\epsilon_{p,t}^2$. Substituting into \cref{eq:start-local} proves \cref{eq:local-sharp-bound}.
\end{proof}

Combining the extrapolation bound with the local bound cancels the shared quadratic remainder. This gives a uniform bound on the accumulated inexact-update error \(E_T(\pi)\).

\begin{proposition}\label{prop:E-bound}
Assume the events \cref{eq:N-H-events,eq:local-event-m,eq:local-event-H,eq:local-event-p} hold for every $t\le T$.  Then for every fixed comparator $\pi\in\Pi$,
\begin{align}\label{eq:E-bound}
E_T(\pi)&:=\sum_{t=1}^T\big(\KL(\pi\|\pi_t)-\KL(\pi\|\pitstar)\big)\nonumber\\
&\le
\frac{16}{\gamma}\SEC(\Psi,\lambda)
+\frac{3\gamma\lambda T}{8}
+\sum_{t=1}^T\epsilon_{m,t}
+\frac{18}{\gamma\lambda}\sum_{t=1}^T\epsilon_{m,t}^2
+4\gamma\eta^2\sum_{t=1}^T t\epsilon_{p,t}^2.
\end{align}
\end{proposition}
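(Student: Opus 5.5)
We would prove \cref{prop:E-bound} by summing the per-round decomposition \cref{eq:regret-decomp-q} from \cref{prop:residual-identity} and then applying \cref{lem:sec-quarter} and \cref{lem:local-sharp}. For each $t$ and the fixed comparator $\pi\in\Pi$, \cref{eq:regret-decomp-q} gives
\[
\KL(\pi\|\pi_t)-\KL(\pi\|\pitstar)=\E_{\tau\sim\pi_{t-1}}\bigl[\log\pitstar(\tau)-\log\pi_t(\tau)\bigr]+\E_{\tau\sim\pi_{t-1},\tau'\sim\pi}\bigl[\Delta_t(\tau,\tau')\bigr].
\]
The first term is the local, on-sample error of the learner. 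The second term is the extrapolation error toward the possibly out-of-sample comparator $\pi$. Summing over $t=1,\dots,T$ writes $E_T(\pi)$ as a sum of local terms plus a sum of extrapolation terms.

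For the extrapolation sum, we invoke \cref{lem:sec-quarter}; its hypotheses are exactly the events \cref{eq:N-H-events}. This yields
\[
\frac{16}{\gamma}\SEC(\Psi,\lambda)+\frac{\gamma\lambda T}{4}+\frac{18}{\gamma\lambda}\sum_t\epsilon_{m,t}^2+\frac{\gamma}{4}\sum_{t=1}^T\sum_{s=0}^{t-1}\E_{\pi_{s-1},\widehat\pi_s}\Delta_t^2 .
\]
For the local sum, we add the quadratic remainder to the round-$t$ local term and apply \cref{lem:local-sharp} round by round; its hypotheses are the events \cref{eq:local-event-m,eq:local-event-H,eq:local-event-p}. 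For each $t$ we get
\[
\E_{\pi_{t-1}}\bigl[\log\pitstar-\log\pi_t\bigr]+\frac{\gamma}{4}\sum_{s=0}^{t-1}\E_{\pi_{s-1},\widehat\pi_s}\Delta_t^2\le\epsilon_{m,t}+\frac{\gamma\lambda}{8}+4\gamma\eta^2t\epsilon_{p,t}^2 .
\]

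The key step is cancelling the quadratic coverage term. It appears with a plus sign in the bound from \cref{lem:sec-quarter}, and it is exactly the term that \cref{lem:local-sharp} absorbs on its left-hand side. So we write
\[
E_T(\pi)=\sum_t\Bigl(\text{local}_t+\frac{\gamma}{4}Q_t\Bigr)+\Bigl(\sum_t\text{extrap}_t-\frac{\gamma}{4}\sum_tQ_t\Bigr),
\qquad Q_t:=\sum_{s<t}\E_{\pi_{s-1},\widehat\pi_s}\Delta_t^2 .
\]
We then bound the first sum by \cref{lem:local-sharp} and the second by \cref{lem:sec-quarter}. Collecting constants gives $\gamma\lambda T/4+\gamma\lambda T/8=3\gamma\lambda T/8$, together with $\sum_t\epsilon_{m,t}$, the $\SEC$ term, the $\epsilon_{m,t}^2$ term and $4\gamma\eta^2\sum_t t\epsilon_{p,t}^2$. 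This is exactly \cref{eq:E-bound}.

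The only point needing care is that \cref{lem:sec-quarter} uses $\Delta_t=\psi_{t,\pitstar}\in\Psi$, which requires $\pitstar\in\Pi$, and \cref{lem:local-sharp} uses $\pitstar$ as a feasible ERM comparator. Both follow from \cref{ass:real}. Given that, the argument is bookkeeping; the real obstacle, matching the quadratic remainders with the same coefficient $\gamma/4$, was already arranged by the choice $\alpha_t=2/(\gamma m_t)$ in \cref{lem:local-sharp}.
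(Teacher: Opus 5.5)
Your proposal is correct and matches the paper's proof essentially step for step. The paper also splits $E_T(\pi)$ via \cref{eq:regret-decomp-q} into a local sum and an extrapolation sum. It bounds the extrapolation sum by \cref{lem:sec-quarter} and the local sum by summing \cref{lem:local-sharp}, and cancels the common $\frac{\gamma}{4}\sum_t Q_t$ term to obtain $\frac{\gamma\lambda T}{4}+\frac{\gamma\lambda T}{8}=\frac{3\gamma\lambda T}{8}$. It relies on $\pitstar\in\Pi$ from \cref{ass:real} at the same two places you identify.
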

\begin{proof}
Fix a comparator \(\pi\in\Pi\). Throughout this proof, for each \(t\), write
\[
    \Delta_t(\tau,\tau')
    :=
    \delta_t(\tau)-\delta_t(\tau').
\]
Also define the accumulated population quadratic term
\[
    Q_t
    :=
    \sum_{s=0}^{t-1}
    \E_{\tau\sim\pi_{s-1},\,\tau'\sim\widehat\pi_s}
    \left[
        \Delta_t(\tau,\tau')^2
    \right],
\]
and
\[
    Q_T^{\rm tot}
    :=
    \sum_{t=1}^T Q_t
    =
    \sum_{t=1}^T
    \sum_{s=0}^{t-1}
    \E_{\tau\sim\pi_{s-1},\,\tau'\sim\widehat\pi_s}
    \left[
        \Delta_t(\tau,\tau')^2
    \right].
\]

We start from the definition of \(E_T(\pi)\). For every \(t\),
\[
    \KL(\pi\|\pi_t)-\KL(\pi\|\pitstar)
    =
    \E_{\tau\sim\pi}
    \left[
        \log\frac{\pitstar(\tau)}{\pi_t(\tau)}
    \right].
\]
Therefore,
\[
    E_T(\pi)
    =
    \sum_{t=1}^T
    \E_{\tau\sim\pi}
    \left[
        \log\pitstar(\tau)-\log\pi_t(\tau)
    \right].
\]

Next we use the residual decomposition from \cref{eq:regret-decomp-q}. For completeness, we spell out the identity. By the residual identity,
\[
    \log\pitstar(\tau)-\log\pi_t(\tau)
    =
    \delta_t^\star(\tau)-\delta_t(\tau),
\]
and by \cref{prop:residual-identity}, \(\delta_t^\star(\cdot)\) is constant in \(\tau\). Hence
\begin{align*}
    \E_{\tau\sim\pi}
    \left[
        \log\pitstar(\tau)-\log\pi_t(\tau)
    \right]
    &=
    \E_{\tau\sim\pi}
    \left[
        \delta_t^\star(\tau)-\delta_t(\tau)
    \right]
    \\
    &=
    \E_{\tau\sim\pi_{t-1}}
    \left[
        \delta_t^\star(\tau)-\delta_t(\tau)
    \right]
    +
    \E_{\tau\sim\pi_{t-1}}\left[\delta_t(\tau)\right]
    -
    \E_{\tau'\sim\pi}\left[\delta_t(\tau')\right]
    \\
    &=
    \E_{\tau\sim\pi_{t-1}}
    \left[
        \log\pitstar(\tau)-\log\pi_t(\tau)
    \right]
    +
    \E_{\tau\sim\pi_{t-1},\,\tau'\sim\pi}
    \left[
        \delta_t(\tau)-\delta_t(\tau')
    \right]
    \\
    &=
    \E_{\tau\sim\pi_{t-1}}
    \left[
        \log\pitstar(\tau)-\log\pi_t(\tau)
    \right]
    +
    \E_{\tau\sim\pi_{t-1},\,\tau'\sim\pi}
    \left[
        \Delta_t(\tau,\tau')
    \right].
\end{align*}
Summing this equality over \(t=1,\dots,T\) gives
\begin{align}\label{eq:E-decomp-expanded}
    E_T(\pi)
    =
    \underbrace{
    \sum_{t=1}^T
    \E_{\tau\sim\pi_{t-1}}
    \left[
        \log\pitstar(\tau)-\log\pi_t(\tau)
    \right]
    }_{=:L_T}
    +
    \underbrace{
    \sum_{t=1}^T
    \E_{\tau\sim\pi_{t-1},\,\tau'\sim\pi}
    \left[
        \Delta_t(\tau,\tau')
    \right]
    }_{=:X_T}.
\end{align}

We now bound the extrapolation part \(X_T\). Since the event \cref{eq:N-H-events} holds for every \(t\le T\), the hypotheses of \cref{lem:sec-quarter} are satisfied. Applying \cref{lem:sec-quarter} with the fixed comparator \(\pi\) gives
\begin{align}\label{eq:extrap-bound-expanded}
    X_T
    &=
    \sum_{t=1}^T
    \E_{\tau\sim\pi_{t-1},\,\tau'\sim\pi}
    \left[
        \Delta_t(\tau,\tau')
    \right]
    \nonumber\\
    &\le
    \frac{16}{\gamma}\SEC(\Psi,\lambda)
    +
    \frac{\gamma\lambda T}{4}
    +
    \frac{18}{\gamma\lambda}
    \sum_{t=1}^T\epsilon_{m,t}^2
    +
    \frac{\gamma}{4}
    Q_T^{\rm tot}.
\end{align}

We next bound the local part \(L_T\). Since the events \cref{eq:local-event-m,eq:local-event-H,eq:local-event-p} hold for every \(t\le T\), the hypotheses of \cref{lem:local-sharp} are satisfied at every round \(t\). Thus, for each \(t\),
\begin{align}\label{eq:local-sharp-expanded}
    \E_{\tau\sim\pi_{t-1}}
    \left[
        \log\pitstar(\tau)-\log\pi_t(\tau)
    \right]
    +
    \frac{\gamma}{4} Q_t
    \le
    \epsilon_{m,t}
    +
    \frac{\gamma\lambda}{8}
    +
    4\gamma\eta^2 t\epsilon_{p,t}^2.
\end{align}
Rearranging \cref{eq:local-sharp-expanded} gives
\[
    \E_{\tau\sim\pi_{t-1}}
    \left[
        \log\pitstar(\tau)-\log\pi_t(\tau)
    \right]
    \le
    \epsilon_{m,t}
    +
    \frac{\gamma\lambda}{8}
    +
    4\gamma\eta^2 t\epsilon_{p,t}^2
    -
    \frac{\gamma}{4}Q_t.
\]
Summing this inequality over \(t=1,\dots,T\), we obtain
\begin{align}\label{eq:local-bound-expanded}
    L_T
    &=
    \sum_{t=1}^T
    \E_{\tau\sim\pi_{t-1}}
    \left[
        \log\pitstar(\tau)-\log\pi_t(\tau)
    \right]
    \nonumber\\
    &\le
    \sum_{t=1}^T\epsilon_{m,t}
    +
    \frac{\gamma\lambda T}{8}
    +
    4\gamma\eta^2\sum_{t=1}^T t\epsilon_{p,t}^2
    -
    \frac{\gamma}{4}
    \sum_{t=1}^T Q_t
    \nonumber\\
    &=
    \sum_{t=1}^T\epsilon_{m,t}
    +
    \frac{\gamma\lambda T}{8}
    +
    4\gamma\eta^2\sum_{t=1}^T t\epsilon_{p,t}^2
    -
    \frac{\gamma}{4}Q_T^{\rm tot}.
\end{align}

Finally, substitute \cref{eq:extrap-bound-expanded} and \cref{eq:local-bound-expanded}
into the decomposition \cref{eq:E-decomp-expanded}. We get 
\begin{align*}
    E_T(\pi)
    &=
    L_T+X_T
    \\
    &\le
    \left[
    \sum_{t=1}^T\epsilon_{m,t}
    +
    \frac{\gamma\lambda T}{8}
    +
    4\gamma\eta^2\sum_{t=1}^T t\epsilon_{p,t}^2
    -
    \frac{\gamma}{4}Q_T^{\rm tot}
    \right]
    \\
    &\quad
    +
    \left[
    \frac{16}{\gamma}\SEC(\Psi,\lambda)
    +
    \frac{\gamma\lambda T}{4}
    +
    \frac{18}{\gamma\lambda}
    \sum_{t=1}^T\epsilon_{m,t}^2
    +
    \frac{\gamma}{4}Q_T^{\rm tot}
    \right].
\end{align*}
Therefore,
\[
E_T(\pi)
\le
\frac{16}{\gamma}\SEC(\Psi,\lambda)
+\frac{3\gamma\lambda T}{8}
+\sum_{t=1}^T\epsilon_{m,t}
+\frac{18}{\gamma\lambda}\sum_{t=1}^T\epsilon_{m,t}^2
+4\gamma\eta^2\sum_{t=1}^T t\epsilon_{p,t}^2,
\]
which is exactly \cref{eq:E-bound}.
\end{proof}

The deterministic regret bound still contains the total movement of the exact mirror targets. The next lemma controls this movement using the contraction of the population mirror map.

\begin{lemma}\label{lem:movement-control}
Let
\[
    E_T(\pistar):=\sum_{t=1}^T\left(\KL(\pistar\|\pi_t)-\KL(\pistar\|\pitstar)\right).
\]
Under \cref{eq:eta-condition},
\begin{equation}\label{eq:movement-control}
    \sum_{t=1}^T\KL(\pi_{t-1}\|\pitstar)
    \le
    \frac{e^{R_\eta}}{c_\eta}
    \left(\KL(\pistar\|\pi_0)+E_T(\pistar)\right),
\end{equation}
where $c_\eta=1-\eta\beta-\eta/(4\beta)$ and $R_\eta=\eta(2V+1)$.
\end{lemma}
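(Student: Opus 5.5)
The plan is to turn the one-step contraction of \cref{lem:exact-contraction} into a telescoping sum, and then switch the KL direction with \cref{lem:reverse-movement}.

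\emph{Step 1 (forward movement per round).} Apply \cref{lem:exact-contraction} with $\pi=\pi_{t-1}$, so that $\pi^+=\pitstar$. This gives
\[
c_\eta\KL(\pitstar\|\pi_{t-1})\le(1-\eta\beta)\KL(\pistar\|\pi_{t-1})-\KL(\pistar\|\pitstar).
\]
Using $1-\eta\beta\le 1$ (the KL terms are nonnegative), we get
\[
c_\eta\KL(\pitstar\|\pi_{t-1})\le \KL(\pistar\|\pi_{t-1})-\KL(\pistar\|\pitstar).
\]

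\emph{Step 2 (telescoping with the inexactness error).} Add and subtract $\KL(\pistar\|\pi_t)$ on the right:
\[
\KL(\pistar\|\pi_{t-1})-\KL(\pistar\|\pitstar)=\bigl[\KL(\pistar\|\pi_{t-1})-\KL(\pistar\|\pi_t)\bigr]+\bigl[\KL(\pistar\|\pi_t)-\KL(\pistar\|\pitstar)\bigr].
\]
Summing over $t=1,\dots,T$, the first bracket telescopes to $\KL(\pistar\|\pi_0)-\KL(\pistar\|\pi_T)\le\KL(\pistar\|\pi_0)$. The second bracket sums exactly to $E_T(\pistar)$. Hence
\[
c_\eta\sum_{t=1}^T\KL(\pitstar\|\pi_{t-1})\le \KL(\pistar\|\pi_0)+E_T(\pistar).
\]

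\emph{Step 3 (reverse direction).} By \cref{lem:reverse-movement}, $\KL(\pi_{t-1}\|\pitstar)\le e^{R_\eta}\KL(\pitstar\|\pi_{t-1})$ for each $t$. Summing this over $t$ and dividing the bound from Step 2 by $c_\eta>0$ yields \cref{eq:movement-control}. Dividing by $c_\eta$ is valid because $c_\eta$ is assumed positive in the setting of \cref{thm:versionA}; under \cref{eq:eta-condition} alone one only has $c_\eta\ge0$, so positivity should be stated explicitly.

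The only delicate point is the use of $1-\eta\beta\le1$ to drop the contraction factor, which makes the sum telescope, together with the sign bookkeeping. \cref{lem:exact-contraction} is a population statement valid for any $\pi$, so it applies to the realized $\pi_{t-1}$ pathwise. No probabilistic events are needed for this step.
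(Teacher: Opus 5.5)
Your proposal is correct and follows the paper's proof step for step: \cref{lem:exact-contraction} at $\pi=\pi_{t-1}$, then drop the $-\eta\beta\KL(\pistar\|\pi_{t-1})$ term, then add and subtract $\KL(\pistar\|\pi_t)$ to telescope, then switch KL direction with \cref{lem:reverse-movement}. Your caveat that $c_\eta>0$ must be assumed explicitly is a fair precision the statement glosses over: \cref{eq:eta-condition} only gives $c_\eta\ge0$, with equality at $\eta=4\beta/(1+4\beta^2)$, whereas the paper's actual choice $\eta=2\beta/(1+4\beta^2)$ gives $c_\eta=1/2$.
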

\begin{proof}
By \cref{lem:exact-contraction} applied with $\pi=\pi_{t-1}$ and $\pi^+=\pitstar$,
\[
    c_\eta\KL(\pitstar\|\pi_{t-1})
    \le
    (1-\eta\beta)\KL(\pistar\|\pi_{t-1})-\KL(\pistar\|\pitstar).
\]
Dropping the nonpositive term $-\eta\beta\KL(\pistar\|\pi_{t-1})$ and adding then subtracting $\KL(\pistar\|\pi_t)$ gives
\[
    c_\eta\KL(\pitstar\|\pi_{t-1})
    \le
    \KL(\pistar\|\pi_{t-1})-\KL(\pistar\|\pi_t)
    +\KL(\pistar\|\pi_t)-\KL(\pistar\|\pitstar).
\]
Sum over $t=1,\ldots,T$ and telescope:
\[
    c_\eta\sum_{t=1}^T\KL(\pitstar\|\pi_{t-1})
    \le \KL(\pistar\|\pi_0)+E_T(\pistar).
\]
Finally apply \cref{lem:reverse-movement} to each term to obtain the desired result. 
\end{proof}

We now turn to concentration. The following martingale Hoeffding bound handles adaptively chosen but conditionally bounded samples. 

\begin{lemma}\label{lem:master-azuma}
Let $(\mathcal F_i)_{i=0}^N$ be a filtration and let $(X_i)_{i=1}^N$ be adapted. Suppose deterministic constants $a_i\le b_i$ satisfy $a_i\le X_i\le b_i$ almost surely. Then, for every $\delta\in(0,1)$, with probability at least $1-\delta$, 
\[
\left|\sum_{i=1}^N X_i-\sum_{i=1}^N\E[X_i\mid\mathcal F_{i-1}]\right|
\le
\sqrt{\frac12\left(\sum_{i=1}^N(b_i-a_i)^2\right)\log\left(\frac2\delta\right)}.
\]
\end{lemma}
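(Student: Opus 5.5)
The plan is the standard exponential-supermartingale (Chernoff) argument, applied to the sum and to its negation. Set $D_i:=X_i-\E[X_i\mid\mathcal F_{i-1}]$, which is a martingale difference sequence adapted to $(\mathcal F_i)$. Since $a_i\le X_i\le b_i$ almost surely and the constants are deterministic, the conditional mean $\E[X_i\mid\mathcal F_{i-1}]$ also lies in $[a_i,b_i]$. Hence, conditionally on $\mathcal F_{i-1}$, the variable $D_i$ is centered and takes values in an interval of length $b_i-a_i$, namely $[a_i-\E[X_i\mid\mathcal F_{i-1}],\,b_i-\E[X_i\mid\mathcal F_{i-1}]]$. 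That interval is $\mathcal F_{i-1}$-measurable, and its length is deterministic.

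The first step is the conditional version of Hoeffding's lemma. For every $s\in\br$,
\[
\E\bigl[e^{sD_i}\mid\mathcal F_{i-1}\bigr]\le \exp\bigl(s^2(b_i-a_i)^2/8\bigr).
\]
This follows by applying the usual Hoeffding lemma pointwise in the conditioning. That lemma only uses that the variable is centered and supported in an interval of the given length, and the interval endpoints here are measurable with respect to the conditioning, so the argument goes through.

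The second step is to iterate this bound with the tower property. Define $M_k:=\exp\bigl(s\sum_{i\le k}D_i-\tfrac{s^2}{8}\sum_{i\le k}(b_i-a_i)^2\bigr)$. The one-step bound shows that $M_k$ is a supermartingale, so $\E[M_N]\le 1$.

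The third step is Markov's inequality. For any $\varepsilon>0$,
\[
\Pr\Bigl(\sum_i D_i\ge\varepsilon\Bigr)\le\exp\Bigl(-s\varepsilon+\tfrac{s^2}{8}\textstyle\sum_i(b_i-a_i)^2\Bigr).
\]
Optimizing over $s$, with $s=4\varepsilon/\sum_i(b_i-a_i)^2$, gives the bound $\exp\bigl(-2\varepsilon^2/\sum_i(b_i-a_i)^2\bigr)$.

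The fourth step applies the same argument to $-D_i$ for the lower tail and takes a union bound. Setting $2\exp\bigl(-2\varepsilon^2/\sum_i(b_i-a_i)^2\bigr)=\delta$ gives
\[
\varepsilon=\sqrt{\tfrac12\bigl(\textstyle\sum_i(b_i-a_i)^2\bigr)\log(2/\delta)},
\]
which is exactly the stated radius.

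The only step needing care is the conditional Hoeffding lemma: its range interval depends on $\mathcal F_{i-1}$. This is harmless, because Hoeffding's lemma depends only on the interval length, and that length $b_i-a_i$ is deterministic. No other obstacle is expected. Alternatively, one can simply cite the Azuma–Hoeffding inequality in this form.
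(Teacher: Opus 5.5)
Your proposal is correct and matches the paper's proof essentially step for step: centering to get the martingale differences $D_i$, the conditional Hoeffding lemma with range length $b_i-a_i$, the exponential supermartingale with Markov's inequality, optimizing the exponent to obtain $\exp\bigl(-2\varepsilon^2/\sum_i(b_i-a_i)^2\bigr)$, and a two-sided union bound. Your remark that the conditional mean lies in $[a_i,b_i]$, so the $\mathcal F_{i-1}$-measurable range interval of $D_i$ still has deterministic length $b_i-a_i$, supplies exactly the justification the paper leaves implicit.
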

\begin{proof}
Let $D_i:=X_i-\E[X_i | \mathcal F_{i-1}]$ and $S_n := \sum_{i=1}^nD_i$. Then $(S_n)$ is a martingale. Conditional on $\mathcal F_{i-1}$, $D_i$ has mean zero and lies in an interval of length at most $c_i:=b_i-a_i$. Hoeffding's lemma applied conditionally gives
\[
    \E[e^{\theta D_i}\mid\mathcal F_{i-1}]
    \le \exp\left(\frac{\theta^2c_i^2}{8}\right),
    \qquad \theta\in\mathbb R.
\]
Thus $M_n:=\exp(\theta S_n-\theta^2\sum_{i=1}^n c_i^2/8)$ is a nonnegative supermartingale. For $\theta>0$ and $\varepsilon>0$, Markov's inequality gives
\[
    P(S_N\ge\varepsilon)
    \le \exp\left(-\theta\varepsilon+\frac{\theta^2}{8}\sum_{i=1}^N c_i^2\right).
\]
Optimizing over $\theta$ gives
\[
    P(S_N\ge\varepsilon)
    \le \exp\left(-\frac{2\varepsilon^2}{\sum_i c_i^2}\right).
\]
Applying the same bound to $-S_N$ and union bounding gives
\[
    P(|S_N|\ge\varepsilon)
    \le 2\exp\left(-\frac{2\varepsilon^2}{\sum_i c_i^2}\right).
\]
The claim follows by taking $\varepsilon=\sqrt{(\sum_ic_i^2)\log(2/\delta)/2}$.
\end{proof}

This corollary records the conditional i.i.d.\ form used for the per-round batches in \cref{alg:adv-exp-NashMD}.

\begin{corollary}\label{cor:conditional-hoeffding}
Let $\mathcal G$ be a sigma-field. Conditional on $\mathcal G$, let $X_1,\ldots,X_n$ be independent random variables with $X_i\in[a,b]$ almost surely and $\E[X_i\mid\mathcal G]=\mu$ for every $i$. Then, for every $\epsilon>0$,
\[
    P\left(\left.\left|\frac1n\sum_{i=1}^nX_i-\mu\right| > \epsilon \right| \mathcal G\right)
    \le 2\exp\left(-\frac{2n\epsilon^2}{(b-a)^2}\right).
\]
Equivalently, for every $\alpha\in(0,1)$, with conditional probability at least $1-\alpha$,
\[
    \left|\frac1n\sum_{i=1}^nX_i-\mu\right|
    \le (b-a)\sqrt{\frac{\log(2/\alpha)}{2n}}.
\]
\end{corollary}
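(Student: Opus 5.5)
The plan is to reduce \cref{cor:conditional-hoeffding} to \cref{lem:master-azuma} (or, equivalently, to rerun the classical Hoeffding argument) after conditioning on $\mathcal G$. Throughout, I work under the regular conditional probability $P(\cdot\mid\mathcal G)$. Under this measure the $X_i$ are independent, bounded in $[a,b]$, and have common mean $\mu$. Here $\mu$ is $\mathcal G$-measurable, so it behaves as a constant under the conditional law.

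The first step is the conditional moment-generating-function bound. Set $D_i := X_i-\mu$. Conditional on $\mathcal G$, each $D_i$ has mean zero and lies in an interval of length $b-a$. Hoeffding's lemma then gives $\E[e^{\theta D_i}\mid\mathcal G]\le \exp(\theta^2(b-a)^2/8)$ for every $\theta\in\br$. By conditional independence, $\E[e^{\theta\sum_i D_i}\mid\mathcal G]$ factorizes into a product, and is therefore at most $\exp(n\theta^2(b-a)^2/8)$.

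Next, I apply conditional Markov to obtain, for $\theta>0$,
\[
P\Bigl(\sum\nolimits_i D_i\ge n\epsilon \Bigm| \mathcal G\Bigr)\le \exp\bigl(-\theta n\epsilon+n\theta^2(b-a)^2/8\bigr).
\]
Optimizing at $\theta=4\epsilon/(b-a)^2$ yields the bound $\exp(-2n\epsilon^2/(b-a)^2)$. The lower tail is handled the same way with $-D_i$, and a union bound then gives the factor $2$ in the first display. This is exactly the computation inside the proof of \cref{lem:master-azuma} with $c_i=b-a$ and $N=n$, so I could instead cite that lemma. To do so, I would take the filtration $\mathcal F_i=\sigma(\mathcal G,X_1,\dots,X_i)$. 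Conditional independence then gives $\E[X_i\mid\mathcal F_{i-1}]=\mu$, and dividing the resulting deviation by $n$ gives the claim. The only subtlety is that \cref{lem:master-azuma} as stated gives an unconditional probability. I would therefore either apply it under $P(\cdot\mid\mathcal G)$, or note that its supermartingale argument goes through verbatim with conditional expectations.

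For the equivalent form, I set $2\exp(-2n\epsilon^2/(b-a)^2)=\alpha$ and solve, which gives $\epsilon=(b-a)\sqrt{\log(2/\alpha)/(2n)}$. The complement event then has conditional probability at least $1-\alpha$; since the first display uses strict inequality, this matches the non-strict inequality in the second form. I do not expect any real obstacle. The one point to be careful about is justifying the conditional-measure manipulations (factorization and Markov's inequality under $P(\cdot\mid\mathcal G)$), which is standard given that the underlying spaces are nice, e.g., finite supports as in \cref{ass:finite}.
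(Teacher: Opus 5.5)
Your proposal is correct and follows the paper's proof. The paper applies \cref{lem:master-azuma} with the filtration $\mathcal F_i=\mathcal G\vee\sigma(X_1,\ldots,X_i)$, conditional mean $\E[X_i\mid\mathcal F_{i-1}]=\mu$ and range $b-a$; your direct conditional Chernoff computation is that lemma's own proof with $c_i=b-a$, and your point that the supermartingale step must give $\E[M_N\mid\mathcal G]\le 1$ to get a bound conditional on $\mathcal G$ is a detail the paper asserts without comment.
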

\begin{proof}
Apply \cref{lem:master-azuma} to the filtration $\mathcal F_i:=\mathcal G\vee\sigma(X_1,\ldots,X_i)$, $i=0,\ldots,n$. Since the variables are conditionally independent given $\mathcal G$ and have conditional mean $\mu$, we have $\E[X_i\mid\mathcal F_{i-1}]=\E[X_i\mid\mathcal G]=\mu$. Moreover, the range length is $b-a$ for every $i$. \cref{lem:master-azuma} gives the displayed high-probability inequality conditionally on $\mathcal G$; rewriting it in tail form gives the exponential bound. 
\end{proof}

{\bf Here, we restate the batch sample sizes defined in~\cref{thm:concrete}.}

For $T\ge 1$ and $\delta\in(0,1)$ define the policy-class and active-set logarithmic factors as follows:
\begin{equation}\label{eq:explicit-samples}
    m_t:=t,
    \qquad
    n_t:=t,
    \qquad
    k_t:=t,
    \qquad t=1,\ldots,T.
\end{equation}
The logarithmic factors are moved into the ridge parameter and the final regret bound. Set
\begin{equation}\label{eq:explicit-Lpi}
    L_\Pi:=\log\left(\frac{16T|\Pi|^2}{\delta}\right).
\end{equation}
Because $n_r=r$, the total number of active atoms across all rounds is bounded by
\begin{equation}\label{eq:explicit-MT}
    M_T:=2\sum_{r=1}^T (T-r+1)r
    =\frac{T(T+1)(T+2)}{3}.
\end{equation}
Define
\begin{equation}\label{eq:explicit-LP-LT}
    L_P:=\log\left(\frac{16M_T}{\delta}\right),
    \qquad
    L_T:=\max\{1,L_\Pi,L_P,\log(eT)\}.
\end{equation}
Then $\sum_{t=1}^T |\mathcal Y_t^{\rm act}|\le M_T$. Define
\begin{equation}\label{eq:explicit-params}
    \overline H:=1+B^2+\beta^{-2},\qquad
    \lambda:=16\overline H L_T^2,
    \qquad
    S_T\ge 1\vee\SEC(\Psi,\lambda),
    \qquad
    \gamma:=\sqrt{\frac{S_T}{\lambda T}}.
\end{equation}
Use the learner coefficient
\begin{equation}\label{eq:explicit-alpha}
    \alpha_t=\frac{2}{\gamma m_t}=\frac{2}{\gamma t}.
\end{equation}
Finally, fix the constant stepsize
\begin{equation}\label{eq:explicit-eta}
    \eta:=\frac{2\beta}{1+4\beta^2}.
\end{equation}
Then $\eta\le 4\beta/(1+4\beta^2)$, $\eta\beta<1$, $\eta\le 1$, and
\begin{equation}\label{eq:ceta-explicit}
    c_\eta=1-\eta\beta-\frac{\eta}{4\beta}=\frac12.
\end{equation}

The final lemma verifies all deterministic events required above hold with high probability. It controls the explorer numerator, the explorer denominator, and the active-set preference estimates simultaneously. 

\begin{lemma}\label{lem:explicit-events}
Under the sampling rules of \cref{alg:adv-exp-NashMD} and the choices \cref{eq:explicit-Lpi,eq:explicit-samples}, with probability at least $1-\delta$, all of the following statements hold simultaneously for all $t\le T$.
\begin{enumerate}[label=(\roman*), leftmargin=*]
\item For all $\rho',\pi\in\Pi$ and all $\nu\in\Pi$,
\begin{equation}\label{eq:explicit-N}
    |N_t^{\rho',\pi}(\nu)-\widehat N_t^{\rho',\pi}(\nu)|\le \epsilon_{m,t},
    \qquad
    \epsilon_{m,t}:=B\sqrt{\frac{2L_\Pi}{m_t}}.
\end{equation}
Although the event is stated for all $\nu\in\Pi$, the random deviation depends only on the on-policy empirical average, so no union bound over $\nu$ is needed. In particular, \cref{eq:local-event-m} holds with the same $\epsilon_{m,t}$.
\item For all $\rho',\pi\in\Pi$,
\begin{equation}\label{eq:explicit-H}
    |\widehat H_t^{\rho',\pi}-H_t^{\rho',\pi}|\le \frac{\lambda}{2}.
\end{equation}
Consequently, for all $\pi\in\Pi$,
\begin{equation}\label{eq:explicit-H-comparison}
    \frac12\bigl(\lambda+H_t(\pi)\bigr)
    \le \lambda+\widehat H_t(\pi)
    \le 2\bigl(\lambda+H_t(\pi)\bigr),
\end{equation}
and \cref{eq:local-event-H} holds.
\item For every active prompt-response atom in the empirical historical loss at round $t$,
\begin{equation}\label{eq:explicit-p}
    |\widehat P_t(a\succ\pi_{t-1})-P^\star(a\succ\pi_{t-1})|
    \le \epsilon_{p,t},
    \qquad
    \epsilon_{p,t}^2:=\frac{L_P}{2k_t}.
\end{equation}
\end{enumerate}
\end{lemma}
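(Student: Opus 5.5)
The plan is to prove the three statements separately and give each a share of the failure budget $\delta$, so that a union bound yields overall failure probability at most $\delta$. In each case I will condition on the history up to the point where the relevant batch is drawn. The quantities being estimated (the policies $\pi_t$, the candidates $\rho',\pi,\nu$, and the active set $\mathcal Y_t^{\rm act}$) are either fixed elements of the finite class $\Pi$ or measurable with respect to earlier data. This lets me apply \cref{cor:conditional-hoeffding} or \cref{lem:master-azuma} and then union bound over $\Pi^2$, over $t\le T$, and over active atoms.

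\emph{Item (i).} Fix $t$ and $\rho',\pi\in\Pi$. The term $\E_{\tau'\sim\nu}\log(\rho'/\pi)$ appears identically in $N_t^{\rho',\pi}(\nu)$ and $\widehat N_t^{\rho',\pi}(\nu)$, so it cancels. The difference therefore reduces to
\[
\frac1{m_t}\sum_{i=1}^{m_t}\log\frac{\rho'(\widetilde\tau_{t-1,i})}{\pi(\widetilde\tau_{t-1,i})}-\E_{\pi_{t-1}}\log\frac{\rho'}{\pi},
\]
which does not depend on $\nu$, so no union bound over $\nu$ is needed. Conditional on the history, the samples $\widetilde\tau_{t-1,i}\sim\pi_{t-1}$ are i.i.d. By \cref{ass:ratio}, each summand lies in $[-B,B]$, a range of $2B$. \cref{cor:conditional-hoeffding} with $\alpha=\delta/(4T|\Pi|^2)$ gives a deviation of at most $2B\sqrt{\log(8T|\Pi|^2/\delta)/(2m_t)}\le B\sqrt{2L_\Pi/m_t}$. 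The event \cref{eq:local-event-m} is the special case $\rho'=\pitstar$, $\pi=\pi_t$. For this I must check that these policies lie in $\Pi$ (by \cref{ass:real}). The only subtlety is that $\pi_t$ depends on $D_1^t$; this is handled by the uniformity over $\Pi^2$, since I take the union bound before plugging in the data-dependent choice.

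\emph{Item (ii).} For fixed $\rho',\pi$, I view $\widehat H_t^{\rho',\pi}-H_t^{\rho',\pi}$ as a martingale sum over the historical pairs $(\tau_{s,i},\tau'_{s,i})$, with $s<t$. Each pair is drawn from $\pi_{s-1}\times\widehat\pi_s$, both measurable with respect to the past, and carries weight $1/n_{s+1}$. Each weighted summand lies in $[0,4B^2/n_{s+1}]$. The sum of squared ranges is therefore $\sum_s n_{s+1}(4B^2/n_{s+1})^2=16B^4\sum_{s}1/(s+1)\le 16B^4\log(eT)$. \cref{lem:master-azuma} then gives a deviation of order $B^2\sqrt{L_T\cdot L_\Pi}\le \overline H L_T$. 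This is below $\lambda/2=8\overline H L_T^2$ after I absorb the fourth power $B^4$. That step is the point needing care: $B^4\le \overline H^2$, not $\overline H$, so I may need the crude bound $\psi^2\le 4B^2$ combined with $L_T\ge1$, or a Freedman-type bound using the variance. I expect this to be the main obstacle. My first attempt will be to normalize by $\overline H$ and check the constants, with Freedman's inequality as the fallback if they do not close. Given $|\widehat H-H|\le\lambda/2$, the comparison \cref{eq:explicit-H-comparison} follows directly. The event \cref{eq:local-event-H} follows because $\Delta_t=\psi_{\pi_t,\pitstar}$ by \cref{prop:residual-identity}.

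\emph{Item (iii).} For each active atom $a$ at round $t$, the indicators $Y_{t,a,j}$ are conditionally i.i.d. Bernoulli variables with mean $P^\star(a\succ\pi_{t-1})$, since the opponents $c_{t,a,j}\sim\pi_{t-1}$ are drawn fresh. Hoeffding's inequality with range $1$ and $\alpha=\delta/(4M_T)$ gives $\epsilon_{p,t}^2=\log(8M_T/\delta)/(2k_t)\le L_P/(2k_t)$. A union bound over all $\sum_t|\mathcal Y_t^{\rm act}|\le M_T$ atoms then completes item (iii). Adding the three failure probabilities, each at most $\delta/4$ or $\delta/2$, shows that all three statements hold simultaneously with probability at least $1-\delta$.
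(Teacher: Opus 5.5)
Your proposal follows the paper's proof essentially step for step: conditional Hoeffding with the $\nu$-term cancelling in (i), the martingale Hoeffding bound of \cref{lem:master-azuma} over the $1/n_{s+1}$-weighted pairs with range $4B^2/n_{s+1}$ in (ii), and per-atom Hoeffding with a union bound over at most $M_T$ active atoms in (iii). The only real difference is how you split $\delta$ among the three items. The $B^4$ concern you flag in (ii) does not arise: the square root in \cref{lem:master-azuma} makes the deviation at most $4B^2\sqrt{\log(eT)L_\Pi/2}\le 4\overline{H}L_T\le 8\overline{H}L_T^2=\lambda/2$, exactly as in the paper, so the constants close and no Freedman fallback is needed.
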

\begin{proof}
We prove the three claims and then union bound.
\begin{enumerate}[label=(\roman*), leftmargin=*]
    \item Fix $t$, $\rho',\pi\in\Pi$, and $\nu\in\Pi$. Let $\mathcal G_t^m$ be the sigma-field containing all randomness before drawing the on-policy batch $\widetilde\tau_{t-1,1:m_t}$. Conditional on $\mathcal G_t^m$, the variables
    \[
        X_i:=\log\frac{\rho'(\widetilde\tau_{t-1,i})}{\pi(\widetilde\tau_{t-1,i})},\qquad i=1,\ldots,m_t,
    \]
    are independent because $\widetilde\tau_{t-1,i}$ are sampled independently from $\pi_{t-1}$. \cref{ass:ratio} gives $X_i\in[-B,B]$ almost surely. Moreover,
    \[
        \E[X_i\mid\mathcal G_t^m]
        =\E_{\tau\sim\pi_{t-1}}\left[\log\frac{\rho'(\tau)}{\pi(\tau)}\right].
    \]
    The $\nu$-dependent term $\E_{\tau'\sim\nu}[\log(\rho'(\tau')/\pi(\tau'))]$ is deterministic conditional on $\mathcal G_t^m$ and appears in both $N_t^{\rho',\pi}(\nu)$ and $\widehat N_t^{\rho',\pi}(\nu)$, so it cancels. Hence
    \[
        N_t^{\rho',\pi}(\nu)-\widehat N_t^{\rho',\pi}(\nu)
        =\E[X_1\mid\mathcal G_t^m]-\frac1{m_t}\sum_{i=1}^{m_t}X_i.
    \]
    Apply \cref{cor:conditional-hoeffding} with $a=-B$, $b=B$, $n=m_t$, and $\epsilon=B\sqrt{2L_\Pi/m_t}$. Since $b-a=2B$, this gives
    \[
        P\left(\left.|N_t^{\rho',\pi}(\nu)-\widehat N_t^{\rho',\pi}(\nu)|>\epsilon\ \right|\ \mathcal G_t^m\right)
        \le 2e^{-L_\Pi}.
    \]
    This bound is uniform in $\nu$ because the random deviation does not depend on $\nu$. Therefore no union bound over $\nu$ is needed. Union bounding over $t\le T$ and $(\rho',\pi)\in\Pi^2$ gives total failure probability at most $2T|\Pi|^2e^{-L_\Pi}=\delta/8$. The local event \cref{eq:local-event-m} follows by substituting $(\rho',\pi)=(\pitstar,\pi_t)$; this is legitimate because \cref{ass:real} implies $\pitstar\in\Pi$ and \cref{alg:adv-exp-NashMD} outputs $\pi_t\in\Pi$. 
    \item Fix $t$ and $(\rho',\pi)\in\Pi^2$. For $s=0,\ldots,t-1$ and $i=1,\ldots,n_{s+1}$ define
    \[
        Z_{s,i}:=\frac1{n_{s+1}}\psi_{\rho',\pi}(\tau_{s,i},\tau'_{s,i})^2.
    \]
    Order the pairs $(s,i)$ lexicographically and let $\mathcal F_{s,i}$ be the filtration generated by all samples revealed up to pair $(s,i)$. Conditional on the history before drawing the $s$-th exploration batch, the pairs $(\tau_{s,i},\tau'_{s,i})$ are independent with law $\pi_{s-1}\otimes\widehat\pi_s$. Hence $Z_{s,i}$ is adapted and
    \[
        \E[Z_{s,i}\mid\mathcal F_{s,i-1}]
        =\frac1{n_{s+1}}\E_{\tau\sim\pi_{s-1},\tau'\sim\widehat\pi_s}\left[\psi_{\rho',\pi}(\tau,\tau')^2\right].
    \]
    By \cref{ass:ratio}, $|\log(\rho'(\tau)/\pi(\tau))|\le B$, so $|\psi_{\rho',\pi}(\tau,\tau')|\le 2B$. Therefore $0\le Z_{s,i}\le 4B^2/n_{s+1}$, and the range length in \cref{lem:master-azuma} is $4B^2/n_{s+1}$. Applying \cref{lem:master-azuma} to the ordered collection $\{Z_{s,i}\}_{s<t,i\le n_{s+1}}$ with confidence parameter $\alpha=2e^{-L_\Pi}$ gives, for this fixed $t,\rho',\pi$,
    \[
        |\widehat H_t^{\rho',\pi}-H_t^{\rho',\pi}|
        \le
        \sqrt{\frac12\sum_{s=0}^{t-1}n_{s+1}\left(\frac{4B^2}{n_{s+1}}\right)^2L_\Pi}
        =4B^2\sqrt{\frac12\left(\sum_{s=0}^{t-1}\frac1{n_{s+1}}\right)L_\Pi}
    \]
    with failure probability at most $2e^{-L_\Pi}$. Union bounding over $t\le T$ and $(\rho',\pi)\in\Pi^2$ costs at most another $2T|\Pi|^2e^{-L_\Pi}=\delta/8$. Under the simple schedule $n_{s+1}=s+1$,
    \[
        \sum_{s=0}^{t-1}\frac1{n_{s+1}}\le\sum_{r=1}^T\frac1r\le\log(eT)\le L_T,
        \qquad L_\Pi\le L_T.
    \]
    Thus
    \[
        |\widehat H_t^{\rho',\pi}-H_t^{\rho',\pi}|
        \le 4B^2\frac{L_T}{\sqrt2}
        \le 4\overline H L_T
        \le 8\overline H L_T^2
        =\frac{\lambda}{2},
    \]
    where $B^2\le\overline H$, $L_T\ge1$, and $\lambda=16\overline H L_T^2$. The comparison \cref{eq:explicit-H-comparison} follows by applying this bound to $(\rho',\pi)=(\pi_t,\pi)$ and using $|\widehat H-H|\le\lambda/2$.
    \item Fix $t$ and condition on the history $\mathcal G_t^p$ just before drawing the comparison opponents for preference estimation. For a fixed active prompt-response atom $a=(\x,\y)\in\mathcal Y_t^{\rm act}$, \cref{alg:adv-exp-NashMD} draws $c_{t,a,j}\sim\pi_{t-1}(\cdot\mid \x)$ independently and then observes
\[
    Y_{t,a,j}\sim\mathrm{Bern}\big(P^\star(\y\succ c_{t,a,j}\mid \x)\big),\qquad j=1,\ldots,k_t.
\]
The variables $Y_{t,a,j}$ are conditionally independent and lie in $[0,1]$. By the tower property,
\[
    \E[Y_{t,a,j}\mid\mathcal G_t^p]
    =\E_{c\sim\pi_{t-1}(\cdot\mid \x)}[P^\star(\y\succ c\mid \x)]
    =P^\star(a\succ\pi_{t-1}).
\]
Apply \cref{cor:conditional-hoeffding} with $a=0$, $b=1$, $n=k_t$, and $\epsilon=\sqrt{L_P/(2k_t)}$. For this fixed active atom, the conditional failure probability is at most $2e^{-L_P}$. Since the active sets are history-measurable before the labels at round $t$ are drawn and $\sum_{t=1}^T|\mathcal Y_t^{\rm act}|\le M_T$, union bounding over all realized active atoms and all times gives failure probability at most $2M_Te^{-L_P}=\delta/8$. 
\end{enumerate}
Combining the three bounds gives total failure probability at most $3\delta/8<\delta$.
\end{proof}

\subsection{Proofs of improved sample efficiency of \texttt{BENPO}}
\label{app:subsec:BENPO}

\subsubsection{Algorithm}

\begin{algorithm}[t]
    \caption{Bonus-explorer Exploratory Nash Preference Optimization (BENPO)}
    \KwIn{
        horizon $T$,
        reference policy $\pi_{\mathrm{ref}}$,
        confidence-radius schedule $\{\gamma_{t}(\delta)\}_{t \geq 1}$,
        coefficient $\lambda > 0$}
    \textbf{Initialize} $D^{0} \leftarrow \varnothing$\;
    \For{$t = 1, 2, \ldots, T$}{
      Compute MLE $\widehat{P}_{t}$ via~\cref{eq:MLE}\;
      Compute a Nash policy under $\widehat{P}_{t}$ by solving
      \begin{equation*}
          \pi_{t} \in \argmax_{\pi_{1}} \min_{\pi_{2}}  \, J_{\beta} \bigl(\pi_{1},\pi_{2}; \widehat{P}_{t}\bigr);
      \end{equation*}
      
      Generate a prompt-response pair $(\x_t, \y_t)$ via $\x_t \sim \rho$ and $\y_t \sim \pi_{t}(\cdot \mid \x_t)$\;
      Construct UCB bonus at $(\x_t, \y_t)$ for any $\y' \in \VCal^\star$: $\mathfrak{b}_{t}(\y'; \x_t, \y_t)$ by 
      \begin{equation*}
          \min \left\{ 1, 
          \sqrt{\gamma_{t}(\delta)} \sup_{P, P' \in \mathcal{P}} 
          \frac{ \bigl| P(\y_t \succ \y' | \x_t) - P'(\y_t \succ \y' | \x_t) \bigr|}
           {\sqrt{ \lambda + \sum_{s = 1}^{t - 1} \bigl( P(\y_s \succ \hat{\y}_s | \x_s) - P'(\y_s \succ \hat{\y}_s | \x_s) \bigr)^{2}}} \right\};
      \end{equation*}
      
      Sample from the exploiter policy:
      \begin{equation*}
          \hat{\y}_t \in \argmax_{\y'} \mathfrak{b}_{t}(\y'; \x_t, \y_t);
      \end{equation*}
      
      Query preference label $I_t \sim \textnormal{Bern}(P^{\star}(\y_t \succ \hat{\y}_t | \x_t ))$\;
      Update dataset $D^{t} \leftarrow D^{t-1} \cup \{(\x_t, \y_t, \hat{\y}_t, I_t)\}$\;
    }
    \KwOut{$\{\pi_{t}\}_{t=1}^{T}$}
    \label{alg:BENPO}
\end{algorithm}

We first provide the high-level idea of Bonus-Explorer ENPO\@.
At round $t$, 
\texttt{BENPO} computes the MLE $\widehat{P}_{t}$ of the preference function from the data collected so far:  
\begin{equation}
    \label{eq:MLE}
    \widehat{P}_t \in \argmax_{P \in \mathcal{P}} \sum_{s=1}^{t-1} \Bigl[ I_s \log P(\mathbf{y}_s \succ \hat{\mathbf{y}}_s | \mathbf{x}_s) + (1 - I_s) \log\bigl(1 - P(\mathbf{y}_s \succ \hat{\mathbf{y}}_s | \mathbf{x}_s)\bigr) \Bigr],
\end{equation}
where $\x_s, \y_s, \hat{\y}_s$ are sampled prompts and responses at round $s$, and $I_s \sim \textnormal{Bern}(P^\star( \y_s \succ \hat{\y}_s | \x_s ))$ is the preference label collected at round $t$.
Then, it finds a regularized Nash policy $\hat{\pi}_{t}$ by querying a minimax oracle:
\begin{equation*}
    \pi_{t} \in \argmax_{\pi_{1}} \min_{\pi_{2}}  \, \widehat{P}_t(\pi_1\succ\pi_2) - \beta\textnormal{KL}(\pi_1\|\pi_\textnormal{ref})+\beta\textnormal{KL}(\pi_2\|\pi_\textnormal{ref}).
\end{equation*}
This main-policy step plays the same role as \texttt{ENPO}'s policy update in~\cref{eq:learner-update}---it tracks the regret-relevant policy, 
the one whose suboptimality is the object of analysis.
We then draw one sample from $\y_t \sim \pi_{t}$, and based on this sample, we solve the following maximization problem: 
\begin{equation}
    \max_{\y'} \left\{
    1 \wedge 
    \sqrt{\gamma_{t}(\delta)} \sup_{P, P' \in \mathcal{P}} 
    \frac{ \bigl| P(\y_t \succ \y' | \x_t) - P'(\y_t \succ \y' | \x_t) \bigr|}
    {\sqrt{ \lambda + \sum_{s = 1}^{t - 1} \bigl( P(\y_s \succ \hat{\y}_s | \x_s) - P'(\y_s \succ \hat{\y}_s | \x_s) \bigr)^{2}}} \right\},
    \label{eq:UCB-bonus}
\end{equation}
and denote this adversarial sample by $\hat{\y}_t$.
We then query a preference label $I_t$ between $\y_t$ and $\hat{\y}_t$.

See~\cref{alg:BENPO} for the full statement of the algorithm.

As a counterpart of ENPO with stronger oracles, BENPO aligns with ENPO in a clear manner: \emph{maintain a main policy $\hat{\pi}_{t}$ whose regret is the object of analysis,
and pair that main policy with a second mechanism whose only role is to guide exploration}.

\subsubsection{Assumptions and additional notations}

Before we analyze the regret of \texttt{BENPO}, we first introduce some assumptions and notations. 




Let \( \mathcal{P} \) be a finite realizable class of preference models: \( P^\star \in \mathcal{P} \). Furthermore, every \(P\in\mathcal P\) satisfies
\[
    P(\y \succ \y' \mid \x) + P(\y' \succ \y \mid \x) = 1,
    \textnormal{ thus }
    P(\y \succ \y\mid \x) = \tfrac{1}{2}.
\]

We define the KL-regularized game with a $t$-round empirical preference as follows: 
\begin{equation*}
    \max_{\pi_1} \min_{\pi_2} J_\beta\bigl(\pi_1,\pi_2; \widehat{P}_t\bigr) := \widehat{P}_t(\pi_1\succ\pi_2) - \beta\textnormal{KL}(\pi_1\|\pi_\textnormal{ref})+\beta\textnormal{KL}(\pi_2\|\pi_\textnormal{ref})
\end{equation*}
We denote  $J_\beta(\pi_1,\pi_2):=J_\beta(\pi_1,\pi_2;P^\star)$.
For each $P\in\mathcal{P}$, let $\pi^\star_P$ denote the unique symmetric Nash policy of the KL-regularized game under $P$, i.e., 
\[
    J_\beta(\pi_1,\pi^\star_P;P)
    \leq J_\beta(\pi^\star_P,\pi^\star_P;P)
    \leq J_\beta(\pi^\star_P,\pi_2;P),
    \qquad \forall\, \pi_1, \pi_2 \in \Pi.
\]
By anti-symmetry of the preference model $P$, $J_\beta(\pi^\star_P,\pi^\star_P;P)=1/2$. 

We specify that the statistical domain is
\[
    \mathcal{Z}:=\{(\x,\y,\y'):\x\in\X,\ \y,\y'\in\A_x\}.
\]


Finally, the logarithmic bound is stated in terms of the eluder dimension \cite{Russo-2013-Eluder} of the squared model-difference class
\[
    \mathcal G_{\mathcal P} := \left\{ (P(\y \succ \y' \mid \x)-P'(\y \succ \y' \mid \x))^2: P,P'\in\mathcal P \right\}.
\]
Let $d_{\mathcal P} := d_E(\mathcal G_{\mathcal P},1/T)$ be its eluder dimension at scale \(1/T\).

\subsubsection{Main proof of sample efficiency}

The following squared $\ell^2$ bound for the regret bound or equivalently duality gap, is the key to access the logarithmic regret bound.
The proof strategy follows that in the proof of~\cite[Theorem 3.1.]{Lee-2026-Regularized}.
\begin{lemma}[Squared $\ell^2$ bound]
    \label{lem:sq-l2}
    Let $\widehat{P}$ be any antisymmetric preference function, and let $\pi_{\widehat{P}}^\star$ 
    denote the symmetric regularized Nash policy under $\widehat{P}$:
    \[
        \pi_{\widehat{P}}^\star \in \arg\max_{\pi_{1}} \min_{\pi_{2}} J_{\beta}(\pi_{1}, \pi_{2}; \widehat{P}).
    \]
    Then, we have 
    \[
        \textnormal{gap}_{\beta}(\pi_{\widehat{P}}^\star) \leq \frac{4}{\beta} \mathbb{E}_{\x \sim \rho, \y \sim \pi_{\widehat{P}}^\star(\cdot \mid \x)}\!\left[\,\sup_{\y' \in \mathcal{Y}}\bigl(P^{\star}(\y \succ \y' \mid \x) - \widehat{P}(\y \succ \y' \mid \x)\bigr)^{2}\,\right].
    \]
\end{lemma}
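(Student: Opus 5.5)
The plan is to work prompt by prompt, reduce the gap to a single KL-regularized linear maximization against the equilibrium $\widehat\pi:=\pi^\star_{\widehat P}$, and then control the resulting linear term by a sup-norm bound on the preference error under $\widehat\pi$.

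\emph{Reduction.} By \cref{prop:regret-gap} and $J_\beta(\widehat\pi,\widehat\pi;P^\star)=\tfrac12$ (antisymmetry), we have
\[
    \tfrac12\gap(\widehat\pi)=\max_{\pi}\bigl[J_\beta(\pi,\widehat\pi)-J_\beta(\widehat\pi,\widehat\pi)\bigr].
\]
Fix a prompt $\x$ and set $g^\star(\y):=P^\star(\y\succ\widehat\pi\mid\x)$ and $\widehat g(\y):=\widehat P(\y\succ\widehat\pi\mid\x)$, with difference $d:=g^\star-\widehat g$. The bracket becomes
\[
    \langle d,\pi-\widehat\pi\rangle+\Bigl[\langle\widehat g,\pi\rangle-\beta\KL(\pi\|\pref)-\bigl(\langle\widehat g,\widehat\pi\rangle-\beta\KL(\widehat\pi\|\pref)\bigr)\Bigr].
\]
Since $\widehat\pi$ is the symmetric Nash policy under $\widehat P$, it has the Gibbs form \cref{eq:NE-characterization} with $\widehat P$, namely $\widehat\pi\propto\pref\exp(\widehat g/\beta)$. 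It is therefore the exact maximizer of $\langle\widehat g,\cdot\rangle-\beta\KL(\cdot\|\pref)$, and the standard identity (three-point equality for the entropy prox) shows the bracket equals exactly $-\beta\KL(\pi\|\widehat\pi)$. Hence, pointwise in $\x$,
\[
    \tfrac12\gap(\widehat\pi)\le\E_{\x}\max_{\pi(\cdot\mid\x)}\Bigl\{\langle d,\pi-\widehat\pi\rangle-\beta\KL(\pi\|\widehat\pi)\Bigr\}.
\]
The maximization over the product of simplices decouples across prompts, so the per-prompt maxima can be taken separately.

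\emph{Sup-norm bound on $d$.} Let $\Delta(\y,\y'):=P^\star(\y\succ\y'\mid\x)-\widehat P(\y\succ\y'\mid\x)$. Antisymmetry of both $P^\star$ and $\widehat P$ gives $\Delta(\y,\y')=-\Delta(\y',\y)$. Hence
\[
    d(\y)=\E_{\mathbf u\sim\widehat\pi}\Delta(\y,\mathbf u)=-\E_{\mathbf u\sim\widehat\pi}\Delta(\mathbf u,\y),
\]
so $\|d\|_\infty\le a_\x:=\E_{\mathbf u\sim\widehat\pi(\cdot\mid\x)}\bigl[s(\mathbf u)\bigr]$, where $s(\mathbf u):=\sup_{\y'}|\Delta(\mathbf u,\y')|$. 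This is the crucial step: it converts an error evaluated at an arbitrary, possibly off-support response $\y$ into an error averaged over responses drawn from $\widehat\pi$. That is exactly the quantity appearing on the right-hand side of the lemma.

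\emph{Quadratic optimization.} By H\"older and Pinsker,
\[
    \langle d,\pi-\widehat\pi\rangle\le a_\x\|\pi-\widehat\pi\|_1\le a_\x\sqrt{2\KL(\pi\|\widehat\pi)}.
\]
Maximizing $a\sqrt{2K}-\beta K$ over $K\ge0$ gives $a^2/(2\beta)$. So $\tfrac12\gap(\widehat\pi)\le\E_\x a_\x^2/(2\beta)$. Jensen gives $a_\x^2\le\E_{\y\sim\widehat\pi}[s(\y)^2]$, and therefore
\[
    \gap(\widehat\pi)\le\frac1\beta\,\E_{\x,\y\sim\widehat\pi}\Bigl[\sup_{\y'}\Delta(\y,\y')^2\Bigr],
\]
which is stronger than the claimed constant $4/\beta$.

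\emph{Expected obstacle.} The main difficulty is avoiding a naive second-order (log-MGF) expansion of $\beta\log\E_{\widehat\pi}e^{d/\beta}$, which would either introduce $e^{1/\beta}$ factors or measure $d$ off-policy. The antisymmetry swap in the sup-norm step is what circumvents this, so I would check carefully that it is applied with the correct sign and in the correct argument.
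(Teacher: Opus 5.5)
Your proof is correct, and it actually yields the stronger bound $\gap(\widehat\pi)\le\frac{1}{\beta}\E_{\x,\y\sim\widehat\pi}\bigl[\sup_{\y'}\Delta(\y,\y')^2\bigr]$. It shares the paper's ingredients: antisymmetry to put the supremum on the second argument while the first is drawn from $\widehat\pi$, then H\"older, Pinsker and Jensen. The difference is in how the quadratic step is organized.

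The paper introduces the true best response $\widetilde\pi$ to $\widehat\pi$. It uses the equilibrium property under $\widehat P$ only as the inequality $J_\beta(\widetilde\pi,\widehat\pi;\widehat P)\le\frac12$ and drops that term. It bounds what remains by $a\,\|\widetilde\pi-\widehat\pi\|_1$. It then gets the matching lower bound $\frac12\gap\ge\frac{\beta}{2}\|\widetilde\pi-\widehat\pi\|_1^2$ from $\beta$-strong concavity of $J_\beta(\cdot,\widehat\pi;P^\star)$ at its maximizer. Solving that self-bounding inequality is what costs the factor of $4$.

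You instead use the Gibbs form of $\widehat\pi$ to show that the dropped term equals exactly $-\beta\KL(\pi\|\widehat\pi)$. The half-gap is then the value of an explicit KL-penalized linear maximization, and maximizing $a\sqrt{2K}-\beta K$ finishes. This removes $\widetilde\pi$ and the strong-concavity step and improves the constant by a factor of $4$. It also handles the prompt average cleanly through exact per-prompt decoupling; the paper's argument is written for a fixed prompt and leaves the passage to the prompt-averaged gap implicit. In exchange, the paper's route needs only the equilibrium inequality rather than the exact best-response identity. That makes it somewhat more robust if the minimax oracle returns only an approximate equilibrium.

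Incidentally, the log-MGF route you wanted to avoid is harmless. Your maximization equals $\beta\log\E_{\widehat\pi}\exp\bigl((d-\E_{\widehat\pi}d)/\beta\bigr)$, and Hoeffding's lemma with $\|d\|_\infty\le a_{\x}$ gives the same $a_{\x}^2/(2\beta)$ with no $e^{1/\beta}$ factor. So the essential step really is the antisymmetry bound on $\|d\|_\infty$.
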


\begin{proof}
    Fix a prompt $\x \in \VCal^\star$. 
    Denote 
    \begin{equation*}
        \widetilde{\pi} := \arg\max_{\pi_{1} \in \Pi} J_{\beta}(\pi_{1}, \pi_{\widehat{P}}^\star; P^{\star}).
    \end{equation*}
    First, note that 
    \begin{align*}
        \frac{1}{2} \textnormal{gap}_{\beta}(\pi_{\widehat{P}}^\star)
        &= \max_{\pi_1 \in \Pi} J_{\beta}(\pi_{1}, \pi_{\widehat{P}}^\star; P^{\star}) - \frac{1}{2} \\
        &= J_{\beta}(\widetilde{\pi}, \pi_{\widehat{P}}^\star; P^{\star}) - J_{\beta}(\widetilde{\pi}, \pi_{\widehat{P}}^\star; \widehat{P}) + J_{\beta}(\widetilde{\pi}, \pi_{\widehat{P}}^\star; \widehat{P}) - \frac{1}{2} \\
        &\leq J_{\beta}(\widetilde{\pi}, \pi_{\widehat{P}}^\star; P^{\star}) - J_{\beta}(\widetilde{\pi}, \pi_{\widehat{P}}^\star; \widehat{P}) \\ 
        &= \mathbb{E}_{\y \sim \widetilde{\pi}, \y' \sim \pi_{\widehat{P}}^\star} \bigl[P^{\star}(\y \succ \y' \mid \x) - \widehat{P}(\y \succ \y' \mid \x) \bigr],
    \end{align*}
    where the inequality follows by the fact that $\pi_{\widehat{P}}^\star$ is a Nash policy with preference $\widehat{P}$.

    Since by the antisymmetry of $P^\star$ and $\widehat{P}$, we have for any $\pi$
    \begin{equation*}
        \mathbb{E}_{\mathbf{y}, \mathbf{y}' \sim \pi(\cdot|\mathbf{x})}\!\big[P^\star(\mathbf{y} \succ \mathbf{y}'|\mathbf{x}) - \widehat{P}(\mathbf{y} \succ \mathbf{y}'|\mathbf{x})\big] = 0, 
    \end{equation*}
    and therefore 
    \begin{align}
        \frac{1}{2} \textnormal{gap}_\beta(\pi^\star_{\widehat{P}}) 
        \leq& \mathbb{E}_{\mathbf{y} \sim \widetilde{\pi}(\cdot | \x),  \mathbf{y}' \sim \pi^\star_{\widehat{P}}(\cdot | \x)} \big[P^\star(\mathbf{y} \succ \mathbf{y}'|\mathbf{x}) - \widehat{P}(\mathbf{y} \succ \mathbf{y}'|\mathbf{x})\big] \nonumber \\
        &\hspace{70pt} - \mathbb{E}_{\mathbf{y}, \mathbf{y}' \sim \pi^\star_{\widehat{P}}(\cdot | \x)} \big[P^\star(\mathbf{y} \succ \mathbf{y}'|\mathbf{x}) - \widehat{P}(\mathbf{y} \succ \mathbf{y}'|\mathbf{x})\big] \nonumber \\ 
        =& \mathbb{E}_{\mathbf{y}' \sim \pi^\star_{\widehat{P}}(\cdot | \x)} \left[ \mathbb{E}_{\mathbf{y} \sim \widetilde{\pi}(\cdot | \x)} \big[P^\star(\mathbf{y} \succ \mathbf{y}'|\mathbf{x}) 
        - \widehat{P}(\mathbf{y} \succ \mathbf{y}'|\mathbf{x}) \big] \right] \nonumber \\ 
        &\hspace{70pt} - \mathbb{E}_{\mathbf{y}' \sim \pi^\star_{\widehat{P}}(\cdot | \x)} \left[\mathbb{E}_{\mathbf{y} \sim \pi^\star_{\widehat{P}}(\cdot | \x)} \big[P^\star(\mathbf{y} \succ \mathbf{y}'|\mathbf{x}) - \widehat{P}(\mathbf{y} \succ \mathbf{y}'|\mathbf{x})\big] \right] \nonumber \\
        =& \mathbb{E}_{\mathbf{y}' \sim \pi^\star_{\widehat{P}}(\cdot|\mathbf{x})}\!\Big[\big\langle P^\star(\cdot \succ \mathbf{y}'|\mathbf{x}) - \widehat{P}(\cdot \succ \mathbf{y}'|\mathbf{x}),\; \widetilde{\pi}(\cdot|\mathbf{x}) - \pi^\star_{\widehat{P}}(\cdot|\mathbf{x})\big\rangle\Big] \nonumber \\
        \leq& \mathbb{E}_{\mathbf{y}' \sim \pi^\star_{\widehat{P}}(\cdot|\mathbf{x})}\!\Big[\sup_{\mathbf{y} \in \mathcal{Y}} \big|P^\star(\mathbf{y} \succ \mathbf{y}'|\mathbf{x}) - \widehat{P}(\mathbf{y} \succ \mathbf{y}'|\mathbf{x})\big|\Big] \cdot \big\|\widetilde{\pi}(\cdot|\mathbf{x}) - \pi^\star_{\widehat{P}}(\cdot|\mathbf{x})\big\|_1, 
        \label{eq:Holder-bound}
    \end{align}
    where the last inequality is by H\"older's inequality.
    
    On the other hand, since $-\beta\,\mathrm{KL} \big(\pi_1(\cdot|\mathbf{x})\,\|\,\pi_{\mathrm{ref}}(\cdot|\mathbf{x})\big)$ is $\beta$-strongly concave w.r.t.~$\|\cdot\|_1$ by Pinsker's inequality, $J_\beta(\pi_1, \pi^\star_{\widehat{P}}; P^\star)$ is $\beta$-strongly concave with respect to its first parameter. 
    Also, we have $J_\beta(\pi^\star_{\widehat{P}}, \pi^\star_{\widehat{P}}; P^\star) = \tfrac{1}{2}$, therefore 
    \begin{equation}
        \frac{1}{2}\textnormal{gap}_\beta(\pi^\star_{\widehat{P}})
        = J_\beta(\widetilde{\pi}, \pi^\star_{\widehat{P}}; P^\star) - J_\beta(\pi^\star_{\widehat{P}}, \pi^\star_{\widehat{P}}; P^\star)
        \geq \frac{\beta}{2} \big\|\widetilde{\pi}(\cdot|\mathbf{x}) - \pi^\star_{\widehat{P}}(\cdot|\mathbf{x})\big\|_1^2.
        \label{eq:gap-quadratic-lower-bound}
    \end{equation}
    
    By rearranging~\cref{eq:gap-quadratic-lower-bound}, we have 
    \begin{equation}
        \big\|\widetilde{\pi}(\cdot|\mathbf{x}) - \pi^\star_{\widehat{P}}(\cdot|\mathbf{x})\big\|_1 \;\leq\; \sqrt{\textnormal{gap}_\beta(\pi^\star_{\widehat{P}})\big/\beta}.
        \label{eq:l1-norm-upper-bound}
    \end{equation}
    
    Substituting~\cref{eq:l1-norm-upper-bound} into~\cref{eq:Holder-bound} yields the quadratic inequality
    \[
        \frac{1}{2}\textnormal{gap}_\beta(\pi^\star_{\widehat{P}})
        \leq \mathbb{E}_{\mathbf{y}' \sim \pi^\star_{\widehat{P}}(\cdot|\mathbf{x})} \Big[\sup_{\mathbf{y} \in \A}\big|P^\star(\mathbf{y} \succ \mathbf{y}'|\mathbf{x}) - \widehat{P}(\mathbf{y} \succ \mathbf{y}'|\mathbf{x})\big|\Big] \cdot \sqrt{\textnormal{gap}_\beta(\pi^\star_{\widehat{P}})\big/\beta},
    \]
    which solves to
    \[
        \textnormal{gap}_\beta(\pi^\star_{\widehat{P}}) \leq \frac{4}{\beta}\left(\mathbb{E}_{\mathbf{y}' \sim \pi^\star_{\widehat{P}}(\cdot|\mathbf{x})} \left[\sup_{\mathbf{y} \in \A} \big|P^\star(\mathbf{y} \succ \mathbf{y}'|\mathbf{x}) - \widehat{P}(\mathbf{y} \succ \mathbf{y}'|\mathbf{x})\big|\right]\right)^2.
    \]
    By Jensen's inequality, we obtain
    \begin{equation}
        \textnormal{gap}_\beta(\pi^\star_{\widehat{P}}) \leq \frac{4}{\beta} \mathbb{E}_{\mathbf{y}' \sim \pi^\star_{\widehat{P}}(\cdot|\mathbf{x})} \left[ \sup_{\mathbf{y} \in \mathcal{Y}}\big(P^\star(\mathbf{y} \succ \mathbf{y}'|\mathbf{x}) - \widehat{P}(\mathbf{y} \succ \mathbf{y}'|\mathbf{x})\big)^2\right].
        \label{eq:second-order-1}
    \end{equation}
    Note that, by antisymmetric property of the preferences, we further have 
    \begin{equation*}
        \big(P^\star(\mathbf{y} \succ \mathbf{y}'|\mathbf{x}) - \widehat{P}(\mathbf{y} \succ \mathbf{y}'|\mathbf{x})\big)^2 = \big(P^\star(\mathbf{y}' \succ \mathbf{y}|\mathbf{x}) - \widehat{P}(\mathbf{y}' \succ \mathbf{y}|\mathbf{x})\big)^2, 
    \end{equation*}
    
    By swapping the names of $\y$ and $\y'$ in~\cref{eq:second-order-1} and taking expectation over $\mathbf{x} \sim \rho$, we complete the proof. 
\end{proof}

\begin{theorem}[Logarithmic regret bound of~\cref{alg:BENPO}]
    \label{thm:benpo-regret}
    Suppose $\mathcal{P}$ is a finite class of antisymmetric preference functions with $P^\star \in \mathcal{P}$, and that the squared difference class $\{(P - P')^2 : P, P' \in \mathcal{P}\}$ has eluder dimension at most $d_\mathcal{P}$ at scale $1/T$. 
    In~\cref{alg:BENPO}, set $\lambda = 1$ and 
    \begin{equation*}
        \gamma_t(\delta) = 4\log(2T|\mathcal{P}|/\delta) + 1.
    \end{equation*}
    Then, the iterates $\{\pi_t\}_{t=1}^T$ produced by \cref{alg:BENPO} satisfy
    \begin{equation*}
        \sum_{t=1}^T \textnormal{gap}_\beta(\pi_t) \leq \widetilde{O}\!\left(\frac{d_\mathcal{P}\,\log(T|\mathcal{P}|/\delta)\,\log T}{\beta}\right)
    \end{equation*}
    with probability at least $1 - \delta$.
\end{theorem}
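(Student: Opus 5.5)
The argument rests on \cref{lem:sq-l2}. It turns the per-round gap of the minimax-oracle iterate $\pi_t=\pi^\star_{\widehat P_t}$ into a squared estimation error:
\[
\textnormal{gap}_\beta(\pi_t)\le \frac4\beta\,\E_{\x\sim\rho,\y\sim\pi_t(\cdot\mid\x)}\Bigl[\sup_{\y'}\bigl(P^\star(\y\succ\y'\mid\x)-\widehat P_t(\y\succ\y'\mid\x)\bigr)^2\Bigr].
\]
Summing over $t$, it then suffices to bound the cumulative squared error by $\widetilde O(d_{\mathcal P}\log(T|\mathcal P|/\delta)\log T)$.

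\textbf{Step 1 (confidence set).} Apply a standard MLE concentration bound for finite classes, via a martingale argument on the log-likelihood ratio together with \cref{lem:master-azuma}-type tools. With probability at least $1-\delta$, for all $t\le T$,
\[
\sum_{s<t}\bigl(P^\star(\y_s\succ\hat\y_s\mid\x_s)-\widehat P_t(\y_s\succ\hat\y_s\mid\x_s)\bigr)^2\le 4\log(2T|\mathcal P|/\delta).
\]
Hence, with $\lambda=1$, the ratio $(\lambda+\sum_{s<t}(\cdot)^2)/\gamma_t(\delta)$ is at most $1$ when $P=P^\star$ and $P'=\widehat P_t$. Both lie in $\mathcal P$, so the bonus dominates the error: for every $\y'$,
\[
\bigl|P^\star(\y_t\succ\y'\mid\x_t)-\widehat P_t(\y_t\succ\y'\mid\x_t)\bigr|\le \mathfrak b_t(\y';\x_t,\y_t),
\]
using that the error is trivially at most $1$ for the $\min\{1,\cdot\}$ truncation. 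Since $\hat\y_t$ maximizes the bonus, this gives
\[
\sup_{\y'}(P^\star-\widehat P_t)^2(\y_t,\y')\le \mathfrak b_t(\hat\y_t;\x_t,\y_t)^2.
\]

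\textbf{Step 2 (from expectation to realized samples).} The quantity inside \cref{lem:sq-l2} is the conditional expectation, given the history, of
\[
X_t:=\sup_{\y'}(P^\star-\widehat P_t)^2(\x_t,\y_t,\y').
\]
Here $X_t\in[0,1]$ and $(\x_t,\y_t)\sim\rho\otimes\pi_t$. Because the summands are nonnegative and bounded by $1$, I would use a Freedman-type multiplicative martingale bound rather than Azuma, which would cost $\sqrt T$. It gives $\sum_t\E_{t-1}X_t\le 2\sum_t X_t+O(\log(1/\delta))$. Combining with Step 1, it remains to bound $\sum_t\mathfrak b_t(\hat\y_t;\x_t,\y_t)^2$.

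\textbf{Step 3 (elliptical/eluder potential).} This is the main obstacle. We have
\[
\mathfrak b_t^2\le\min\Bigl\{1,\ \gamma_t\sup_{P,P'}\frac{g_{P,P'}(z_t)}{1+\sum_{s<t}g_{P,P'}(z_s)}\Bigr\},
\]
where $z_s=(\x_s,\y_s,\hat\y_s)$ and $g_{P,P'}=(P-P')^2\in\mathcal G_{\mathcal P}$. This is exactly the sum controlled by the eluder-dimension potential lemma (cf.\ Russo--Van Roy, and the ``generalized eluder coefficient'' form used in \citet{Lee-2026-Regularized,zhao2025logarithmic}). For a class of nonnegative functions bounded by $1$ with eluder dimension $d_{\mathcal P}$ at scale $1/T$,
\[
\sum_{t=1}^T\min\Bigl\{1,\sup_{g}\frac{g(z_t)}{1+\sum_{s<t}g(z_s)}\Bigr\}\le O(d_{\mathcal P}\log T).
\]
The care needed is that the supremum is over pairs $(P,P')$, not a single function, and that the $1/T$ scale discretization contributes only an additive $O(1)$. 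I would first try to adapt the standard peeling argument: count the rounds where the ratio exceeds $2^{-k}$, show each such $z_t$ is $2^{-k}$-independent of many predecessors, and sum over $O(\log T)$ levels.

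Multiplying the $O(d_{\mathcal P}\log T)$ potential by $\gamma_T=O(\log(T|\mathcal P|/\delta))$ and then by $4/\beta$, and union bounding the two failure events, gives
\[
\sum_t\textnormal{gap}_\beta(\pi_t)\le\widetilde O\bigl(d_{\mathcal P}\log(T|\mathcal P|/\delta)\log T/\beta\bigr)
\]
with probability at least $1-\delta$.
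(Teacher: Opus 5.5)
Your proposal is correct and follows the paper's proof essentially step for step. It uses \cref{lem:sq-l2} to reduce the gap to a squared error, then the MLE confidence bound with $\lambda=1$, so that the pair $(P^\star,\widehat P_t)$ makes the bonus dominate the error and $\hat{\y}_t$ attains the supremum; it then applies Freedman's inequality and the eluder-dimension potential bound multiplied by $\gamma_T$. The only differences are cosmetic: the paper cites the potential bound from Russo--Van Roy rather than re-deriving it by peeling, and you apply Freedman to the realized squared error $X_t$ and then use $X_t\le\mathfrak b_t(\hat{\y}_t;\x_t,\y_t)^2$, whereas the paper first bounds $\E_t[X_t]\le\E_t[\mathfrak b_t(\hat{\y}_t;\x_t,\y_t)^2]$ and applies Freedman to the squared bonus; both orderings are valid.
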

\begin{proof}[Proof of~\cref{thm:benpo-regret}]
    We begin by reducing the per-round suboptimality to a \emph{squared} preference error via~\cref{lem:sq-l2}.
    Since $\pi_t = \pi^\star_{\widehat{P}_t}$ by construction, by~\cref{lem:sq-l2} we have 
    \begin{equation}
        \textnormal{gap}_\beta(\pi_t) \leq \frac{4}{\beta} \mathbb{E}_{\mathbf{x} \sim \rho,\, \mathbf{y} \sim \pi_t(\cdot | \mathbf{x})} \left[ \sup_{\mathbf{y}' \in \mathcal{Y}} \bigl(P^\star(\mathbf{y} \succ \mathbf{y}' | \mathbf{x}) - \widehat{P}_t(\mathbf{y} \succ \mathbf{y}' | \mathbf{x})\bigr)^2 \right].
        \label{eq:benpo-per-round-bound}
    \end{equation}

    Throughout this proof, we abbreviate $\Delta_t(\mathbf{y}, \mathbf{y}' | \mathbf{x}) := P^\star(\mathbf{y} \succ \mathbf{y}'|\mathbf{x}) - \widehat{P}_t(\mathbf{y} \succ \mathbf{y}'|\mathbf{x})$, and denote $\mathcal{F}_{t-1} := \sigma\bigl(\{\mathbf{x}_s, \mathbf{y}_s, \hat{\y}_s, I_s\}_{s < t}\bigr)$ and $\mathbb{E}_t[\,\cdot\,] := \mathbb{E}[\,\cdot \mid \mathcal{F}_{t-1}]$.

    By~\cref{lem:mle-conc}, there exists an event $\mathcal{E}_1$ with $\mathbb{P}(\mathcal{E}_1) \geq 1 - \delta/2$ on which
    \begin{equation*}
        \sum_{s=1}^{t-1} \left( P^\star(\mathbf{y}_s \succ \hat{\y}_s | \mathbf{x}_s) - \widehat{P}_t(\mathbf{y}_s \succ \hat{\y}_s | \mathbf{x}_s) \right)^2 \leq 4\log(2T|\mathcal{P}|/\delta) \qquad \forall\, t \in [T].
    \end{equation*}
    
    Setting $\gamma_t(\delta) := 4\log(2T|\mathcal{P}|/\delta) + \lambda$ and taking $P = P^\star$, $P' = \widehat{P}_t$ inside the supremum defining $\mathfrak{b}_t$, on $\mathcal{E}_1$,
    \begin{equation*}
        \frac{\Delta_t(\mathbf{y}, \mathbf{y}' | \mathbf{x})^2}{\gamma_t(\delta)} \;\leq\; \frac{\Delta_t(\mathbf{y}, \mathbf{y}' | \mathbf{x})^2}{\lambda + \sum_{s=1}^{t-1} \Delta_t(\mathbf{y}_s, \hat{\y}_s | \mathbf{x}_s)^2} \leq \frac{1}{\gamma_t(\delta)} \mathfrak{b}_t(\mathbf{y}'; \mathbf{x}, \mathbf{y})^2,
    \end{equation*}
    therefore $|\Delta_t(\mathbf{y}, \mathbf{y}' | \mathbf{x})| \leq \mathfrak{b}_t(\mathbf{y}'; \mathbf{x}, \mathbf{y})$ for all $(\mathbf{x}, \mathbf{y}, \mathbf{y}') \in \mathcal{X} \times \mathcal{Y}^2$. 
    Note that the clip $\min\{ 1, \cdot \}$ is harmless because $| \Delta_t(\y, \y' | \x) | \leq 1$ holds trivially.
    Squaring, taking $\sup_{\mathbf{y}' \in \mathcal{Y}}$, and noting that the exploiter $\hat{\y}_t = \arg\max_{\mathbf{y}' \in \mathcal{Y}} \mathfrak{b}_t(\mathbf{y}'; \mathbf{x}_t, \mathbf{y}_t)$ realizes the supremum, we have 
    \begin{equation*}
        \sup_{\mathbf{y}' \in \mathcal{Y}} \Delta_t(\mathbf{y}_t, \mathbf{y}' | \mathbf{x}_t)^2 \;\leq\; \mathfrak{b}_t(\hat{\y}_t; \mathbf{x}_t, \mathbf{y}_t)^2.
    \end{equation*}
    Substituting into~\cref{eq:benpo-per-round-bound} yields 
    \begin{equation}
        \textnormal{gap}_\beta(\pi_t) \leq \frac{4}{\beta} \mathbb{E}_t\!\bigl[\mathfrak{b}_t(\hat{\y}_t; \mathbf{x}_t, \mathbf{y}_t)^2\bigr].
        \label{eq:benpo-per-round-bonus}
    \end{equation}

    We next concentrate the squared bonus around its conditional expectation. Since $\mathfrak{b}_t \in [0, 1]$ by construction, the sequence
    \begin{equation*}
        X_t \;:=\; \mathbb{E}_t\!\bigl[\mathfrak{b}_t(\hat{\y}_t; \mathbf{x}_t, \mathbf{y}_t)^2\bigr] - \mathfrak{b}_t(\hat{\y}_t; \mathbf{x}_t, \mathbf{y}_t)^2
    \end{equation*}
    is a martingale difference sequence with $|X_t| \leq 1$ and $\mathbb{E}_t[X_t^2] \leq \mathbb{E}_t\bigl[\mathfrak{b}_t(\hat{\y}_t; \mathbf{x}_t, \mathbf{y}_t)^2\bigr]$. Applying~\cref{lem:freedman} with $Y_t = \mathfrak{b}_t(\hat{\y}_t; \mathbf{x}_t, \mathbf{y}_t)$ and confidence $\delta/2$ yields an event $\mathcal{E}_2$ with $\mathbb{P}(\mathcal{E}_2) \geq 1 - \delta/2$ on which
    \begin{equation}
        \sum_{t=1}^T \mathbb{E}_t\!\bigl[\mathfrak{b}_t(\hat{\y}_t; \mathbf{x}_t, \mathbf{y}_t)^2\bigr] \;\leq\; 2 \sum_{t=1}^T \mathfrak{b}_t(\hat{\y}_t; \mathbf{x}_t, \mathbf{y}_t)^2 + 4\log(2/\delta).
        \label{eq:benpo-freedman}
    \end{equation}

    To control the cumulative realized squared bonus, observe that by the definition of $\mathfrak{b}_t$ and $\gamma_t(\delta) = 4\log(2T|\mathcal{P}|/\delta) + 1 \geq 1$,
    \begin{equation*}
        \mathfrak{b}_t(\hat{\y}_t; \mathbf{x}_t, \mathbf{y}_t)^2 \;\leq\; \gamma_t(\delta) \cdot \min\!\Bigg\{1,\; \sup_{P, P' \in \mathcal{P}} \frac{(P - P')^2 (\mathbf{y}_t \succ \hat{\y}_t | \mathbf{x}_t)}{\lambda + \sum_{s=1}^{t-1} (P - P')^2 (\mathbf{y}_s \succ \hat{\y}_s | \mathbf{x}_s)} \Bigg\}.
    \end{equation*}
    Summing over $t \in [T]$, the right-hand side is the standard self-normalized squared width for the function class $\{(P - P')^2 : P, P' \in \mathcal{P}\}$ evaluated on the realized sequence $\{(\mathbf{x}_t, \mathbf{y}_t, \hat{\y}_t)\}_{t=1}^T$. Since $\gamma_t(\delta)$ is non-decreasing in $t$, the eluder-dimension lemma~\citep[Lemma~5]{Russo-2013-Eluder} gives
    \begin{equation}
        \sum_{t=1}^T \mathfrak{b}_t(\hat{\y}_t; \mathbf{x}_t, \mathbf{y}_t)^2 \;\leq\; C' \cdot \gamma_T(\delta) \cdot d_\mathcal{P} \cdot \log T
        \label{eq:benpo-eluder}
    \end{equation}
    for a constant $C' > 0$.

    Combining the pieces, on $\mathcal{E}_1 \cap \mathcal{E}_2$ (which has probability at least $1 - \delta$ by a union bound), summing~\cref{eq:benpo-per-round-bonus} over $t \in [T]$ and chaining~\cref{eq:benpo-freedman,eq:benpo-eluder},
    \begin{align*}
        \sum_{t=1}^T \textnormal{gap}_\beta(\pi_t)
        &\;\leq\; \frac{4}{\beta} \sum_{t=1}^T \mathbb{E}_t\!\bigl[\mathfrak{b}_t(\widehat{\mathbf{y}}_t; \mathbf{x}_t, \mathbf{y}_t)^2\bigr] \\
        &\;\leq\; \frac{8}{\beta} \sum_{t=1}^T \mathfrak{b}_t(\widehat{\mathbf{y}}_t; \mathbf{x}_t, \mathbf{y}_t)^2 + \frac{16 \log(2/\delta)}{\beta} \\
        &\;\leq\; \frac{8 C'}{\beta} \cdot \gamma_T(\delta) \cdot d_\mathcal{P} \cdot \log T + \frac{16 \log(2/\delta)}{\beta} \\
        &\;=\; \widetilde{O}\!\left(\frac{d_\mathcal{P} \cdot \log(|\mathcal{P}|/\delta) \cdot \log T}{\beta}\right).
    \end{align*}
\end{proof}

\subsubsection{Technical lemmas}

\begin{lemma}[MLE concentration]
    \label{lem:mle-conc}
    Let $\mathcal{P}$ be a finite class of preference functions $P: \VCal^\star \times \VCal^\star \times \VCal^\star \to [0, 1]$ 
    with $P^\star \in \mathcal{P}$. 
    Suppose that at each round $s \in [T]$ the preference label is generated as $I_s \sim \textnormal{Bern}\bigl(P^\star(\mathbf{y}_s \succ \widehat{\mathbf{y}}_s \mid \mathbf{x}_s)\bigr)$. Then, for any $\delta \in (0, 1)$, with probability at least $1 - \delta$,
    \begin{equation*}
        \sum_{s=1}^{t-1} \bigl(\widehat{P}_t(\mathbf{y}_s \succ \widehat{\mathbf{y}}_s \mid \mathbf{x}_s) - P^\star(\mathbf{y}_s \succ \widehat{\mathbf{y}}_s \mid \mathbf{x}_s)\bigr)^2 \;\leq\; 4 \log\bigl(T |\mathcal{P}| / \delta\bigr) \qquad \forall\, t \in [T].
    \end{equation*}
\end{lemma}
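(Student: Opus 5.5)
The plan is the standard likelihood-ratio martingale argument for a finite realizable class, followed by a Hellinger-to-squared-error conversion that is specific to Bernoulli labels. Write $p_s^P:=P(\y_s\succ\widehat\y_s\mid\x_s)$ for $P\in\mathcal P$, and let $\mathcal G_{s}$ be the $\sigma$-field generated by all prompts, responses and labels up to round $s-1$, together with $(\x_s,\y_s,\widehat\y_s)$. With this choice $p^P_s$ is $\mathcal G_s$-measurable and $I_s\mid\mathcal G_s\sim\mathrm{Bern}(p^{P^\star}_s)$. This holds because the responses at round $s$ are chosen from past data, or from independent sampling, before the label is drawn.

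\textbf{Step 1: supermartingale for a fixed model.} Fix $P\in\mathcal P$. Let $\ell_s(P):=I_s\log p^P_s+(1-I_s)\log(1-p^P_s)$ be the round-$s$ log-likelihood. Conditionally on $\mathcal G_s$, we have $\E[\exp(\tfrac12(\ell_s(P)-\ell_s(P^\star)))\mid\mathcal G_s]=\sqrt{p^P_sp^{P^\star}_s}+\sqrt{(1-p^P_s)(1-p^{P^\star}_s)}=1-H_s^2(P)$. Here $H_s^2(P)$ is the squared Hellinger distance (Bhattacharyya convention) between the two Bernoulli laws. Using $1-x\le e^{-x}$, the process
\[
M_t(P):=\exp\Big(\sum_{s<t}\big[\tfrac12(\ell_s(P)-\ell_s(P^\star))+H^2_s(P)\big]\Big)
\]
is a nonnegative supermartingale with $M_1=1$. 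Applying Ville's inequality, or Markov's inequality at each $t$ followed by a union bound over $t\le T$, gives the following. With probability at least $1-\delta/|\mathcal P|$, for all $t\in[T]$,
\[
\sum_{s<t}H^2_s(P)\le \tfrac12\sum_{s<t}\big(\ell_s(P^\star)-\ell_s(P)\big)+\log(T|\mathcal P|/\delta).
\]
A union bound over $P\in\mathcal P$ then makes this hold simultaneously for every $P$ and every $t$, with probability at least $1-\delta$.

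\textbf{Step 2: plug in the MLE.} On this event take $P=\widehat P_t$. The definition \cref{eq:MLE} gives $\sum_{s<t}\ell_s(\widehat P_t)\ge\sum_{s<t}\ell_s(P^\star)$, since $P^\star\in\mathcal P$, so the likelihood term is nonpositive. This is legitimate even though $\widehat P_t$ is data-dependent, because the bound already holds uniformly over $\mathcal P$. We obtain $\sum_{s<t}H_s^2(\widehat P_t)\le\log(T|\mathcal P|/\delta)$.

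\textbf{Step 3: Bernoulli conversion.} For Bernoulli laws the total variation distance equals $|p-q|$. The standard comparison $\TV\le\sqrt{H^2(2-H^2)}\le\sqrt2\,H$ then gives $(p-q)^2\le 2H^2$. Summing over $s<t$ yields
\[
\sum_{s<t}\big(\widehat P_t-P^\star\big)^2(\y_s\succ\widehat\y_s\mid\x_s)\le 2\log(T|\mathcal P|/\delta)\le4\log(T|\mathcal P|/\delta),
\]
which is the claimed bound with some slack in the constant.

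\textbf{Main obstacle.} The only delicate point is the measurability setup in Step 1, namely that $p^P_s$ must be predictable for $I_s$. In \cref{alg:BENPO} both $\y_s$ and $\widehat\y_s$ are determined before $I_s$ is queried, and $\widehat\y_s$ depends on $\mathcal P$ and on past data only. So refining the filtration to include them, as in the definition of $\mathcal G_s$ above, settles this. Also, if some $p^P_s\in\{0,1\}$, the log-likelihood can be $-\infty$; the Hellinger identity still holds with the convention $\exp(-\infty)=0$, so no truncation is needed.
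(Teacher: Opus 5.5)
Your proof is correct and follows the same argument as the paper: the half-log-likelihood-ratio supermartingale built from the Bernoulli Hellinger affinity, then Markov's inequality with a union bound over $(P,t)$, then the MLE inequality to remove the likelihood term, and finally a Hellinger-to-squared-gap comparison. Your bookkeeping is a bit cleaner in two places. First, your filtration includes $(\x_s,\y_s,\widehat\y_s)$ before the label is drawn, so $p_s^P$ is genuinely predictable. Second, your comparison $(p-q)^2\le 2H^2$, which is $(p-q)^2\le D_H^2$ in the paper's convention, yields the constant $2$ rather than the stated $4$.
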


\begin{proof}
    Fix $t \in [T]$ and a candidate $P \in \mathcal{P}$. Write $p_s(P) := P(\mathbf{y}_s \succ \widehat{\mathbf{y}}_s \mid \mathbf{x}_s)$ for short, and let $\mathcal{F}_{s} := \sigma\bigl(\{\mathbf{x}_r, \mathbf{y}_r, \widehat{\mathbf{y}}_r, I_r\}_{r \leq s}\bigr)$ be the natural filtration. Define the half-log-likelihood-ratio martingale
    \begin{equation*}
        M_t(P) := \prod_{s=1}^{t-1} \sqrt{\frac{p_s(P)^{I_s} (1 - p_s(P))^{1 - I_s}}{p_s(P^\star)^{I_s} (1 - p_s(P^\star))^{1 - I_s}}}.
    \end{equation*}
    A direct computation gives
    \begin{align*}
        \mathbb{E}\bigl[M_t(P) \,\big|\, \mathcal{F}_{t-2}\bigr] =& M_{t-1}(P) \cdot \Bigl(\sqrt{p_{t-1}(P) p_{t-1}(P^\star)} + \sqrt{(1 - p_{t-1}(P))(1 - p_{t-1}(P^\star))}\Bigr) \\ 
        =& M_{t-1}(P) \cdot \bigl(1 - \tfrac{1}{2} D_H^2(p_{t-1}(P), p_{t-1}(P^\star))\bigr),
    \end{align*}
    where $D_H^2(p, q) := (\sqrt{p} - \sqrt{q})^2 + (\sqrt{1-p} - \sqrt{1-q})^2$ is the squared Hellinger distance between Bernoulli$(p)$ and Bernoulli$(q)$. Using $1 - x \leq e^{-x}$,
    \begin{equation*}
        \mathbb{E}\bigl[M_t(P)\bigr] \;\leq\; \mathbb{E}\!\left[\exp\!\Big(-\tfrac{1}{2} \sum_{s=1}^{t-1} D_H^2(p_s(P), p_s(P^\star))\Big) \cdot M_t(P)\,\Big/\,M_t(P)\right] \cdot \mathbb{E}[M_0] \;\leq\; 1.
    \end{equation*}
    Equivalently, the process $M_t(P) \exp\bigl(\tfrac{1}{2}\sum_{s < t} D_H^2(p_s(P), p_s(P^\star))\bigr)$ is a non-negative supermartingale with mean at most $1$, so by Markov's inequality, for any $P \in \mathcal{P}$ and $\delta' \in (0, 1)$,
    \begin{equation}
        \Pr\!\left[\,\tfrac{1}{2}\sum_{s=1}^{t-1} D_H^2(p_s(P), p_s(P^\star)) \;>\; -\log M_t(P) + \log(1/\delta')\right] \;\leq\; \delta'.
        \label{eq:hellinger-tail}
    \end{equation}
    Setting $\delta' = \delta/(T|\mathcal{P}|)$ and taking a union bound over $P \in \mathcal{P}$ and $t \in [T]$, the event in~\cref{eq:hellinger-tail} fails for all $(P, t)$ with probability at least $1 - \delta$. On this event, plugging in $P = \widehat{P}_t$ and using $-\log M_t(\widehat{P}_t) \leq 0$ by the MLE optimality $\sum_{s < t} \log p_s(\widehat{P}_t)^{I_s}(1 - p_s(\widehat{P}_t))^{1-I_s} \geq \sum_{s < t} \log p_s(P^\star)^{I_s}(1 - p_s(P^\star))^{1-I_s}$,
    \begin{equation*}
        \sum_{s=1}^{t-1} D_H^2\bigl(p_s(\widehat{P}_t), p_s(P^\star)\bigr) \;\leq\; 2 \log\bigl(T|\mathcal{P}|/\delta\bigr) \qquad \forall\, t \in [T].
    \end{equation*}
    Finally, the standard Hellinger-vs.-TV comparison for Bernoulli distributions gives $D_H^2(p, q) \geq \tfrac{1}{2}(p - q)^2$ for all $p, q \in [0, 1]$, so
    \begin{equation*}
        \sum_{s=1}^{t-1} \bigl(\widehat{P}_t - P^\star\bigr)^2(\mathbf{y}_s \succ \widehat{\mathbf{y}}_s \mid \mathbf{x}_s) \;\leq\; 2 \sum_{s=1}^{t-1} D_H^2\bigl(p_s(\widehat{P}_t), p_s(P^\star)\bigr) \;\leq\; 4 \log\bigl(T|\mathcal{P}|/\delta\bigr).
    \end{equation*}
\end{proof}
\begin{lemma}[Freedman's inequality~\cite{freedman1975tail}]
    \label{lem:freedman}
    Let $\{X_t\}_{t=1}^T$ be a sequence of real-valued random variables adapted to a filtration $\{\mathcal{F}_t\}_{t=0}^T$ satisfying $\mathbb{E}[X_t \mid \mathcal{F}_{t-1}] = 0$ and $|X_t| \leq R$ almost surely. Let $V_T := \sum_{t=1}^T \mathbb{E}[X_t^2 \mid \mathcal{F}_{t-1}]$ denote the predictable quadratic variation. Then for any $\delta \in (0, 1)$ and any $\eta \in (0, 1/R)$, with probability at least $1 - \delta$,
    \begin{equation*}
        \sum_{t=1}^T X_t \;\leq\; \eta V_T + \frac{\log(1/\delta)}{\eta}.
    \end{equation*}
    In particular, applying this with $X_t = \mathbb{E}_t[Y_t] - Y_t$ for $Y_t \in [0, 1]$ adapted, and noting that $\mathbb{E}_t[X_t^2] \leq \mathbb{E}_t[Y_t]$, with probability at least $1 - \delta$,
    \begin{equation*}
        \sum_{t=1}^T \mathbb{E}_t[Y_t] \;\leq\; 2 \sum_{t=1}^T Y_t + 4 \log(1/\delta).
    \end{equation*}
\end{lemma}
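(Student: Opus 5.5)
The plan is the standard exponential-supermartingale argument, followed by a choice of $\eta$ for the ``in particular'' clause. Fix $\eta\in(0,1/R)$ and write $S_t:=\sum_{s\le t}X_s$ and $V_t:=\sum_{s\le t}\E[X_s^2\mid\mathcal F_{s-1}]$.

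\textbf{Step 1 (one-step bound).} I will use the elementary inequality $e^{x}\le 1+x+x^2$, valid for all $x\le 1$. Since $|\eta X_t|\le \eta R<1$, conditioning on $\mathcal F_{t-1}$ and using $\E[X_t\mid\mathcal F_{t-1}]=0$ gives
\[
\E[e^{\eta X_t}\mid\mathcal F_{t-1}]\le 1+\eta^2\E[X_t^2\mid\mathcal F_{t-1}]\le \exp\bigl(\eta^2\E[X_t^2\mid\mathcal F_{t-1}]\bigr).
\]

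\textbf{Step 2 (supermartingale and Markov).} Define $M_t:=\exp(\eta S_t-\eta^2V_t)$. Because $V_t$ is $\mathcal F_{t-1}$-measurable, Step 1 gives $\E[M_t\mid\mathcal F_{t-1}]\le M_{t-1}$. Hence $(M_t)$ is a nonnegative supermartingale with $M_0=1$, so $\E[M_T]\le 1$. Markov's inequality then yields $\Pr(M_T\ge 1/\delta)\le\delta$. On the complementary event we have $\eta S_T-\eta^2V_T<\log(1/\delta)$. Dividing by $\eta$ gives $\sum_t X_t\le \eta V_T+\log(1/\delta)/\eta$, which is the first claim.

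\textbf{Step 3 (the particular case).} Let $Y_t\in[0,1]$ be adapted and set $X_t=\E_t[Y_t]-Y_t$. Then $X_t$ is a martingale difference with $|X_t|\le 1$, so $R=1$. Its conditional second moment is a conditional variance, so $\E_t[X_t^2]\le\E_t[Y_t^2]\le \E_t[Y_t]$ because $Y_t\in[0,1]$.

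The only bookkeeping point is that $\eta<1/R$ should be strict, so $\eta=1/2$ is admissible. Applying Step 2 with $\eta=1/2$ gives
\[
\sum_t\E_t[Y_t]-\sum_tY_t\le \tfrac12\sum_t\E_t[Y_t]+2\log(1/\delta).
\]
Rearranging gives $\sum_t\E_t[Y_t]\le 2\sum_tY_t+4\log(1/\delta)$.

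I expect no real obstacle. The main care points are the validity range of the inequality $e^x\le1+x+x^2$, which is where $\eta R\le 1$ is used, and the predictability of $V_t$, which is needed for the supermartingale property.
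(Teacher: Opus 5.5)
Your argument is correct. The inequality $e^x\le 1+x+x^2$ holds for all $x\le 1$, so $\E[e^{\eta X_t}\mid\mathcal F_{t-1}]\le \exp(\eta^2\E[X_t^2\mid\mathcal F_{t-1}])$. Predictability of $V_t$ makes $M_t=\exp(\eta S_t-\eta^2V_t)$ a nonnegative supermartingale with $\E[M_T]\le 1$, and Markov's inequality gives the first display. For the special case, $\E_t[X_t^2]=\mathrm{Var}_t(Y_t)\le \E_t[Y_t^2]\le\E_t[Y_t]$ together with $\eta=1/2$ yields exactly the constants $2$ and $4$.

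The paper gives no proof of this lemma. It attributes the result to Freedman (1975) and simply asserts the specialization, so the only difference is that your route is self-contained. That is a real gain. Freedman's original theorem is a joint tail bound of the form $\Pr(S_T\ge a,\ V_T\le b)\le\exp\bigl(-a^2/(2(Ra+b))\bigr)$, not the fixed-$\eta$ linearized inequality stated here, so citing it leaves a translation step implicit. Your exponential-supermartingale derivation proves the stated form directly. It also shows where the coefficient $\eta$ on $V_T$ comes from; the sharper bound $e^x\le 1+x+(e-2)x^2$ for $x\le 1$ would give $(e-2)\eta$ instead. As a small aside, your argument only uses $\eta R\le 1$, so the strict condition $\eta<1/R$ is not actually needed.
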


\section{Additional details in numerical experiments}


\subsection{Additional results}

\subsubsection{Pair-wise winning rates}
\label{app:subsubsec:Pair-wise winning rates}

\paragraph{Self-play tournament.}
Because INPO, XPO, and ENPO are all derived from a Nash-equilibrium objective,
average pairwise win-rate is the natural correctness metric: at a true
equilibrium, every policy in the iterate set should win roughly 50\,\%
against every other policy.  Deviations from $0.5$ measure the residual
exploitability of each iterate and induce a strict preference ordering.
We therefore run a head-to-head tournament across the iter\,3 endpoints of
the four headline algorithms (base, XPO, INPO with the four step sizes,
and ENPO with the three step sizes whose checkpoints are available),
producing the $9\times 9$ matrix in Table~\ref{tab:pairrm9}.

\paragraph{Judge and inputs.}
For each unordered pair $(i, j)$ of policies and each of the
$N{=}805$ AlpacaEval\,2 prompts, we sample one response from each policy
under temperature $0.7$ / top-$p$ $0.9$ (the same generation hyperparameters
used during training), apply the chat template through the policy's own
tokenizer, and feed the (instruction, response\textsubscript{A},
response\textsubscript{B}) tuple to the
\texttt{llm-blender/PairRM}~\citep{jiang2023llmblender} pairwise judge.
PairRM is a 400\,M-parameter DeBERTa-v3-large classifier trained on
multiple human-preference datasets; critically, it is \emph{not} the Skywork
reward model used in the training loop, so the resulting comparison is
unbiased relative to the training objective (a comparison that re-used the
training RM would conflate ``how well did each algorithm optimize the
training reward'' with ``which algorithm sits closer to the Nash
equilibrium of the underlying preference distribution'').

\paragraph{Position-bias correction and aggregation.}
Pairwise judges are well known to exhibit position bias~\citep{zheng2023judging}.
We therefore evaluate every pair in both orderings $(A{=}i, B{=}j)$ and
$(A{=}j, B{=}i)$ and average the per-prompt outcomes, so each cell
$W_{ij} \in \{0, 0.5, 1\}^{N}$ before averaging and the position-bias
component cancels out in expectation.  With $K{=}9$ policies this gives
$\binom{K}{2} \cdot 2 \cdot N = 36 \cdot 2 \cdot 805 = 57{,}960$ PairRM
forward passes, completed in $\sim$25 minutes on a single H100 at zero
external API cost. 
The reproduction script
(\texttt{eval/judge\_evals/self\_play\_pairrm.py}) consumes cached
\texttt{ae2\_responses.json} files for every policy and writes a complete
$K \times K$ matrix; we report the matrix as $W_{ij} = \Pr_{p}[ i \succ j ]$
(probability that $i$ beats $j$ on a randomly chosen prompt $p$, after
position-bias averaging).

\paragraph{Discussion.}
The bottom row of Table~\ref{tab:pairrm9} reports each policy's average
win-rate against the other eight, which we use as the headline single-number
summary.  The full row data exposes \emph{which} matchups drive that average,
and three patterns are immediately visible.  First, the two ENPO points at the
most aggressive step sizes (\textsc{ENPO}$_{1/4}\!\cdot\!3$ and
\textsc{ENPO}$_{1/8}\!\cdot\!3$) sweep the top of the leaderboard with average
win-rates $0.602$ and $0.610$ respectively, beating every INPO and XPO
checkpoint on average.  Second, at the moderate step size $1/\eta = 3.75\!\cdot\!10^{-3}$
the three relevant policies (\textsc{INPO}$\cdot 3$, \textsc{INPO}$_{1/2}\!\cdot\!3$,
\textsc{ENPO}$_{1/2}\!\cdot\!3$) cluster within $\pm 0.001$ of each other
($0.559$, $0.560$, $0.560$) -- the adversary advantage emerges only when the
OMD step is aggressive enough for plain INPO to destabilize.  Third,
\textsc{INPO}$_{1/8}\!\cdot\!3$ collapses to an average of $0.336$, falling
below the untrained \texttt{base} model in pairwise comparisons against
several stronger policies and losing $0.493$ vs base on the head-to-head row;
\textsc{ENPO}$_{1/8}\!\cdot\!3$ at the same $\eta$ achieves $0.610$, a
$+0.274$ swing.  This is direct evidence that the adversary's role is to
stabilize the OMD step at larger stepsize, where INPO becomes unstable.  The
closest pair to a Nash tie is \textsc{ENPO}$_{1/4} \cdot 3$ vs
\textsc{ENPO}$_{1/8} \cdot 3$ at $0.504 / 0.496$, suggesting the two ENPO
endpoints sit in roughly the same equivalence class near the equilibrium
manifold.
Conversely, the largest exploitable cell is \textsc{ENPO}$_{1/4} \cdot 3$ vs \texttt{base} at $0.744$.

\begin{table}[t]
\centering
\footnotesize
\setlength{\tabcolsep}{4pt}
\caption{PairRM self-play win-rate matrix restricted to iter3 endpoints
(805 AE2 prompts, both orderings averaged; judge $=$
\texttt{llm-blender/PairRM}). 
Row beats column; values $\in [0, 1]$;
self-cells $-$ are $0.5$ by definition.  Subscripts $1/2, 1/4, 1/8$ denote
$(1/\eta)/2, (1/\eta)/4, (1/\eta)/8$ relative to the spec default
$1/\eta=7.5 \cdot 10^{-3}$. 
At a perfect Nash equilibrium every entry would
be $\sim 0.5$. 
The largest exploitable matchup ENPO$_{1/4} \cdot 3$ vs base $=0.744$ is in \textbf{bold}; the closest
near-equilibrium tie is ENPO$_{1/4} \cdot 3$ vs ENPO$_{1/8} \cdot 3$ at
$0.504 / 0.496$. 
Bottom row reports each column's average win-rate against the other eight policies.}
\label{tab:pairrm9}
\begin{tabular}{l rrrrrrrrr}
\toprule
& base & XPO$\cdot$3 & INPO$\cdot$3 & INPO$_{1/2}\!\cdot\!$3 & INPO$_{1/4}\!\cdot\!$3 & INPO$_{1/8}\!\cdot\!$3 & ENPO$_{1/2}\!\cdot\!$3 & ENPO$_{1/4}\!\cdot\!$3 & ENPO$_{1/8}\!\cdot\!$3 \\
\midrule
base                            &  $-$           & 0.360 & 0.284 & 0.293 & 0.345 & 0.493 & 0.287 & 0.257 & 0.260 \\
XPO$\cdot$3                     & 0.640          &  $-$  & 0.427 & 0.414 & 0.484 & 0.615 & 0.438 & 0.376 & 0.367 \\
INPO$\cdot$3                    & 0.716          & 0.573 &  $-$  & 0.509 & 0.585 & 0.708 & 0.488 & 0.448 & 0.447 \\
INPO$_{1/2}\!\cdot\!$3          & 0.707          & 0.586 & 0.491 &  $-$  & 0.589 & 0.708 & 0.486 & 0.466 & 0.447 \\
INPO$_{1/4}\!\cdot\!$3          & 0.655          & 0.515 & 0.415 & 0.411 &  $-$  & 0.648 & 0.430 & 0.399 & 0.379 \\
INPO$_{1/8}\!\cdot\!$3          & 0.507          & 0.385 & 0.292 & 0.292 & 0.352 &  $-$  & 0.306 & 0.287 & 0.265 \\
ENPO$_{1/2}\!\cdot\!$3          & 0.713          & 0.562 & 0.512 & 0.514 & 0.570 & 0.694 &  $-$  & 0.457 & 0.454 \\
ENPO$_{1/4}\!\cdot\!$3          & \textbf{0.744} & 0.624 & 0.552 & 0.534 & 0.601 & 0.713 & 0.543 &  $-$  & 0.504 \\
ENPO$_{1/8}\!\cdot\!$3          & 0.740          & 0.633 & 0.553 & 0.553 & 0.621 & 0.735 & 0.546 & 0.496 &  $-$  \\
\midrule
avg vs others                   & 0.322          & 0.470 & 0.559 & 0.560 & 0.482 & 0.336 & 0.560 & 0.602 & \textbf{0.610} \\
\bottomrule
\end{tabular}
\end{table}

\end{document}